\definecolor{gen}{RGB}{0,0,200}
\definecolor{cc}{RGB}{231,117,0}
\newcommand{\ind}{\mathds{1}}
\newcommand{\defn}{\coloneqq}
\newcommand{\defnrev}{\eqqcolon}
\newcommand{\var}{\mathsf{Var}}
\newcommand{\cB}{\mathcal{B}}
\newcommand{\cE}{\mathcal{E}}
\newcommand{\cF}{\mathcal{F}}
\newcommand{\cH}{\mathcal{H}}
\newcommand{\cN}{\mathcal{N}}
\newcommand{\bE}{\mathbb{E}}
\newcommand{\bP}{\mathbb{P}}
\newcommand{\bR}{\mathbb{R}}
\newcommand{\bZ}{\mathbb{Z}}
\newcommand{\KL}{\mathsf{KL}}
\newcommand{\diff}{\,\mathrm{d}}
\newcommand{\numpf}[2]{\overset{(\mathrm{#1})}{#2}}
\newcommand{\ol}{\overline}
\newcommand{\wh}{\widehat}
\newcommand{\wt}{\widetilde}
\newcommand{\sde}{\mathsf{sde}}
\newcommand{\ode}{\mathsf{ode}}
\newcommand{\veps}{\varepsilon}
\newcommand{\vphi}{\varphi}
\newcommand{\TV}{\mathsf{TV}}
\newcommand{\setc}{\mathrm{c}}
\newcommand{\score}{\mathsf{sc}}
\newcommand{\disteq}{\overset{\mathrm d}=}
\newcommand{\jcb}{\mathsf{jcb}}
\newcommand{\gd}{\nabla}
\newcommand{\typ}{\mathsf{typ}}
\newcommand{\mymid}{\mid}
\newcommand{\psd}{\succcurlyeq}
\title{Minimax Optimality of the Probability Flow ODE \\ for Diffusion Models\footnotetext{Corresponding author: Gen Li.}}
\author{Changxiao Cai\thanks{Department of Industrial and Operations Engineering, University of Michigan, Ann Arbor, USA; Email: \href{mailto:cxcai@umich.edu}{cxcai@umich.edu}.}
\and 
Gen Li\thanks{Department of Statistics, The Chinese University of Hong Kong, Hong Kong; Email: \href{genli@cuhk.edu.hk}{genli@cuhk.edu.hk}.}}
\begin{document}

\theoremstyle{plain} 
\newtheorem{lemma}{\bf Lemma} 
\newtheorem{proposition}{\bf Proposition}
\newtheorem{theorem}{\bf Theorem}
\newtheorem{corollary}{\bf Corollary} 
\newtheorem{claim}{\bf Claim}

\theoremstyle{remark}
\newtheorem{assumption}{\bf Assumption} 
\newtheorem{definition}{\bf Definition} 
\newtheorem{condition}{\bf Condition}
\newtheorem{property}{\bf Property} 
\newtheorem{example}{\bf Example}
\newtheorem{fact}{\bf Fact}
\newtheorem{remark}{\bf Remark}

\maketitle

\begin{abstract}
Score-based diffusion models have become a foundational paradigm for modern generative modeling, demonstrating exceptional capability in generating samples from complex high-dimensional distributions. Despite the dominant adoption of probability flow ODE-based samplers in practice due to their superior sampling efficiency and precision, rigorous statistical guarantees for these methods have remained elusive in the literature. This work develops the first end-to-end theoretical framework for deterministic ODE-based samplers that establishes near-minimax optimal guarantees under mild assumptions on target data distributions. Specifically, focusing on subgaussian distributions with $\beta$-H\"older smooth densities for $\beta\leq 2$, we propose a smooth regularized score estimator that simultaneously controls both the $L^2$ score error and the associated mean Jacobian error. Leveraging this estimator within a refined convergence analysis of the ODE-based sampling process, we demonstrate that the resulting sampler achieves the minimax rate in total variation distance, modulo logarithmic factors. Notably, our theory comprehensively accounts for all sources of error in the sampling process and does not require strong structural conditions such as density lower bounds or Lipschitz/smooth scores on target distributions, thereby covering a broad range of practical data distributions.
\end{abstract}

\medskip

\noindent\textbf{Keywords:}  diffusion model, probability flow ODE, sampling, minimax optimality

\tableofcontents{}


\section{Introduction}\label{sec:introduction}

Score-based generative models (SGMs), more commonly known as diffusion models \citep{sohl2015deep,song2019generative}, have rapidly emerged as a cornerstone of contemporary generative modeling---a task seeking to create new samples that mirror the training data in distribution.
Their exceptional performance spans diverse domains, from image synthesis \citep{dhariwal2021diffusion,rombach2022high} and natural language processing \citep{austin2021structured,li2022diffusion} to medical imaging \citep{song2021solving,chung2022score} and protein modeling \citep{trippe2022diffusion,gruver2024protein}.
We refer readers to \citet{yang2023diffusion} for a comprehensive survey of recent methodological advances and practical applications in diffusion models, and to \citet{chen2024overview,tang2024score} for an overview of theoretical advances.

At the core of diffusion models are two complementary processes: a {forward} process that progressively corrupts a sample from the target data distribution with Gaussian noise, and a {reverse} process that aims to reverse the forward process and systematically transforms pure noise back into samples matching the target distribution.
%
The realization of such a reverse process relies critically on accurate estimation of the (Stein) \emph{score function}---the gradient of the logarithm of the density---of the distribution of the forward process. This estimation step, referred to as \emph{score matching} \citep{hyvarinen2005estimation,vincent2011connection,song2019generative}, has proven fundamental to the successes of diffusion models.
Historically, drawing on classical results regarding the existence of reverse-time stochastic differential equations (SDEs) \citep{anderson1982reverse,haussmann1986time}, the reverse processes have often been implemented through the SDE framework, as exemplified by the Denoising Diffusion Probabilistic Model (DDPM) \citep{ho2020denoising}. Yet an alternative formulation based on ordinary differential equations (ODE)---often referred to as the  \emph{probability flow ODEs} \citep{song2020score}---has gained growing prominence for enabling a deterministic sampling process while preserving the same marginal distributions as the reverse SDE. In particular, these ODE-based samplers, including those closely related to the Denoising Diffusion Implicit Models (DDIMs) \citep{song2020denoising}, offer two key compelling advantages:
\begin{itemize}
	\item \emph{Fast sampling speed.} Empirically, ODE-based samplers often require substantially fewer iteration steps than their SDE counterparts to produce high-fidelity samples. By discretizing the probability flow ODE through numerical methods such as forward Euler or exponential integrators \citep{lu2022dpm,zhang2022fast,lu2022dpm+,zhao2024unipc}, they can deliver comparable or superior quality in as few as 10 steps, whereas SDE-based samplers typically need hundreds or even thousands of iterations.
	\item \emph{Deterministic dynamic.} Since each update is a deterministic function of the previous iterate, ODE-based samplers allow for precise control and reproducibility over the sampling path. This critical feature is desirable for image generation and editing tasks that require fine-grained control of the structure and content \citep{meng2021sdedit,cao2023masactrl,huang2024diffusion}.
\end{itemize}
These benefits have made ODE-based samplers the dominant choice in practical generative modeling, outperforming SDE-based approaches in both speed and flexibility \citep{ramesh2022hierarchical,rombach2022high}.

\paragraph*{Existing theoretical understanding and limitations.}
Motivated by the striking empirical performance of diffusion models, a flurry of recent theoretical works have sought to establish rigorous guarantees for their performance. 
Existing theory, however, typically separates the two essential phases of diffusion modeling: (1) score estimation given training data sampled from the target distribution, and (2) sample generation based on these estimates. 
The majority of prior works focuses on the sampling phase, with both SDE-based samplers \citep{,lee2022convergence,chen2022sampling,chen2023improved,benton2023linear,li2024d} and ODE-based samplers \citep{chen2023probability,li2023towards} known to converge polynomially fast toward distributions whose total variation (TV) distance to the target scales proportionally to the score estimation error. 
These results suggest that with accurate score estimates available, diffusion models can, in principle, learn any distribution without relying on restrictive conditions like log-concavity or functional inequalities---requirements that often limit the scope of classical sampling algorithms like Langevin dynamics.

From a statistical viewpoint, diffusion models provide an algorithmic framework for sampling from high-dimensional distributions. Given data collected from a target distribution, they first estimate the score functions along a forward process with carefully chosen learning rates. Subsequently, they use these score estimates to iteratively generate new samples from the target distribution.
However, existing convergence results fail to provide ``end-to-end'' sampling guarantees. They largely assume the availability of accurate score estimations but do not fully investigate whether such estimates can be obtained in practice, let alone characterize the score estimation error. 

This leads to a fundamental question: Given training data drawn from a target distribution, can score-based diffusion models achieve the information-theoretic limit of sampling?

\subsection{End-to-end sampling guarantees}
Several recent works have begun to bridge this gap for the SDE-based samplers (e.g., DDPM), deriving \emph{end-to-end} sampling guarantees under various distributional settings.
Examples include Gaussian mixture models (GMMs) \citep{shah2023learning,chen2024learning,gatmiry2024learning}, subgaussian distributions \citep{wibisono2024optimal,cole2024score,li2024good}, and low-dimensional structures \citep{chen2023score,tang2024adaptivity}, where researchers have developed specialized score estimators and leveraged existing convergence results to derive sample complexity guarantees.

Notably, some studies have established that DDPM can achieve (near)-minimax optimal performance for target distributions with smooth densities.
For instance, the seminal work \citet{oko2023diffusion} showed that DDPM with score estimates learned using empirical risk minimization via neural networks attains minimax-optimal rates in both TV and 1-Wasserstein distances (up to some logarithmic factors), assuming the target density is Besov-smooth, bounded below by a constant, and supported over a compact set.
Under a similar setting (except for H\"older smooth densities),
\citet{dou2024optimal} established the sharp minimax rate of score estimation, achieved by a combination of kernel smoothing and kernel-based regularized score estimators. This further leads to the sharp minimax optimal performance of DDPM, though the approach therein requires knowledge of the density lower bound.
\cite{wibisono2024optimal} removed the bounded density assumption, focusing instead on subgaussian distributions with Lipschitz/$\beta$-H\"older smooth score functions with $\beta \leq 1$. Drawing insights from empirical Bayes methodology, the authors introduced a kernel-based score estimator that achieves the minimax optimal score estimation. By integrating this estimator with the convergence guarantees established in \citet{chen2022sampling}, they derived a sample complexity bound for the entire sampling pipeline.
Further relaxing the Lipschitz score requirement, \citet{zhang2024minimax} demonstrated that DDPM equipped with a kernel-based truncated score estimator obtains the (near-)minimax rate in TV distance for subgaussian distributions with $\beta$-Soblev-smooth densities with $\beta \leq 2$.
\citet{holk2024statistical} investigated stochastic samplers in reflected diffusion models for constrained generative modeling, and established the (near-)minimax optimality in TV distance for Soblev-smooth densities when using a neural network-based score estimator.

In contrast to these remarkable advances for SDE-based samplers, end-to-end statistical guarantees of ODE-based samplers remain lacking in the literature, despite their popularity in practice.
A straightforward attempt might be to simply reuse the existing DDPM-oriented score estimators for ODE-based samplers. Unfortunately, significant obstacles arise in both algorithmic feasibility and theoretical analysis, leaving open the question of whether ODE-based methods can achieve the same level of optimality enjoyed by DDPM.
\begin{itemize}
	\item \emph{Nonsmooth score estimators.}
Existing score estimators typically employ a ``hard thresholding'' procedure, resulting in discontinuous or nonsmooth score estimators \citep{wibisono2024optimal,zhang2024minimax,dou2024optimal}. While these $L^2$-accurate score estimators can yield rate-optimal sampling when coupled with stochastic samplers, their optimality may not extend to ODE-based samplers. 
 Indeed, as observed in \citet{li2024sharp}, an $L^2$ estimation guarantee alone can be insufficient for deterministic samplers to obtain non-trivial TV distance bounds  due to their deterministic update rule (in fact, the TV distance can remain as high as $1$). 
This limitation does not affect stochastic samplers but presents a significant challenge for ODE-based methods.
To overcome this issue for ODE-based samplers, a key requirement for the score estimator is that the estimation error in terms of the corresponding Jacobian matrix be controlled as well.
 This imposes sort of smoothness requirements that the existing score estimators in the literature may violate, potentially undermining the minimax optimality for ODE-based samplers. 
	\item \emph{Convergence analysis hurdles.} On the theoretical side, existing analysis for statistical guarantees of stochastic samplers relies on the Girsanov theorem, which controls the Kullback-Leibler (KL) divergence between the forward process and the backward process with estimated score functions \citep{oko2023diffusion,zhang2024minimax,dou2024optimal,holk2024statistical}. 
However, this approach fails for deterministic ODE-based samplers where the KL divergence can be infinite---rendering Girsanov-based techniques inapplicable and necessitating novel analytical frameworks.
Moreover, the minimax optimal guarantees established therein rest on the idealized assumption that the backward SDEs with estimated scores can be solved accurately. This analysis overlooks the discretization errors introduced by practical sampling implementations, without rigorously justifying whether or not such errors would compromise the minimax rate.
	A comprehensive theoretical framework of score-based samplers must incorporate these discretization effects to accurately capture the end-to-end sampling performance.
\end{itemize}

Therefore, this naturally raises the following question:
\begin{center}
	\emph{Can the probability flow ODE-based sampler achieve the fundamental statistical limit of sampling?}
\end{center}

\subsection{This paper}
In this work, we provide an affirmative answer by developing the first end-to-end theoretical analysis that rigorously establishes minimax-optimal performance guarantees for ODE-based samplers.

Focusing on $d$-dimensional subgaussian target distributions with $\beta$-H\"older smooth densities for $\beta\leq 2$ (see Section~\ref{sub:assumptions} for rigorous definitions), we propose a smooth kernel-based regularized score estimator that simultaneously controls the $L^2$ score error and the associated Jacobian error.
Building upon a refined convergence guarantee that characterizes the discretization error of the ODE-based sampling scheme, we show that the TV distance between the sample distribution and the target distribution---using our proposed smooth score estimator---achieves the rate
\begin{align*}
	n^{-\frac{\beta}{d+2\beta}} \quad \text{(up to log factors)}
\end{align*}
which matches the minimax rate for density estimation (up to some logarithmic factor) \citep{stone1980optimal,stone1982optimal, tsybakov2009introduction}. 
Since the lower bound for density estimation forms a valid lower bound for sampling, our result demonstrates that the probability flow ODE-based samplers can attain (near)-minimax optimal performance. 

Notably, this result holds without restrictive assumptions---such as log-concavity, constant density lower bounds, or Lipschitz scores---on the target distributions. 
Moreover, our convergence analysis directly controls the TV distance, a significant departure from previous DDPM analyses that rely on the Girsanov theorem.
Our work thus validates both the practical effectiveness and fundamental statistical optimality of ODE-based samplers, providing the first rigorous, end-to-end minimax optimal guarantee that simultaneously quantifies both discretization and score errors for diffusion models.


\subsection{Other related work}

\paragraph*{Score estimation.}

One of the earliest works on theoretical guarantees for score estimation in diffusion models is by \citet{block2020generative}, which provided an error bound in terms of the Rademacher complexity of the score approximation function class.
Subsequent research has focused on neural network-based approaches under diverse distributional assumptions.
For instance, \citet{chen2023score} provided a sample complexity bound by assuming the support of the target distribution lies in a low-dimensional linear space, and \citet{de2022convergence} examined the generalization error in terms of the Wasserstein distance when the target distribution is supported on a low-dimensional manifold.
Assuming the target is smooth with respect to the volume of the low-dimensional manifold, \citet{tang2024adaptivity} established the minimax optimality under the Wassertein distance.
\citet{cole2024score} explored score estimation for subgaussian target distributions whose log-relative density (with respect to the standard Gaussian measure) can be locally approximated by a neural network.
Focusing on a mixture of two Gaussians, \citet{cui2023analysis} established the score estimation error using a two-layer neural network, and \citet{song2021solving} derived sample complexity guarantees for scores learned via deep neural networks in the context of graphical models.
Beyond using neural networks for score estimation, \citet{scarvelis2023closed} studied smoothing the closed-form score of the empirical distribution of the training data, and \citet{li2024good} investigated the empirical kernel density estimator and analyzed the sample complexity under Gaussian or bounded-support settings. 
Recent works \citep{gatmiry2024learning,chen2024learning} focused on GMMs and introduced a piecewise polynomial regression approach for score estimation, providing end-to-end guarantees for learning GMMs.

\paragraph*{Convergence theory.}
Assuming access to reliable score estimates, early convergence results for SDE-based samplers (with respect to TV distance) 
either replied on $L^\infty$-accurate estimates \citep{de2021diffusion,albergo2023stochastic}, or exhibited exponential dependence \citep{de2022convergence,block2020generative}.
A breakthrough came from \citet{lee2022convergence}, which provided the first polynomial iteration complexity assuming only $L^2$-accurate score estimates and a log-Sobelev inequality on the target distribution. Subsequent works relaxed this functional inequality condition by requiring Lipschitz scores along the forward process \citep{chen2022sampling,lee2023convergence} or bounded support/moment conditions on the target distribution \citep{chen2023improved}.
Notably, under the minimal assumption---only $L^2$-accurate score estimates and a finite moment of the target distribution--- \citet{benton2023linear} and \citet{li2024d} established state-of-the-art iteration complexities of $\wt{O}(d/\veps^2)$ in KL divergence and $\wt{O}(d/\veps)$ in TV distance, respectively.
For ODE-based samplers, \citet{chen2023restoration} provided the first convergence guarantee, albeit without explicit polynomial dependencies and requiring exact score estimates. \citet{chen2023probability} incorporated stochastic corrector steps into the ODE-based sampler and attained improved convergence bounds. Assuming access to accurate Jacobian estimates of the scores, \citet{li2024sharp} established an iteration complexity of $\wt O(d/\veps)$ in TV distance, 
In pursuit of improved computational efficiency, various acceleration strategies have been proposed based on higher-order discretization of the reverse process \citep{li2024provable,li2024accelerating,li2024improved,wu2024stochastic,huang2024reverse,huang2024convergence,taheri2025regularization} or by leveraging low-dimensional data structures \citep{li2024adapting,tang2024adaptivity,huang2024denoising,potaptchik2024linear,liang2025low}.

\subsection{Notation and organization}
\label{subsec:Notations}


\paragraph*{Notation.} 
For $a,b\in\bR$, we denote $a \vee b \defn \max\{a,b\}$ and $a\wedge b \defn \min\{a,b\}$.
For positive integer $N>0$, let $[N]\defn\{1,\cdots,N\}$. 
For matrix $A$, we use $\|A\|$ to denote the spectral norm.
For vector-valued function $f$, we denote by $J_f=\frac{\partial f}{\partial x}$ its Jacobian matrix.
For random vector $X$, we use $p_X$ to refer to its distribution and probability density function interchangeably for simplicity of notation.
For random vectors $X, Y$,
let $\mathsf{KL}(X\,\|\,Y) \equiv \mathsf{KL}(p_X\,\|\,p_Y) =\int p_X(x)\log\frac{p_X(x)}{p_Y(x)} \diff x$ and $\TV(X,Y)\equiv\TV(p_X,p_Y)=\frac12\int |p_X(x) - p_Y(x)|\diff x$ denote the KL divergence and the TV distance, respectively.
For probability distributions $p,q$, we denote their convolution by $p\ast q$.
In addition, let $\mathds{1}\{ \cdot \}$ denote the indicator function. 

For two functions $f(n),g(n) > 0$, we use $f(n)\lesssim g(n)$ or $f(n)=O\big(g(n)\big)$ to mean $f(n)\leq Cg(n)$ for some absolute constant $C>0$. 
Similarly, we write $f(n)\gtrsim g(n)$ or $f(n)=\Omega\big(g(n)\big)$ when $f(n)\geq C'g(n)$ for some absolute constant $C'>0$.
We denote $f(n)\asymp g(n)$ or $f(n)=\Theta\big(g(n)\big)$ when $Cf(n)\leq g(n)\leq C'f(n)$ for some absolute constants $C' > C > 0$. The notations $\wt{O}(\cdot)$, $\wt{\Omega}(\cdot)$, and $\wt{\Theta}(\cdot)$ denote the respective bounds up to some logarithmic factors. In addition, we use $f(n) = o(g(n))$ to denote $\limsup_{n \to \infty} f(n)/g(n) = 0$. And we use $\mathsf{poly}(n)$ to denote a polynomial function of $n$ where the specific degree may vary across different contexts.

\paragraph*{Organization.} The paper is structured as follows. Section~\ref{sec:problem_formulation} provides background on SGMs and describes our problem formulation. In Section~\ref{sec:results}, we present our ODE-based sampler and its theoretical guarantees. The analysis is provided in Section~\ref{sec:analysis}, with comprehensive proofs and technical lemmas postponed to the appendix. Finally, Section~\ref{sec:discussion} concludes the paper with a discussion of future research directions.
\section{Problem formulation}\label{sec:problem_formulation}

\subsection{Preliminaries}\label{sub:preliminaries}
In this section, we provide a brief introduction to score-based diffusion models.

\paragraph*{Forward process.}
The forward process begins with a sample $X_0\sim p_0^\star$ distributed according to some initial distribution $p_0^\star$ (typically chosen to match or closely approximate the target distribution $p^\star$) and progressively adds Gaussian noise via the following Markov chain:
\begin{align}\label{eq:forward}
	X_k = \sqrt{\alpha_k}\,X_{k-1} + \sqrt{1-\alpha_k}\,W_k,\quad k=1,2,\dots,K.
\end{align}
Here, $\alpha_1,\dots,\alpha_K\in(0,1)$ represent the learning rates and $W_1,\dots,W_K\overset{\mathrm{i.i.d.}}{\sim}\cN(0,I_d)$ are $d$-dimensional standard Gaussian random vectors independent of $(X_k)$. Throughout the paper, we use $p^\star$ to denote the target distribution. 

It is important to emphasize that the forward process serves exclusively as a theoretical framework for analysis and is not implemented in practice. In this work, we focus on a specific choice of the initial distribution:
\begin{align}\label{eq:tau}
	p_0^\star=p^\star \ast \cN(0,\tau I_d)
\end{align}
for some parameter $\tau\geq 0$, that is, the convolution of the target distribution $p^\star$ with a Gaussian distribution $\cN(0,\tau I_d)$. This encompasses the case where we begin directly with the target distribution ($\tau=0$) and the cases where we initialize the process with its smoothed approximation ($\tau>0$).

Defining the cumulative noise parameters
\begin{align}\label{eq:ol-alpha-defn}
	\ol\alpha_0 \defn 1 \qquad \text{and} \qquad \ol\alpha_k \defn \prod_{i=1}^k\alpha_i,\quad k=1,2,\dots,K,
\end{align}
one can verify that each $X_k$ can be written as
\begin{align}\label{eq:DDIM-dist}
	X_k = \sqrt{\ol\alpha_k}\,X_0 + \sqrt{1-\ol\alpha_k}\, W_k' = \sqrt{\ol\alpha_k}\big(X_0+\sqrt{(1-\ol\alpha_k)/\ol\alpha_k}\, W_k'\big),
\end{align}
for some $d$-dimensional standard Gaussian random vector $W_k'\sim\cN(0,I_d)$ independent of $X_0$. In particular, the parameter $\ol\alpha_k$ and initial distribution $p_0^\star$ fully determines the distribution of $X_k$.
Notice that if the covariance of $X_0$ is equal to $I_d$, then the covariance of $X_k$ remains $I_d$ for all $k$. Hence, the forward process~\eqref{eq:forward} is often referred to as \emph{variance-preserving} \citep{song2020score}. In particular, when $\ol\alpha_K$ is sufficiently small, the distribution of $X_K$ is approximately normal.

The discrete forward process \eqref{eq:forward} can be interpreted as a time-dicretization of a continuous process $(X_t^\sde)_{t\in[0,T]}$, which is the solution to the following SDE:
\begin{align}
	\diff X_t^\sde = -\frac12 \beta_t X_t^\sde \diff t + \sqrt{\beta_t} \diff B_t, \quad X_0^\sde\sim p_0^\star; \qquad 0\leq t \leq T \label{eq:forward-SDE}
\end{align}
where $\beta_t:[0,T]\to \bR$ is a drift coefficient function and $(B_t)_{t\in[0,T]}$ is a standard Brownian motion in $\bR^d$.

\paragraph*{ODE-based sampling process.}
Reversing the forward process \eqref{eq:forward} in time ideally transforms pure Gaussian noise into a sample from the initial distribution $p_0^\star$, thereby accomplishing generative modeling.
The realization of such a reverse process relies on knowledge of the score functions $\{s_{X_{k}}(\cdot)\}_{k=1}^K$ of the distributions along the forward process $(X_k)_{k=1}^K$, defined as follows:
\begin{definition}[Score function]
	For random vector $X\in\bR^d$ with density $p_X$, the score function $s_{X}:\bR^d\to\bR^d$ is defined as the gradient (with respect to $x$) of log-density of $X$:
\begin{align}
	s_{X}(x) \defn \gd\log p_{X}(x).
\end{align}
\end{definition}

In practical applications, the score functions of the forward process cannot be directly computed since the target density $p^\star$ is typically unavailable in explicit form or analytically intractable. To overcome this fundamental challenge, one can use a training dataset $\{X^{(i)}\}_{i=1}^n$ collected from the target distribution $p^\star$ to construct a score estimator $\{\wh{s}_{X_{k}}(\cdot)\}_{k=1}^K$ for the true scores $\{{s}_{X_{k}}(\cdot)\}_{k=1}^K$. 
%

Given a score estimator $\{\wh{s}_{X_{k}}(\cdot)\}_{k=1}^K$, an ODE-based sampling process adopts the update rule:
\begin{align}\label{eq:DDIM}
	Y_{k-1} = \frac{1}{\sqrt{\alpha_k}} \bigg(Y_k + \frac{1-\alpha_k}{2}\wh{s}_{X_k}(Y_k) \bigg),\qquad k = K,\dots,1,
\end{align}
where the learning rates $(\alpha_k)_{k=1}^K$ are the same as in the forward process \eqref{eq:forward}. Since $p_{X_K}$ converges to $\cN(0,I_d)$ as $\ol\alpha_K$ approaches zero (see \eqref{eq:DDIM-dist}), we choose the initialization as $Y_K\sim \cN(0,I_d)$.


\paragraph*{Probability flow ODE.}
The sampling scheme in \eqref{eq:DDIM} is fundamentally grounded in the probability flow ODE for diffusion models.
Indeed, the continuous process $(X_t^\sde)_{t\in[0,T]}$ defined by the forward SDE \eqref{eq:forward-SDE}, which can be interpreted as the continuum limit of the forward process $(X_k)_{k=0}^K$ in \eqref{eq:forward}, admits a reverse-time process $(Y^\ode_t)_{t\in[0,T]}$ \citep{song2020score}. In specific, there exists a probability flow ODE such that when initialized at $Y_0^\ode\sim p_{X_T^\sde}$, the process $(Y^\ode_t)$ driven by this ODE dynamic shares the same marginal distributions as  $(X^\sde_{T-t})$, that is, $Y_t^\ode \disteq X^\sde_{T-t}$ for all $t\in[0,T]$. The probability flow ODE admits the expression:
\begin{align}\label{eq:ODE}
	\diff Y_t^\ode = \frac12 \beta_{T-t}\big(Y_t^\ode+s_{X_{T-t}^\sde}(Y_t^\ode) \big)\diff t, \quad Y_0^\ode\sim p_{X_T^\sde}; \qquad 0\leq t \leq T.
\end{align}
where $s_{X_t^\sde}=\gd \log p_{X_t^\sde}$ is the score function of $X_t^\sde$ in the forward SDE \eqref{eq:forward-SDE}.
Therefore, the ODE-based sampler \eqref{eq:DDIM} can be viewed as a time-discretization of $(Y^\ode_t)_{t\in[0,T]}$ governed by the probability flow ODE \eqref{eq:ODE}.

\paragraph*{Score estimation reduction.}\label{par:score_estimation_reduction_}
To estimate the score functions 
$\{s_{X_{k}}(\cdot)\}_{k=1}^K$ 
of the forward process $(X_k)_{k=1}^K$ in \eqref{eq:forward}, it is often more convenient to consider the following SDE:
\begin{align}\label{eq:SDE-VE}
	\diff Z_t = \diff B_t, \quad Z_0 \sim p^\star; \quad t \geq 0
\end{align}
where $(B_t)_{t\geq0}$ is a standard Brownian motion in $\bR^d$. The solution $(Z_t)_{t\geq 0}$, commonly referred to as \emph{variance exploding} process \citep{song2020score}, satisfies that for any $t > 0$:
\begin{align}\label{eq:SDE-VE-dist}
	Z_t \disteq Z_0 + \sqrt{t}\, W_t''
\end{align} 
for some $d$-dimensional standard Gaussian random vector $W_t''\sim\cN(0,I_d)$ independent of $Z_0$. The name ``variance exploding'' originates from the fact that the variance $Z_t$ increases as $t$ grows.

Given that the initial distribution of the forward process $(X_k)_{k=0}^K$ in \eqref{eq:forward} is chosen to be $p_0^\star=p^\star \ast \cN(0,\tau I_d)$ for some $\tau\geq0$ in \eqref{eq:tau}, combining the distributional characterization of $(Z_t)_{t>0}$ in \eqref{eq:SDE-VE-dist} with that of the forward process $(X_k)_{k=1}^K$ in \eqref{eq:DDIM-dist} shows
\begin{align}\label{eq:DDIM-VE}
	X_k \disteq \sqrt{\ol\alpha_k}\,Z_{t_k}\quad \text{with}\quad t_k\defn\frac{1-\ol\alpha_k}{\ol\alpha_k} + \tau,\qquad k=1,\dots,K. 
\end{align}
This distributional equivalence further implies the relationships between their score functions and associated Jacobian matrices:
\begin{align}\label{eq:score-relationship}
	s_{X_k}(x)=\frac1{\sqrt{\ol\alpha_k}}s_{Z_{t_k}}(x/\sqrt{\ol\alpha_k}) \quad \text{and} \quad J_{s_{X_k}}(x)= \frac1{\ol\alpha_k}J_{s_{Z_{t_k}}}(x/\sqrt{\ol\alpha_k}),\qquad k=1,\dots,K. 
\end{align}
Therefore, score estimation for the forward process $(X_k)_{k=1}^K$ can be reduced to estimating the score functions $\{s_{Z_t}(\cdot)\}_{t>0}$ of the variance-exploding process $(Z_t)_{t>0}$. The proof of the above relationship \eqref{eq:score-relationship} can be found in Lemma~\ref{lemma:score-error-X-Z} in Appendix~\ref{sec:auxiliary_lemmas}.

\subsection{Assumptions}\label{sub:assumptions}

In this section, we introduce assumptions on the target distribution $p^\star$.

\paragraph*{Subgaussian target distribution.}
First, we assume the target distribution $p^\star$ is subgaussian. Formally, we adopt the definition from \citet{vershynin2018high} as follows.
\begin{assumption}[Subgaussian target]\label{assume:subgaussian}
We assume $p^\star$ is $\sigma$-subgaussian distribution over $\bR^d$, that is, for any unit vector $\theta\in\bR^d$ with $\|\theta\|_2=1$,
\begin{align}
 \bE\bigl[\exp\bigl((X^\top \theta / \sigma)^2\bigr)\bigr] \leq 2,\quad X\sim p^\star.
\end{align}
\end{assumption}

The subgaussian condition is relatively mild in the sense that it encompasses any distribution with bounded support. Moreover, it can be viewed as a generalization of a log-Sobolev condition, which has a long history of use in sampling history \citep{durmus2019high,vempala2019rapid,raginsky2017non,ma2019sampling,yang2022convergence,lee2022convergence}. Indeed, the subgaussian property can be derived from the log-Sobolev inequality but not vice versa \citep{vershynin2018high}.
\begin{remark}
Parallel to the subgaussian assumption, another line of works assumes that the target density is bounded below by a constant over a compact support (e.g., \citet{oko2023diffusion,dou2024optimal}). 
Although this condition accommodates many cases of interest and simplifies analysis, it can exclude multi-modal distributions frequently found in real-world applications---where diffusion models have shown particular advantages over traditional methods like Langevin dynamics. Indeed, a constant lower-bounded density on a compact support implies certain functional inequalities, where Langevin dynamics are provably effective. Therefore, we choose to focus on the subgaussian setting, which more closely reflects many practical scenarios.
\end{remark}
\paragraph*{H\"older smooth target density.}
Next, we assume that the density of the target distribution $p^\star$ belongs to the H\"older class with smooth parameter $\beta>0$, as defined in \citet{tsybakov2009introduction}. 
Let $\lfloor\beta\rfloor$ denote the largest integer that is strictly smaller than $\beta$.
For vector $x=(x_1,\dots,x_d)\in\bR^d$ and multi-index $s=(s_1,\dots,s_d)\in\bZ_{\geq 0}^d$, 
we write $|s|\defn\sum_{i=1}^d s_i$ and $x^s=x_1^{s_1}\dots x_d^{s_d}$. 
The partial derivative with respect to the index $s$ is denoted by
$\partial^s \defn \frac{\partial^{s_1+\dots+s_d}}{\partial x_1^{s_1}\dots\partial x_d^{s_d}}$.
\begin{assumption}[H\"older smooth target]\label{assume:smooth}
There exist $\beta,L>0$ such that the density of the target distribution 
$p^\star$ is $\lfloor\beta\rfloor$-times differentiable and
\begin{align}
	\max_{|s|\leq \lfloor\beta\rfloor}\bigl|\partial^s p^\star(x)- \partial^s p^\star(x')\bigr|\leq L \left\|x-x'\right\|^{\beta-\lfloor\beta\rfloor}_2,\quad \forall x,x'\in\bR^d.
\end{align}
\end{assumption}
This smoothness assumption is standard in nonparametric settings and accommodates a broad range of practical distributions.

\section{Results}\label{sec:results}

In this section, we first present the score estimator and probability flow ODE sampler, followed by the statistical guarantees on the sampling performance.

\subsection{Algorithm}

We sequentially introduce the three critical components of our probability flow ODE-based sampler: the learning rate schedule, score estimator, and sampling procedure. The complete procedure is summarized in Algorithm~\ref{alg:DDIM}.

\paragraph*{Learning rates.}
Let us start with the learning rates that define the forward process $(X_k)_{k=1}^K$ and implement the reverse process $(Y_k)_{k=1}^K$. We employ a learning rate schedule $(\alpha_k)_{k=1}^K$ similar to that proposed in \citet{li2024provable}. In light of $\ol\alpha_k\defn \prod_{k=1}^K \alpha_k$ and the distribution characterization in \eqref{eq:DDIM-dist} that $X_k = \sqrt{\ol\alpha_k}\,X_0 + \sqrt{1-\ol\alpha_k}\, W_k'$ with $W_k'\sim\cN(0,I_d)$ independent of $X_0$, it suffices to specify the learning rates through the following recursive relationship on the cumulative noise parameters $(\ol\alpha_k)_{k=1}^K$:
\begin{align}
\label{eq:learning-rate}
\ol\alpha_1 = 1-\frac{1}{K^{c_0}},
\qquad\text{and}\qquad
\ol{\alpha}_{k} = \ol\alpha_{k-1}-\frac{c_1\log K}{K}\ol\alpha_{k-1}(1-\ol\alpha_{k-1}), \quad k = 2,\dots,K,
\end{align}
where $c_0,c_1 > 0$ are some large constants obeying $c_1 \geq 5c_0$.

As shown in Lemma~\ref{lemma:step-size}, the chosen learning rates ensure that
\begin{align*}
	\ol\alpha_K\leq \frac{1}{K^{c_1/4}}\qquad \text{and}\qquad 1-\alpha_k < \frac{1-\alpha_k}{1-\ol\alpha_k}\asymp \frac{1-\alpha_k}{1-\ol\alpha_{k-1}} = \frac{c_1\log K}{K}, \quad k=2,\dots,K.
\end{align*}

\begin{remark}

Our results hold for a broader class of learning rate schedules, provided that they satisfy:
\begin{align*}
1-\overline{\alpha}_{1} = \frac1{\mathsf{poly}(K)},
 \qquad \overline{\alpha}_{K} = \frac1{\mathsf{poly}(K)} 
\qquad \text{and} \qquad  \frac{1-\alpha_k}{1-\overline{\alpha}_{k}} &\asymp \frac{\log K}{K}, \quad k \geq 2.
\end{align*} 
These conditions ensure that $1-\overline{\alpha}_{1}$ is sufficiently small (so that $p_{X_1}$ closely approximates $p^\star_0$ and $p^\star$) and that $\overline{\alpha}_{K}$ is small (so that $p_{X_K}$ is nearly Gaussian), which is achieved by the chosen rate $(1-\alpha_k)/(1-\overline{\alpha}_{k})\asymp \log K/{K}$.
\end{remark}

\paragraph*{Score estimator.}

Given the forward process $(X_k)_{k=1}^K$ determined by the above learning rates $(\alpha_k)_{k=1}^K$, our next step is use the i.i.d.~training data $\{X^{(i)}\}_{i=1}^n$ sampled from the target distribution $p^\star$ to estimate the score functions $\{s_{X_k}(\cdot)\}_{k=1}^K$ of the forward process.
In light of the reduction argument in Section~\ref{par:score_estimation_reduction_}, it is equivalent to estimating the scores of the variance-exploding process $(Z_t)_{t>0}$ defined in \eqref{eq:SDE-VE}.

To this end, for each $t>0$, we first construct a Gaussian kernel-based density estimator $\wh{p}_{t}(\cdot):\bR^d\to\bR$ for $Z_t$:
\begin{align}\label{eq:p_hat} 
	\wh{p}_{t}(x) &\defn \frac1{n} \sum_{i=1}^n \vphi_{t}(X^{(i)}-x),
\end{align}
where $\vphi_t(\cdot):\bR^d \to \bR^d$ denotes the probability density function of $\cN(0, tI_d)$, i.e.,
\begin{align}\label{eq:phi}
	\vphi_t(x) \defn \frac1{(2\pi t)^{d/2}} \exp\bigl(-\|x\|_2^2/(2t)\bigr).
\end{align}
Using this density estimator, we then define the score estimator $\wh{s}_{t}(\cdot):\bR^d\to\bR^d$ for $Z_t$ as follows:
\begin{align}\label{eq:score-estimate}
		\wh{s}_{t}(x) \defn \frac{\gd\wh{p}_t(x)}{\wh{p}_{t}(x)} \psi\bigl(\wh{p}_{t}(x);\eta_t\bigr)
	\qquad \text{with} \qquad \eta_t \defn \frac{\log n}{n(2\pi t)^{d/2}}.
\end{align}
Here, $\psi(\cdot\,;\eta):[0,\infty)\to\bR$ is a soft-thresholding function with threshold $\eta$, defined as
\begin{align}\label{eq:soft-thre-defn}
	\psi(x;\eta)\defn
	\begin{cases}
		1, & x \geq \eta;\\
		0, & x \leq \eta/2;\\
		\Bigl[1+\exp\Bigl(\frac{1-2(2x/\eta-1)}{(2x/\eta-1)(2-2x/\eta)}\Bigr)\Bigr]^{-1},  &  x \in(\eta/2, \eta).
	\end{cases} 
\end{align}

We now present the score estimator $\{\wh{s}_{X_k}(\cdot)\}_{k=1}^K$ for the forward process $(X_k)_{k=1}^K$.
Given the relationship between $s_{X_k}$ and $s_{Z_t}$ shown in \eqref{eq:score-relationship}, we leverage the estimator in \eqref{eq:score-estimate} to construct $\wh{s}_{X_k}$ as follows:
\begin{align}\label{eq:score-estimator-X}
	\wh{s}_{X_k}(x) \defn \frac{1}{\sqrt{\ol\alpha_k}} \wh{s}_{t_k} \big(x/\sqrt{\ol\alpha_k} \big) \quad \text{with} \quad t_k\defn\frac{1-\ol\alpha_k}{\ol\alpha_k}+\tau,\qquad k=1,\dots,K,
\end{align}
where $(\ol\alpha_k)_{k=1}^K$ are chosen in the learning rate schedule \eqref{eq:learning-rate} and $\tau$ is defined in \eqref{eq:tau}.

In a nutshell, we propose a smooth regularized score estimator $\wh s_t$ based on a Gaussian kernel-based density estimator $\wh p_t$. Starting with the plug-in estimator $\wh{p}_{t}/\gd\wh{p}_t$, we introduce a soft-thresholding procedure $\psi(\wh p_t;\eta_t)$ with threshold $\eta_t$ that depends on the sample size $n$ and time parameter $t$.

In the regions where the estimated density $\wh p_t$ (and consequently the true density $p_{Z_t}$) is low (with respect to the threshold $\eta_t$), 
the plug-in estimator $\wh{p}_{t}/\gd\wh{p}_t$ becomes inherently unstable due to small denominators, a challenge compounded by limited training data in these sparse regions.
%
To mitigate this issue, we set the score estimator $\wh s_t$ to zero in these low-density areas. Conversely, in the high-density regimes, the plug-in estimator $\wh{p}_{t}/\gd\wh{p}_t$ is sufficiently reliable so we use it directly. In the transition regions between these extremes, we incorporate a ``bump'' function $\psi$ that ensures the resulting score estimator maintains the necessary smoothness properties.

To provide some intuition, we note that the plug-in estimator $\wh{p}_{t}/\gd\wh{p}_t$ represents the score function of the empirical distribution of the training data after being diffused by noise. If used directly in the reverse process, it would effectively transform the noise back into the empirical distribution. In other words, the diffusion model would merely memorize the training data and fail to generalize. This necessity for regularization has been recognized in prior works on score estimation for DDPM \citep{scarvelis2023closed,wibisono2024optimal,zhang2024minimax,dou2024optimal} and is closely connected to approaches in the empirical Bayes literature \citep{jiang2009general,saha2020nonparametric}.
However, as discussed in Section~\ref{sec:introduction}, existing score estimators often exhibit discontinuity and nonsmoothness, and their efficiency could deteriorate when employed in the ODE-based deterministic samplers. Our soft-thresholding strategy serves as the critical step to ensure the mean Jacobian error remains sufficiently small in addition to the $L^2$ score error, thereby leading to robust convergence guarantees.
%


\paragraph*{Sampling process.}
\begin{algorithm}[t]
\caption{Probability flow ODE-based sampler}
\label{alg:DDIM}
\begin{algorithmic}[1]
\STATE{\textbf{Input:} training data $\{X^{(i)}\}_{i=1}^n$ sampled from the target distribution $p^\star$.} 
\STATE{Set the learning rates $\{\alpha_k\}_{k=1}^K$ according to \eqref{eq:learning-rate}.}
\STATE{Construct the score estimator $\{\wh{s}_{X_k}(\cdot)\}_{k=1}^K$ according to \eqref{eq:score-estimator-X}.}
\STATE{Initialize $Y_K\sim\cN(0,I_d)$.}
\FOR{$k=K,K-1,\dots,2$}
\STATE{$Y_{k-1} \gets \frac1{\sqrt{\alpha_k}}\big(Y_{k}+\frac{1-\alpha_k}2 \wh{s}_{X_k}(Y_{k}) \big)$.}
\ENDFOR
\STATE{\textbf{Output:} sample $Y=Y_1/\sqrt{\alpha_1}$.}
\end{algorithmic}
\end{algorithm}

Equipped with the score estimators $\{\wh{s}_{X_k}(\cdot)\}_{k=1}^K$ from \eqref{eq:score-estimator-X} and learning rate schedule $(\alpha_k)_{k=1}^K$ from \eqref{eq:learning-rate}, the probability flow ODE-based sampler iteratively generates samples as follows. Initializing $Y_K\sim\cN(0,I_d)$, we compute $Y_{k-1}$ based on the deterministic mapping in \eqref{eq:DDIM}:
\begin{align*}
Y_{k-1} = \frac1{\sqrt{\alpha_k}}\biggl(Y_{k}+\frac{1-\alpha_k}2 \wh{s}_{X_k}(Y_{k}) \bigg),\quad k=K,\dots,2,
\end{align*}
and output $Y=Y_1/\sqrt{\alpha_1}$ as the generated sample.
It is worth noting that the sampling process is completely deterministic dynamic (except for the random initialization of $Y_K$), that is, $(Y_{K-1},\dots,Y_1)$ is purely deterministic given $Y_K$.

\subsection{Theoretical guarantees}

In this section, we present the end-to-end sampling guarantee of the proposed ODE-based sampler in Theorem~\ref{thm:TV}. A proof outline is provided in Section~\ref{sec:analysis}.
\begin{theorem}\label{thm:TV}
Suppose the target distribution $p^\star$ satisfies Assumption~\ref{assume:subgaussian} and Assumption \ref{assume:smooth} with $\beta \leq 2$. If we use the score estimator in \eqref{eq:score-estimator-X} with $\tau=n^{-\frac{2}{d+2\beta}}$ and the learning rates in \eqref{eq:learning-rate} with $c_0\geq 2/\beta$ and $c_1 = 5c_0\vee 12$, then the output $Y=Y_1/\sqrt{\alpha_1}$ of Algorithm~\ref{alg:DDIM} satisfies
	\begin{align}
		\bE\big[\TV(p_{Y},\,p^\star)\big] \leq C\,n^{-\frac{\beta}{d+2\beta}} (\log n)^{\frac{d+1}{2}}\log K,
	\end{align}
	provided the iteration number satisfies $K\geq n^{\frac{\beta}{d+2\beta}}(\log K)^3$. Here, $C = C(d,\beta,L,\sigma)>0$ is a constant that depends only on the dimension $d$, smoothness parameters $(\beta,L)$, and subgaussian parameter $\sigma$.
The expectation is taken over the training data $X^{(i)}\overset{\mathrm{i.i.d.}}{\sim} p^\star$, $1\leq i \leq n$.
\end{theorem}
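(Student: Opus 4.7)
The plan is to decompose the total TV distance into three interpretable pieces and to bound each one by a term of the same order $n^{-\beta/(d+2\beta)}$ (up to log factors), which is exactly what dictates the choice $\tau = n^{-2/(d+2\beta)}$. Concretely, using the triangle inequality
\begin{align*}
\TV(p_Y, p^\star) \;\leq\; \TV(p_Y, p_{X_1}) \;+\; \TV(p_{X_1}, p_0^\star) \;+\; \TV(p_0^\star, p^\star),
\end{align*}
I would handle the three terms separately. The last term is the smoothing bias from convolving $p^\star$ with $\cN(0,\tau I_d)$; since $p^\star$ is $\beta$-H\"older with $\beta\leq 2$, a Taylor expansion of $p^\star \ast \vphi_\tau$ at its center yields $\TV(p_0^\star, p^\star) \lesssim \tau^{\beta/2}$, which equals $n^{-\beta/(d+2\beta)}$ for the chosen $\tau$. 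The middle term is the one-step smoothing gap from $1-\ol\alpha_1 = K^{-c_0}$, which can be made negligibly small (polynomial in $1/K$) by the schedule in \eqref{eq:learning-rate}. The main work is the first term.

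For the first term I would invoke an ODE discretization bound (to be established in Section~\ref{sec:analysis}) of the form
\begin{align*}
\TV(p_Y, p_{X_1}) \;\lesssim\; \TV\bigl(p_{X_K},\cN(0,I_d)\bigr) \;+\; \sum_{k=2}^{K}\sqrt{\bE\bigl[\|\wh s_{X_k}(X_k) - s_{X_k}(X_k)\|_2^2\bigr]\cdot (1-\alpha_k)} \;+\; \sum_{k=2}^{K} \bE\bigl[\bigl\|(1-\alpha_k)(J_{\wh s_{X_k}}(X_k)-J_{s_{X_k}}(X_k))\bigr\|\bigr] \;+\; \mathcal{E}_{\deter},
\end{align*}
where $\mathcal{E}_{\deter}$ collects the pure discretization error of the probability-flow ODE along the true score (to be shown $\tilde O(d/K)$ under subgaussianity). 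The initial TV gap is exponentially small in $\ol\alpha_K$ by the subgaussianity of $p^\star$ and the variance-preserving property. The remaining two error sums reduce, via the scaling identities \eqref{eq:score-relationship}, to controlling the \emph{score error} $\bE\|\wh s_{t_k}(Z_{t_k})-s_{Z_{t_k}}(Z_{t_k})\|_2^2$ and the \emph{mean Jacobian error} $\bE\|J_{\wh s_{t_k}}(Z_{t_k})-J_{s_{Z_{t_k}}}(Z_{t_k})\|$ for the variance-exploding process at times $t_k = (1-\ol\alpha_k)/\ol\alpha_k + \tau$.

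The heart of the proof is then the analysis of these two quantities for the soft-thresholded Gaussian-kernel estimator $\wh s_t$ in \eqref{eq:score-estimate}. I would split $\bR^d$ into a \emph{bulk} region $\{p_{Z_t}(x)\gtrsim \eta_t\}$ and a \emph{tail} region where $\psi$ kills the estimator. In the bulk, standard KDE bias-variance calculations give pointwise error $\lesssim \sqrt{\log n / (n t^{d/2})}$ for the density and analogous bounds for its gradient; dividing by $p_{Z_t}\gtrsim \eta_t$ and integrating with respect to $p_{Z_t}$ yields an $L^2$ score error of order $t^{\beta-1}$ plus $1/(nt^{d/2+1})$ times logs, optimally balanced when $t\asymp n^{-2/(d+2\beta)}$, giving rate $n^{-\beta/(d+2\beta)}$. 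For the Jacobian, the smoothness of $\psi$ is crucial: a direct differentiation of $\wh s_t = (\gd\wh p_t/\wh p_t)\psi(\wh p_t;\eta_t)$ produces bounded derivatives of $\psi$ in the transition region $\wh p_t\in(\eta_t/2,\eta_t)$, so the same bias-variance decomposition works for $J_{\wh s_t}$. In the tail, the mass under $p_{Z_t}$ is controlled by subgaussianity and $\eta_t\asymp (\log n)/(n t^{d/2})$ so the contribution is lower order. Summing over $k$ and noting that $\sum_k (1-\alpha_k) \lesssim \log K$, we obtain the claimed bound.

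The hard part will be making the Jacobian analysis work uniformly over the full range of $t_k$ (from $\tau$ to nearly $\infty$) while keeping the $\sum_k \bE\|(1-\alpha_k)(J_{\wh s_{X_k}}-J_{s_{X_k}})\|$ term at rate $n^{-\beta/(d+2\beta)}\polylog$: naively differentiating the soft threshold introduces a factor of $1/\eta_t$, which must be absorbed by the bias-variance bound; this is precisely where the smoothness of $\psi$ (rather than hard thresholding as in \citet{wibisono2024optimal,zhang2024minimax,dou2024optimal}) is essential. A secondary but non-trivial issue is carefully verifying that the ODE discretization error under only subgaussianity and H\"older (not Lipschitz) scores stays at $\tilde O(d/K)$, which is why the iteration count requirement $K \gtrsim n^{\beta/(d+2\beta)} (\log K)^3$ appears in the theorem statement.
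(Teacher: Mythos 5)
Your high-level decomposition, choice of $\tau$, and the roles assigned to the three error sources (convergence/discretization, score and Jacobian estimation, smoothing bias) match the paper's Steps 1--3. But there are two substantive gaps in your sketch, one of which is fatal as written.

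\textbf{The proposed convergence bound has the wrong form.} You write the score contribution as
\[
\sum_{k=2}^{K}\sqrt{\bE\bigl[\|\wh s_{X_k}(X_k) - s_{X_k}(X_k)\|_2^2\bigr]\,(1-\alpha_k)} \;=\; \sum_k \sqrt{1-\alpha_k}\,\veps_{\score,k},
\]
a \emph{sum of square roots}. The correct form --- obtained both by Girsanov--Pinsker for SDE samplers and by the paper's density-ratio recursion \eqref{eq:recursion} followed by Jensen's inequality --- is the \emph{square root of the sum},
\[
\sqrt{\textstyle\sum_k(1-\ol\alpha_k)\veps_{\score,k}^2}\;\asymp\;\veps_\score\,\polylog,
\]
as in Theorem~\ref{thm:convergence}. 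The sum-of-square-roots is larger by a factor of up to $\sqrt{K}$: e.g.\ the constant part of $\veps_{\score,k}^2$ contributes $\sqrt{K\log K/n}$ under your form versus $\sqrt{\log K/n}$ under the paper's, and this extra $\sqrt{K}$ grows without bound as $K$ increases, so your bound cannot yield the claimed rate (Theorem~\ref{thm:TV} only imposes a lower bound on $K$). TV does not telescope into per-step TV increments for deterministic Markov updates, so you cannot simply sum per-step Pinsker bounds; the whole content of Theorem~\ref{thm:convergence} (Lemma~\ref{lemma:density-ratio} and the $1+\zeta_k+O(\cdot)$ expansion) is precisely to get around this. You also omit the PSD condition $J_{\wh{s}_{X_k}}+(1-\ol\alpha_k)^{-1}I_d \psd 0$ which is needed to make the sampling map $\phi_k$ invertible and is a nontrivial property that must be verified for the specific soft-thresholded estimator (Theorem~\ref{thm:score-error-X}, Claim \eqref{eq:PSD-cond}).

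\textbf{The bias attribution is structurally wrong even if it happens to recover the right $\tau$.} You claim a per-$t$ ``$L^2$ score error of order $t^{\beta-1}$ plus $1/(nt^{d/2+1})$'' and then balance them. But because the Gaussian kernel bandwidth is tied to the diffusion time $t$, the density estimator $\wh p_t$ is \emph{exactly unbiased} for $p_{Z_t}$: $\bE[\wh p_t(x)]=(\vphi_t\ast p^\star)(x)=p_t(x)$, and similarly $\bE[\wh g_t]=g_t=p_t s_t$. Propositions~\ref{thm:score-error}--\ref{thm:Jacobian-error} accordingly contain no bias term and use only subgaussianity, not H\"older smoothness. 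The $\beta$-dependence enters solely through Proposition~\ref{thm:early-stopping}, which bounds $\TV(p_{X_1/\sqrt{\alpha_1}},p^\star)\lesssim (1-\alpha_1+\tau)^{\beta/2}\log^{d/2}$, and the tradeoff is between this early-stopping bias and the variance-only estimation error. Your version of the calculation gives the same $\tau$ by coincidence, but the attribution matters: it explains why the score estimation theorems need no smoothness assumption, and why no early stopping is needed even though the variance blows up as $t\to 0$. You should replace the ``KDE bias-variance'' story at fixed $t$ with this clean decoupling.
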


In summary, Theorem~\ref{thm:TV} provides a rigorous end-to-end statistical guarantee for ODE-based samplers under mild assumptions on the target distributions.
Let us discuss the implications of this theorem.
\begin{itemize}
	\item \emph{(Near-)minimax optimal sampling.}
Since the minimax rate of density estimation for $\beta$-H\"older smooth densities scales as $n^{-\beta/(d+2\beta)}$ \citep{stone1980optimal,stone1982optimal, tsybakov2009introduction} and density estimation forms a valid lower bound for sampling, 
our result establishes the minimax optimality of the probability flow ODE-based sampler up to some logarithmic factors. As far as we know, this is the first result that reveals the (near-)minimax optimality of the deterministic score-based sampler.

\item \emph{End-to-end theoretical guarantees.} Unlike prior works that isolate either the discretization error in the practical sampling process \citep{oko2023diffusion,zhang2024minimax,dou2024optimal} or score estimation error \citep{chen2023improved,chen2022sampling,benton2023linear, li2024d,li2024sharp}, our analysis rigorously tracks all sources of errors. This end-to-end guarantee enables a clearer understanding of how imperfect score estimates and discretized sampling updates jointly influence the final sampling quality, yielding a rigorous and comprehensive characterization of the sampling error.

\item \emph{Jacobian estimation guarantees.} As a critical step for establishing the above guarantee, our analysis quantifies the mean Jacobian error of our proposed score estimator \eqref{eq:score-estimate}, extending beyond the standard $L^2$ score estimation guarantees \citep{oko2023diffusion,wibisono2024optimal,zhang2024minimax,dou2024optimal}. This characterization ensures that the score estimator not only achieves $L^2$ accuracy but also possesses the necessary smoothness properties. Such smoothness is fundamental for controlling the discretization errors in the deterministic sampling process and ultimately guarantees the sampling performance established in our main theorem.

\item \emph{Mild target distribution assumptions.} The developed theory does not impose strong structural conditions such as lower-bounded densities or Lipschitz/smooth scores on the target distributions required in prior works (e.g., \citet{oko2023diffusion,wibisono2024optimal,dou2024optimal}). 
As a result, our analysis framework applies to a broader class of real-world distributions---including ones that are multimodal or lie on low-dimensional manifolds. This is particularly beneficial for modern applications where those restrictive conditions often fail.

\item \emph{No need of early stopping.} The established sampling guarantee holds as long as the iteration number $K$ exceeds $n^{\frac{\beta}{d+2\beta}}$ up to some log factors. This result aligns with the findings of \citet{dou2024optimal} where early stopping techniques are not needed to achieve the (near-)minimax optimal performance.

\item \emph{Analysis framework for ODE-based samplers.} The theoretical framework we establish provides a foundation for analyzing the performance guarantees of various ODE-based samplers. For instance, it can be extended to accommodate higher-order ODE solvers or alternative score estimation strategies, leading to improved computational efficiency or statistical accuracy.

\end{itemize}

As a final remark, we note that Theorem~\ref{thm:TV} guarantees optimal convergence rates only for ``rough'' densities with $\beta\leq 2$. According to classical density estimation theory, kernels with negative components are necessary when estimating smoother densities \citep{tsybakov2009introduction}, which renders our Gaussian kernel-based score estimator \eqref{eq:score-estimate} suboptimal when the forward process is close to the target distribution (i.e., $\alpha_1$ vanishingly small). Indeed, the optimal estimation of H\"older scores with higher smoothness remains an open problem \citep{wibisono2024optimal}.
Since our primary objective is to demonstrate the achievability of ODE-based samplers attaining near-minimax optimal performance through appropriately regularized score estimation and a suitably chosen learning rate schedule, we focus on the case $0<\beta\leq 2$ for clarity of presentation. In this regime, we can balance the score estimation error against the bias introduced by early stopping---a trade-off sufficient to attain the optimal sampling rate. Nevertheless, we expect the algorithmic insights presented here can be extended to the $\beta>2$ case by replacing the Gaussian kernel with polynomial kernels \citep{zhang2024minimax,dou2024optimal} and employing suitable regularization to control the Jacobian errors. We leave a thorough investigation of this direction to future work.


\section{Analysis}
\label{sec:analysis}

In this section, we provide the proof strategy for establishing Theorem~\ref{thm:TV}, structured into three main steps.

\subsection{Preparation}
Before the technical expositions, we first collect several preliminary facts that will be helpful for the proof.

\paragraph*{Properties of learning rates.}
We begin by presenting the following lemma that summarizes important properties of the learning rates chosen in \eqref{eq:learning-rate}. 
The proof can be found in Appendix~\ref{sec:pf-lemma:step-size}.
\begin{lemma}
\label{lemma:step-size}
The learning rates $(\alpha_k)_{k=1}^K$ chosen in \eqref{eq:learning-rate} satisfy that for all $k\geq 2$:
\begin{subequations}\label{eq:step-size}
	\begin{align}
	1-\alpha_k &< \frac{1-\alpha_k}{1-\ol\alpha_k}\leq \frac{c_1\log K}{K}\label{eq:alpha-t-lb}, \\ 
	1 &< \frac{1-\ol\alpha_k}{1-\ol\alpha_{k-1}} \leq 1+\frac{c_1\log K}{K}, \label{eq:learning-rate-3}\\
	\frac{c_1\log K}{K} &< \frac{1-\alpha_k}{\alpha_k-\ol\alpha_k} \leq \frac{2c_1\log K}{K}, \label{eq:learning-rate-4}
	\end{align}
	where $c_1$ is defined in \eqref{eq:learning-rate}. 
	Further, $\ol\alpha_K$ satisfies
	\begin{align}
	\ol\alpha_K  \leq K^{-c_1/4}. \label{eq:learning-rate-6}
	\end{align}
\end{subequations}
\end{lemma}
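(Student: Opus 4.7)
The plan is to extract three elementary identities from the recursion \eqref{eq:learning-rate} and reduce claims \eqref{eq:alpha-t-lb}--\eqref{eq:learning-rate-4} to algebra, while handling \eqref{eq:learning-rate-6} via a two-phase logistic analysis. Writing $\delta \defn c_1 \log K / K$ and using $\alpha_k = \ol\alpha_k / \ol\alpha_{k-1}$, the defining recursion gives $\alpha_k = 1 - \delta(1 - \ol\alpha_{k-1})$, which immediately yields
\begin{align*}
1 - \alpha_k = \delta(1 - \ol\alpha_{k-1}), \qquad 1 - \ol\alpha_k = (1 - \ol\alpha_{k-1})(1 + \delta \ol\alpha_{k-1}), \qquad \alpha_k - \ol\alpha_k = \alpha_k(1 - \ol\alpha_{k-1}).
\end{align*}

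From these, claims \eqref{eq:alpha-t-lb}, \eqref{eq:learning-rate-3}, and \eqref{eq:learning-rate-4} follow without effort. Dividing the first identity by the second gives $(1 - \alpha_k)/(1 - \ol\alpha_k) = \delta/(1 + \delta \ol\alpha_{k-1})$, which is bounded above by $\delta$ and strictly greater than $1 - \alpha_k$ since $1 - \ol\alpha_k < 1$; this is \eqref{eq:alpha-t-lb}. The second identity directly yields $(1 - \ol\alpha_k)/(1 - \ol\alpha_{k-1}) = 1 + \delta \ol\alpha_{k-1} \in (1, 1 + \delta]$, which is \eqref{eq:learning-rate-3}. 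Dividing the first identity by the third gives $(1 - \alpha_k)/(\alpha_k - \ol\alpha_k) = \delta/\alpha_k$, so $\alpha_k \in [1 - \delta, 1] \subset [1/2, 1]$ (for $K$ large) produces the matching bounds $\delta < \delta/\alpha_k \leq 2\delta$, establishing \eqref{eq:learning-rate-4}.

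The main obstacle is the sharp bound \eqref{eq:learning-rate-6}, which requires a two-phase analysis exploiting $c_1 \geq 5c_0$. Let $k^\star$ denote the smallest index with $\ol\alpha_{k^\star} \leq 1/2$. In the \emph{growth phase} $2 \leq k \leq k^\star$, the condition $\ol\alpha_{k-1} \geq 1/2$ together with the second identity yields $1 - \ol\alpha_k \geq (1 - \ol\alpha_{k-1})(1 + \delta/2)$, so iterating from $1 - \ol\alpha_1 = K^{-c_0}$ gives $1 - \ol\alpha_{k-1} \geq K^{-c_0}(1 + \delta/2)^{k-2}$. Combined with $1 - \ol\alpha_{k^\star - 1} < 1/2$ and the Taylor bound $\log(1 + \delta/2) \geq (\delta/2)(1 - o(1))$, this produces
\begin{align*}
k^\star \leq \frac{2 c_0 K}{c_1}(1 + o(1)) + 2 \leq \frac{2K}{5}(1 + o(1)) + 2,
\end{align*}
where the last step uses $c_1 \geq 5 c_0$. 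In the \emph{decay phase} $k > k^\star$, the condition $\ol\alpha_{k-1} \leq 1/2$ gives $\ol\alpha_k \leq \ol\alpha_{k-1}(1 - \delta/2)$, and iterating over the remaining $K - k^\star \geq (3K/5)(1 - o(1))$ steps together with $1 - x \leq e^{-x}$ yields
\begin{align*}
\ol\alpha_K \leq \tfrac{1}{2}(1 - \delta/2)^{K - k^\star} \leq \tfrac{1}{2} \exp\bigl(-\tfrac{3 c_1}{10}\log K \cdot (1 - o(1))\bigr) \leq K^{-c_1/4},
\end{align*}
where the final inequality holds for $K$ sufficiently large because $3/10 > 1/4$ leaves slack $1/20$ to absorb the $o(1)$ error.
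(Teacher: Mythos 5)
Your proof is correct and follows essentially the same route as the paper: the bounds \eqref{eq:alpha-t-lb}--\eqref{eq:learning-rate-4} come from the same algebraic identities implied by the recursion \eqref{eq:learning-rate}, and \eqref{eq:learning-rate-6} is handled by the same two-phase argument (growth of $1-\ol\alpha_k$ while $\ol\alpha_k>1/2$, using $c_1\geq 5c_0$, followed by geometric decay of $\ol\alpha_k$). The only difference is bookkeeping: you bound the crossing time $k^\star$ directly by roughly $2K/5$ and decay over the remaining $\gtrsim 3K/5$ steps (exponent $3c_1/10>c_1/4$), whereas the paper argues by contradiction that $\ol\alpha_{K/2+1}\leq 1/2$ and decays over $K/2$ steps (exponent $c_1/4$); both yield the stated bound.
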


\paragraph*{Distance between $p_{X_K}$ and $p_{Y_K}$.}

Recall that the initializer of our sampler is chosen $Y_{K}\sim \cN(0,I_d)$ and that the forward process satisfies $X_{K}=\sqrt{\ol \alpha_K}X_{0} + \sqrt{1-\ol\alpha_K} W_K'$ with $W_K'\sim \cN(0,I_d)$ independent of $X_{0}$. 
In recognition of $\ol\alpha_K  \leq K^{-c_1/4}$ shown in Lemma~\ref{lemma:step-size}, the following lemma shows that the distributions of $X_{K}$ and $Y_K$ are sufficiently close.
The proof is deferred to Appendix~\ref{sub:proof_of_lemma_ref_lemma_mixing}.
\begin{lemma}\label{lemma:mixing}
Suppose the learning rates are chosen according to \eqref{eq:learning-rate}. Then the TV distance between $p_{X_{K}}$ and $p_{Y_{K}}$ satisfies
	\begin{align}
	\TV\big(p_{X_{K}}\,\|\,p_{Y_{K}}\big) \leq \frac{1}{2K^{c_1/8}}\sqrt{\bE_{X_0\sim p_0^\star}\big[\|X_0\|_2^2\big]}.
	\label{eq:KL-step-T}
\end{align}
\end{lemma}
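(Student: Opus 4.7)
The strategy is to apply Pinsker's inequality together with joint convexity of the KL divergence, thereby reducing the bound to a closed-form KL calculation between Gaussians; the smallness of $\ol\alpha_K$ guaranteed by Lemma~\ref{lemma:step-size} then drives the whole estimate.

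First, I would observe that by \eqref{eq:DDIM-dist}, conditional on $X_0$, the random vector $X_K$ is distributed as $\cN(\sqrt{\ol\alpha_K}\,X_0, (1-\ol\alpha_K) I_d)$, so $p_{X_K}$ is the $p_0^\star$-mixture of these Gaussians, whereas $p_{Y_K}=\cN(0,I_d)$ is the constant mixture obtained by setting $X_0$ to $0$. Joint convexity of $\KL(\cdot\,\|\,\cdot)$ then gives
\begin{align*}
\KL\big(p_{X_K}\,\big\|\,p_{Y_K}\big) \leq \bE_{X_0\sim p_0^\star}\Big[\KL\Big(\cN\big(\sqrt{\ol\alpha_K}\,X_0,\,(1-\ol\alpha_K) I_d\big)\,\Big\|\,\cN(0,I_d)\Big)\Big].
\end{align*}

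Second, I would invoke the closed-form KL between two Gaussians to obtain, for each $X_0$,
\begin{align*}
\KL\Big(\cN\big(\sqrt{\ol\alpha_K}\,X_0,\,(1-\ol\alpha_K) I_d\big)\,\Big\|\,\cN(0,I_d)\Big) = \tfrac{1}{2}\Big[\ol\alpha_K \|X_0\|_2^2 - d\ol\alpha_K - d\log(1-\ol\alpha_K)\Big].
\end{align*}
A Taylor expansion of $-\log(1-x)$ on $(0,1/2]$ yields $-d\ol\alpha_K-d\log(1-\ol\alpha_K)\leq d\ol\alpha_K^2$, which is valid since Lemma~\ref{lemma:step-size} gives $\ol\alpha_K\leq K^{-c_1/4}\leq 1/2$. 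Taking expectation over $X_0$, applying Pinsker's inequality $\TV(p,q)\leq \sqrt{\KL(p\,\|\,q)/2}$, and using $\sqrt{a+b}\leq\sqrt{a}+\sqrt{b}$ leads to
\begin{align*}
\TV\big(p_{X_K},\,p_{Y_K}\big) \leq \tfrac{1}{2}\sqrt{\ol\alpha_K\,\bE_{X_0\sim p_0^\star}\big[\|X_0\|_2^2\big]} + \tfrac{1}{2}\sqrt{d\,\ol\alpha_K^2}.
\end{align*}
Finally, substituting $\sqrt{\ol\alpha_K}\leq K^{-c_1/8}$ yields the target estimate, provided the covariance-mismatch residual $\tfrac{1}{2}\sqrt{d}\,\ol\alpha_K$ is absorbed into the leading mean-mismatch term; this is straightforward in our setting because $p_0^\star=p^\star\ast\cN(0,\tau I_d)$ ensures $\bE[\|X_0\|_2^2]\geq d\tau$, which dominates $d\ol\alpha_K$ under our choice of $\tau$ and $c_1$.

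The main obstacle is verifying that the covariance-mismatch term $d\ol\alpha_K^2$ is genuinely lower order than the mean-mismatch term $\ol\alpha_K \bE[\|X_0\|_2^2]$ after Pinsker; if this verification becomes delicate in some parameter regime, the alternative would be to couple $X_K$ and $Y_K$ through a shared Gaussian increment (writing $Y_K = \sqrt{\ol\alpha_K}\,G + \sqrt{1-\ol\alpha_K}\,W_K'$ with $G\sim\cN(0,I_d)$ independent of $W_K'$) so that the two conditional distributions share the same covariance and only the mean discrepancy remains, eliminating the residual term altogether.
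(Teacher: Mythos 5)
Your proposal follows essentially the same route as the paper's proof: condition on $X_0$, use convexity of the KL divergence to reduce to the KL between $\cN(\sqrt{\ol\alpha_K}X_0,(1-\ol\alpha_K)I_d)$ and $\cN(0,I_d)$, apply the closed-form Gaussian KL, and finish with Pinsker together with $\ol\alpha_K\le K^{-c_1/4}$ from Lemma~\ref{lemma:step-size}. The one substantive difference is your treatment of the covariance-mismatch residual $-d\ol\alpha_K-d\log(1-\ol\alpha_K)$: the paper simply discards it (invoking $\log(1-x)\ge -x$, an inequality whose sign is actually reversed), whereas you retain it as $d\ol\alpha_K^2$ and then argue it can be absorbed. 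Up to the two-term bound $\tfrac12\sqrt{\ol\alpha_K\,\bE[\|X_0\|_2^2]}+\tfrac12\sqrt{d}\,\ol\alpha_K$ your bookkeeping is correct and, if anything, more careful than the paper's.

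The weak point is the absorption step. Lemma~\ref{lemma:mixing} is stated under the learning-rate schedule alone, with $\tau\ge 0$ arbitrary and no relation among $K$, $n$, and $\tau$; in particular $\tau=0$ is allowed, in which case $\bE[\|X_0\|_2^2]\ge d\tau$ gives nothing, and for a near-degenerate $p^\star$ the residual is genuinely not dominated by the mean term. Even under the choices of Theorem~\ref{thm:TV} (where $\tau=n^{-2/(d+2\beta)}$, $c_0\ge 2/\beta$, $c_1\ge 5c_0$, and $K\gtrsim n^{\beta/(d+2\beta)}$ do force $\ol\alpha_K\le\tau$), absorbing $d\ol\alpha_K^2\le \ol\alpha_K\bE[\|X_0\|_2^2]$ only recovers the claimed bound up to a constant factor, not with the exact prefactor $1/2$. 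Your fallback coupling also does not eliminate the residual as claimed: writing $\cN(0,I_d)$ as the mixture of $\cN(\sqrt{\ol\alpha_K}G,(1-\ol\alpha_K)I_d)$ over $G\sim\cN(0,I_d)$ and using joint convexity replaces $d\ol\alpha_K^2$ by a mean-mismatch term $\frac{\ol\alpha_K}{2(1-\ol\alpha_K)}\bE\|X_0-G\|_2^2$, which for the natural independent coupling is of order $\ol\alpha_K(\bE[\|X_0\|_2^2]+d)$ --- larger, not smaller. Since the residual is second order in $\ol\alpha_K$ and the lemma is only used downstream through a $\lesssim$ bound, the clean resolution is to state the conclusion with the extra $O(\sqrt{d}\,K^{-c_1/4})$ term (or an unspecified absolute constant), which is exactly what your calculation proves; as written, neither your absorption argument nor the paper's dropped term justifies the bound with the literal constant $\tfrac12$ under the lemma's stated hypotheses.
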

As a remark, the error bound depends on the initial distribution of $X_0\sim p^\star_0$ in the forward process \eqref{eq:forward} only through its second moment.

\paragraph*{Auxiliary quantities and notation.}

For clarity of presentation, we introduce the following streamlined notation that will facilitate clearer exposition throughout our analysis.

\begin{itemize}
	\item 
To begin with, recall the variance-exploding process $(Z_t)_{t>0}$ introduce in \eqref{eq:SDE-VE-dist}.
For each $t>0$, we denote the density, score function, and associated Jacobian matrix of $Z_t$ by
\begin{align}\label{eq:Z-t-abbrev}
	p_t(x)\defn p_{Z_t}(x),\qquad s_t(x)\defn s_{Z_t}(x),\qquad \text{and}\qquad J_t(x) \defn J_{s_t}(x).
\end{align}
By Tweedie's formula \citep{efron2011tweedie}, the score function $s_{t}(x)$ takes the form
\begin{align}\label{eq:score-expression}
	s_{t}(x) = \frac1t \bE\big[Z_0-Z_t\mid Z_t = x\big].
\end{align}
The Jacobian matrix 
$J_{t}(x)$
can be expressed as
\begin{align}
	J_{t}(x) 
	= -\frac1t I_d + \frac{1}{t^2} \bE\big[(Z_0-Z_t)(Z_0-Z_t)^\top \mid Z_t = x\big] - s_{t}(x)s_{t}(x)^\top. \label{eq:Jacobian}
\end{align}

\item 
Next, recall our density estimator $\wh{p}_t$ for $p_t$ in \eqref{eq:p_hat}, where $\vphi_t$ denotes the density of $\cN(0,tI_d)$. Also, the i.i.d.\ training data $\{X^{(i)}\}_{i=1}^n$ are distributed identically to $Z_0\sim p^\star$. For each $t>0$, we define the vector-valued functions $\wh{g}_t(\cdot),g_t(\cdot):\bR^d\to\bR^d$ as
\begin{subequations}\label{eq:g}
\begin{align}
	\wh{g}_{t}(x) &\defn \gd \wh{p}_t(x) = \frac1{nt}\sum_{i=1}^n (X^{(i)}-x)\vphi_{t}(X^{(i)}-x)\label{eq:g-hat};\\
	g_t(x) &\defn \bE\big[\wh{g}_{t}(x)\big] = \frac1t\bE \big[(Z_0-x)\vphi_t(Z_0-x) \big]. \label{eq:g_t}
\end{align}
Notice that 
\begin{align}
	\frac{g_t(x)}{p_{t}(x)} & = \frac1{t} \int_{\bR^d} (y-x) \frac{1}{(2\pi t)^{d/2}}\exp\bigl(-{\|y-x\|_2^2}/({2t})\bigr) \frac{p_0(y)}{p_{t}(x)} \diff y \notag \\
	& = \frac1{t} \int_{y\in\bR^d} (y-x) p_{Z_t \mid Z_0}(y \mid x)\frac{p_0(y)}{p_{t}(x)} \diff y \notag\\ 
	& = \frac1{t} \int_{\bR^d} (y-x) p_{Z_0 \mid Z_t}(y \mid x) \diff y \notag \\
	&  = \frac1{t} \bE\big[Z_0-x \mid Z_t = x\big] = s_{t}(x)\label{eq:g-p-exp},
\end{align}
where the second line holds as $Z_t \mid Z_0=y \sim \cN(y,tI_d)$. 
\end{subequations}
In addition, we define the matrix-valued functions $\wh H_t(\cdot),H_t(\cdot):\bR^d \to \bR^{d\times d}$ for each $t>0$:
\begin{subequations}\label{eq:H}
\begin{align}
	\wh{H}_{t}(x)  &\defn \frac1{nt^2} \sum_{i=1}^n (X^{(i)}-x)(X^{(i)}-x)^\top \vphi_{t}(X^{(i)}-x) \label{eq:Phi-hat},\\
	H_t(x) & \defn \bE\big[\wh{H}_t(x)\big] = \frac1{t^2}\bE\big[(Z_0-x)(Z_0-x)^\top \vphi_t(Z_0-x)\big].\label{eq:Phi-def}
\end{align}
Similar to \eqref{eq:g-p-exp}, we also have
\begin{align}
	\frac{H_t(x)}{p_{t}(x)} 
	& = \frac1{t^2} \int_{y\in\bR^d} (y-x)(y-x)^\top \frac{1}{(2\pi t)^{d/2}}\exp\bigl(-{\|y-x\|_2^2}/{(2t)}\bigr) \frac{p_0(y)}{p_{t}(x)} \diff y \notag \\
	& = \frac1{t^2} \int_{y\in\bR^d} (y-x)(y-x)^\top p_{Z_0 \mid Z_t}(y \mid x) \diff y \notag \\
	&  = \frac1{t^2} \bE\big[(Z_0-Z_t)(Z_0-Z_t)^\top \mid Z_t = x\big]. \label{eq:Phi-p-exp}
\end{align} 
\end{subequations}
Combining \eqref{eq:g-p-exp} and \eqref{eq:Phi-p-exp} with the expression of the Jacobian $J_{t}(x)$ in \eqref{eq:Jacobian}, we can alternatively express it as
\begin{align}
	J_{t}(x) & = -\frac1t I_d + \frac{H_t(x)}{p_{t}(x)} - s_{t}(x)s_{t} = -\frac1t I_d + \frac{H_t(x)}{p_{t}(x)} - \frac{g_{t}(x)g_{t}(x)^\top}{p_t^2(x)}\label{eq:Jacobian-expression}.
\end{align}

\item 
To proceed, for some absolute constant $C_\cE>0$ large enough, we define the event
\begin{align}
	\cE_t(x) \defn \Bigg\{\big|\wh{p}_t(x)-p_t(x)\big| \leq C_\cE \biggl( \frac{\log n}{n(2\pi t)^{d/2}}+ \sqrt{\frac{p_{t}(x)\log n}{n(2\pi t)^{d/2}}} \biggr) \Bigg\} \label{eq:event-E},
\end{align}
for all $x\in\bR^d$ and $t>0$.
Finally, for each $t>0$, we denote the set
\begin{align}\label{eq:rho-t}
	\cF_t\defn\big\{x\in\bR^d\colon p_t(x)\geq c_\eta \eta_t \big\},
\end{align}
with $c_\eta \defn 4C_\cE \vee 2$ and threshold $\eta_t$ as defined in \eqref{eq:score-estimate}.
In short, the set $\cF_t$ represents the region where the density $p_{t}(x)$ of $Z_t$ is not vanishingly small. Note that $Z_t$ is $\sqrt{\sigma^2+t}$-subgaussian given  $Z_0\sim p^\star$ is $\sigma$-subgaussian by Assumption~\ref{assume:subgaussian}.
\end{itemize}
\paragraph*{Properties of the score estimator $\wh{s}_t$ for $Z_t$.}

Finally, let us examine the score estimator $\wh{s}_t(x)$ for $Z_t$ defined in \eqref{eq:score-estimate} and its associated Jacobian $J_{\wh{s}_t}(x)$, which exhibit different forms depending on the value of $\wh{p}_t(x)$.
\begin{itemize}
	\item For $\wh{p}_t(x) \geq \eta_t$, the score estimator satisfies $\wh{s}_t(x) = \gd\wh{p}_t(x)/\wh{p}_t(x)$ with Jacobian $J_{\wh{s}_t}(x)$ given by
\begin{subequations}\label{eq:Jacobian-score-est-expression}
	\begin{align}
	J_{\wh{s}_t}(x)
	 = -\frac1t I_d + \frac{\wh{H}_t(x)}{\wh{p}_t(x)} - \wh{s}_t(x)\wh{s}_t(x)^\top
	  = -\frac1t I_d + \frac{\wh{H}_t(x)}{\wh{p}_t(x)} - \frac{\gd\wh{p}_t(x)\gd\wh{p}_t(x)^\top}{\wh{p}_t^2(x)} \label{eq:J-1}.
\end{align}
\item For $\wh{p}_t(x) \leq \eta_t / 2$, both the score estimator and its Jacobian vanish, i.e., 
\begin{align}\label{eq:J-2}
	\wh{s}_t(x) = 0\qquad \text{ and } \qquad J_{\wh{s}_t}(x)=0.
\end{align}
\item For $\eta_t/2 < \wh{p}_t(x) < \eta_t$, the score estimator becomes $\wh{s}_t(x) = \gd\wh{p}_t(x)/\wh{p}_t(x) \psi(\wh{p}_t(x);\eta_t)$ with Jacobian given by
\begin{align}
	J_{\wh{s}_t}(x) 
	& = \biggl(-\frac1t I_d + \frac{\wh{H}_t(x)}{\wh{p}_t(x)} - \frac{\gd\wh{p}_t(x)\gd\wh{p}_t(x)^\top}{\wh{p}_t^2(x)}\biggr) \psi\bigl(\wh{p}_{t}(x);\eta_t\bigr)  + \frac{\gd\wh{p}_t(x)\gd\wh{p}_t(x)^\top}{\wh{p}_t(x)}\psi'\bigl(\wh{p}_{t}(x);\eta_t\bigr). \label{eq:J-3}
\end{align}
\end{subequations}
\end{itemize}

\subsection{Step 1: Convergence guarantee}

With the above preparation in place, let us begin the detailed analysis. We first present the convergence guarantee for the ODE-based sampling process \eqref{eq:DDIM} that relates the sampling error of the generated output to the quality of an arbitrary score estimator.


Let us begin by introducing several quantities that measure the accuracy of a given score estimator.
For any score estimator $\{\wh s_{X_k}(\cdot)\}_{k=1}^K$, define the $L^2$ score error and the mean Jacobian error for each $k\in[K]$:
\begin{subequations}\label{eq:error-score-jab-defn}
\begin{align}
	\veps^2_{\score,k}\defn \bE\Big[ \big\| \wh{s}_{X_k}(X_k)-s_{X_k}(X_k) \big\|_2^2 \Big]\quad \text{and} \quad \veps_{\jcb,k}\defn \bE\Big[ \big\| J_{\wh{s}_{X_k}}(X_k)-J_{s_{X_k}}(X_k) \big\| \Big],
\end{align}
where the expectation is over both the training data $\{X^{(i)}\}_{i=1}^n$ and $X_k$ in the forward process \eqref{eq:forward}. We then define the average errors over the $K$ iterations as
\begin{align}
	\veps_\score \defn \sqrt{\frac1K\sum_{k=1}^K(1-\ol\alpha_k)\veps^2_{\score,k}} \qquad \text{and} \qquad \veps_\jcb \defn  \frac1K\sum_{k=1}^K(1-\ol\alpha_k)\veps_{\jcb,k},
\end{align}
\end{subequations}
where $(\ol\alpha_k)_{k=1}^K$ are determined in the learning rate schedule \eqref{eq:learning-rate}.
With the notation in place, we now state the following theorem that captures the sampling quality of the ODE-based sampler \eqref{eq:DDIM} given an arbitrary score estimator.
The proof can be found in Appendix~\ref{sub:proof_of_theorem_ref_thm_convergence}.
\begin{theorem}\label{thm:convergence}
Suppose that the number of iterations satisfies $K\gtrsim d^2(\log K)^5$ and 
$K^{c_2}\geq \bE_{X_0\sim p_0^\star}[\|X_0\|_2^2]$ for some absolute constant $c_2>0$ and that the learning rates are chosen according to \eqref{eq:learning-rate} with $c_1/8-c_2/2\geq 1$.
Then for any score estimator $\{\wh s_{X_k}(\cdot)\}_{k=1}^K$ satisfying $J_{\wh{s}_{X_k}}(x)+(1-\ol\alpha_k)^{-1}I_d \psd0$ for all $x\in\bR^d$ and $k\in[K]$, the last iterate $Y_1$ in the sampling process \eqref{eq:DDIM} satisfies
	\begin{align}\label{eq:convergence}
		\bE\big[\TV(p_{X_1}, p_{Y_1})\big] &\lesssim \frac{d (\log K)^4}{K}+ \veps_\score\sqrt{d}\,(\log K)^{3/2}+\veps_{\jcb}d\log K.
	\end{align}	
Here, the expectation is taken over the training data $\{X^{(i)}\}_{i=1}^n$.
\end{theorem}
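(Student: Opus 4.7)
The plan decomposes the TV distance into three pieces---initialization, score/Jacobian estimation, and ODE discretization---and handles each via a density-level change-of-variables analysis.

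First, by the triangle inequality and the data-processing inequality applied along the deterministic sampler, I would telescope
\begin{align*}
\bE\big[\TV(p_{X_1}, p_{Y_1})\big] \leq \TV(p_{X_K}, p_{Y_K}) + \sum_{k=2}^K \bE\big[\TV(\Phi_k \sharp p_{X_k}, p_{X_{k-1}})\big],
\end{align*}
where $\Phi_k(y) \defn \frac{1}{\sqrt{\alpha_k}}\big(y + \frac{1-\alpha_k}{2}\wh s_{X_k}(y)\big)$ is the one-step update and $\sharp$ denotes pushforward. Lemma~\ref{lemma:mixing} combined with the hypothesis $c_1/8 - c_2/2 \geq 1$ dispatches the initialization term as $\lesssim 1/K$. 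For each summand, I would introduce the ideal map $\Phi_k^{\star}(y) \defn \frac{1}{\sqrt{\alpha_k}}\big(y + \frac{1-\alpha_k}{2} s_{X_k}(y)\big)$ built from the true score and split
\begin{align*}
\TV(\Phi_k \sharp p_{X_k}, p_{X_{k-1}}) \leq \underbrace{\TV(\Phi_k \sharp p_{X_k}, \Phi_k^{\star} \sharp p_{X_k})}_{(\mathrm{I})\text{ score/Jacobian}} + \underbrace{\TV(\Phi_k^{\star} \sharp p_{X_k}, p_{X_{k-1}})}_{(\mathrm{II})\text{ discretization}}.
\end{align*}

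The PSD assumption $J_{\wh s_{X_k}} + (1-\ol\alpha_k)^{-1} I_d \psd 0$ combined with $1-\alpha_k \lesssim (1-\ol\alpha_k)\log K/K$ from Lemma~\ref{lemma:step-size} guarantees that both $\Phi_k$ and $\Phi_k^{\star}$ are diffeomorphisms with Jacobians uniformly close to $\alpha_k^{-1/2} I_d$. For (I) I would apply the change-of-variables formula and Taylor-expand the log-density ratio
\begin{align*}
\log\tfrac{\mathrm d\,\Phi_k \sharp p_{X_k}}{\mathrm d\,\Phi_k^{\star} \sharp p_{X_k}}(y) = \big[\log p_{X_k}(\Phi_k^{-1}(y)) - \log p_{X_k}((\Phi_k^{\star})^{-1}(y))\big] + \log\tfrac{|\det J_{\Phi_k^{\star}}((\Phi_k^{\star})^{-1}(y))|}{|\det J_{\Phi_k}(\Phi_k^{-1}(y))|}
\end{align*}
around $(\Phi_k^{\star})^{-1}(y)$. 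The first bracket contributes $\lesssim (1-\alpha_k)\|s_{X_k}\|_2\,\|\wh s_{X_k} - s_{X_k}\|_2$ (using $\gd\log p_{X_k} = s_{X_k}$), while via $\log\det(I+hA) \approx h\tr A$ and $|\tr A| \leq d\|A\|$ the log-det term contributes $\lesssim (1-\alpha_k)\,d\,\|J_{\wh s_{X_k}} - J_{s_{X_k}}\|$. Converting log-ratios to TV pointwise, then Cauchy--Schwarz against $p_{X_k}$ with the Tweedie bound $\bE\|s_{X_k}(X_k)\|_2^2 \lesssim d/(1-\ol\alpha_k)$, and summing over $k$ under $1-\alpha_k \asymp (1-\ol\alpha_k)\log K/K$, produces exactly $\veps_\score\sqrt{d}(\log K)^{3/2}$ and $\veps_\jcb\,d\log K$, where the extra $\sqrt{\log K}$ in the score term arises from subgaussian tail-trimming of $X_k$.

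For (II), I would use that the continuous-time probability flow ODE exactly preserves marginals, so the one-step error reduces to a forward-Euler truncation. Taylor-expanding the drift $\frac{1}{2}\beta_t(y+s_t(y))$ in both $t$ and $y$ across the step, inserting into the change-of-variables formula, and observing that the leading term cancels by the continuous marginal-preservation identity, leaves a pointwise remainder of order $(1-\alpha_k)^2\big(\|J_{s_{X_k}}\|^2 + \|\partial_t s_t\|\big)$. With Tweedie-based pointwise bounds $\|J_{s_{X_k}}(x)\| \lesssim 1/(1-\ol\alpha_k)$ on a subgaussian tail-trimmed event, converting per step to TV and summing over $K$ iterations yields the claimed $d(\log K)^4/K$, where $(\log K)^4$ decomposes into $(\log K)^2$ from drift-derivative moments and $(\log K)^2$ from tail-trimming and aggregating $1-\alpha_k$ factors.

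The main obstacle is part~(II): unlike SDE-based analyses where Girsanov controls the path-level KL directly, the deterministic sampler has a singular path measure, forcing one to work on the density level through the continuity equation and exploit the nonlinear cancellation from continuous marginal-preservation---this is precisely what yields the $\wt O(1/K)$ rate rather than a naive $O(1/\sqrt K)$. A secondary technical challenge is that the PSD condition is needed not only for injectivity of $\Phi_k$ but also to bound the log-determinant Taylor expansion uniformly in $d$ by the single quantity $\veps_\jcb$, rather than by higher-order curvature quantities of $\wh s_{X_k}$ that would cost additional dimension factors.
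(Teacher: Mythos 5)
Your telescoping decomposition is legitimate (data processing under the deterministic update plus the triangle inequality is valid, and Lemma~\ref{lemma:mixing} with $c_1/8-c_2/2\geq 1$ does kill the initialization term), and your term (I) bookkeeping—score term via Cauchy--Schwarz against $\bE\|s_{X_k}(X_k)\|_2^2\lesssim d/(1-\ol\alpha_k)$ and log-det term via $\tr$, then summing with $1-\alpha_k\asymp(1-\ol\alpha_k)\log K/K$—does reproduce the $\veps_\score\sqrt{d}(\log K)^{3/2}$ and $\veps_\jcb d\log K$ terms. This is, however, a genuinely different route from the paper: the paper does not telescope per-step TV errors at all, but chains the density ratio $p_{X_k}(y_k)/p_{Y_k}(y_k)$ multiplicatively along the sampled trajectory, and its single key estimate (Lemma~\ref{lemma:density-ratio}) handles discretization, score error, and Jacobian error simultaneously by writing $p_{\sqrt{\alpha_k}X_{k-1}}(\phi_k(x))/p_{X_k}(x)$ as an explicit Gaussian integral over $p_{X_0\mid X_k}$ and extracting the cancellation from that discrete representation; no continuous-time flow, continuity equation, or Euler truncation analysis appears anywhere.

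The genuine gap is your term (II). You assert that after the "marginal-preservation cancellation" the one-step remainder is of order $(1-\alpha_k)^2(\|J_{s_{X_k}}\|^2+\|\partial_t s_t\|)$ and that this sums to $d(\log K)^4/K$, but this is exactly the step that carries the whole theorem, and as sketched it is not established. Comparing $\Phi_k^\star\sharp p_{X_k}$ with $p_{X_{k-1}}=\Psi_k\sharp p_{X_k}$ (with $\Psi_k$ the exact reverse-flow map) in TV requires, through the change of variables, both the displacement $\|\Phi_k^\star-\Psi_k\|$ and the log-determinant mismatch between $J_{\Phi_k^\star}$ and $J_{\Psi_k}$; the latter involves derivatives of $J_{s_t}$ along the flow and $\partial_t s_t$, i.e., third-order derivative information of $\log p_t$ (note $\partial_t s_t=\tfrac12(\nabla\tr J_t+2J_t s_t)$ for the variance-exploding parametrization), for which you provide no bound—pointwise on typical sets or in expectation—and this is precisely where dimension factors can be lost, since these quantities enter through traces. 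The paper never needs such third-derivative control: its Gaussian-integral computation in \eqref{eqn:fei}--\eqref{eq:refined-p-ratio-X0-Xt-135} only requires conditional-moment (Tweedie-type) bounds such as \eqref{eqn:brahms}--\eqref{eqn:brahms2} and the identity \eqref{eq:phi-star-gd-frob}. Relatedly, your linearizations (of the log-density difference and of $\log\det(I+\tfrac{1-\alpha_k}{2}J_{\wh s_{X_k}})$, where the PSD hypothesis only lower-bounds the spectrum) are only valid off a bad set where the pointwise score/Jacobian errors or $\|J_{\wh s_{X_k}}\|$ are large; handling that set is where the paper's typical-set machinery (the sets $\cE_c^\typ$ and the conditions on $\veps_{\score,k}(x),\veps_{\jcb,k}(x)$ in Lemma~\ref{lemma:density-ratio}) does real work, and your "tail-trimming" remark would need to be fleshed out into an argument of comparable weight. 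Until the term (II) estimate is actually derived with explicit control of the flow-Jacobian comparison and its $d$-dependence, the proposal does not yet constitute a proof of \eqref{eq:convergence}.
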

In essence, Theorem~\ref{thm:convergence} bounds the TV distance between the distribution of $Y_1$ generated by the sampling process \eqref{eq:DDIM} and that of ${X_1}$ in the forward process \eqref{eq:forward}. The first term corresponds to the discretization error as the sampling process \eqref{eq:DDIM} can be interpreted as the time-discreization of the probability flow ODE \eqref{eq:ODE}; the second term reflects the quality of score matching in terms of the $L^2$ score error and mean Jacobian error. 

The convergence rate $\wt O(d/K)$ achieved here is state-of-the-art among all standard score-based samplers (including SDE-based samplers) \citep{benton2023linear,li2024d,li2024sharp}.
If the iteration number satisfies $K=\wt \Omega(\sqrt{d}/\veps_\score + 1/\veps_\jcb)$, then the sampling error scales as 
$$\wt O\big(\veps_\score\sqrt{d}+\veps_{\jcb}d\big).$$

It is crucial to highlight that our convergence theory is established only under a finite second-moment condition of the initial distribution $p^\star_0$ of the forward process, and does not require smoothness or functional inequality assumptions.
The condition $K^{c_2}\geq \bE[\|X_0\|_2^2]$ requires the second moment to be at most polynomially large in the iteration number $K$, which accommodates extremely large second moments as $c_2$ can be arbitrarily large and $K$ typically increases polynomially with dimension $d$ and sample size $n$. We formulate the condition in terms of $K$ to express our convergence guarantees more concisely. In addition, the assumption $J_{\wh{s}_{X_k}}+(1-\ol\alpha_k)^{-1}I_d \psd0$ guarantees the invertibility of the deterministic mapping in update rule \eqref{eq:DDIM}, serving as an essential component in the analysis for the TV distance bound that aims to control the density ratio between $X_1$ and $Y_1$.

\begin{remark}
Our convergence theory refines the framework developed in \citet{li2024sharp}, which directly controls the TV distance instead of resorting to technical tools such as the Girsanov theorem. 
The definitions of $\veps_\score$ and $\veps_\jcb$ in \eqref{eq:error-score-jab-defn} include extra factors $1-\ol\alpha_k$ compared to the result in \citet[Theorem 1]{li2024sharp}. Furthermore, the additional assumption $J_{\wh{s}_{X_k}}+(1-\ol\alpha_k)^{-1}I_d \psd0$ ensures the mapping in the sampling process is invertible. These critical refinements yield the improved error bound \eqref{eq:convergence} and are fundamental to obtaining the minimax rate.
Similarly, to establish the minimax optimality for stochastic samplers using convergence guarantees developed in \citet{li2024d,li2024provable}, the error bounds therein would require analogous refinements.
\end{remark}

\subsection{Step 2: Score estimation guarantee}

Equipped with Theorem~\ref{thm:convergence}, we now turn to bounding the estimation errors---$\veps_\score$ and $\veps_\jcb$ defined in \eqref{eq:error-score-jab-defn}---of the proposed score estimator $\{\wh{s}_{X_k}(\cdot)\}_{k=1}^K$ in \eqref{eq:score-estimator-X}.

Our main result in this direction is the following theorem, whose proof can be found in Appendix~\ref{sub:proof_of_theorem_ref_thm_score_error_x}.
\begin{theorem}\label{thm:score-error-X}
Assume the target distribution $p^\star$ satisfies Assumptions~\ref{assume:subgaussian} and the learning rates are chosen according to \eqref{eq:learning-rate}. 
Then the score errors $\veps_\score$ and $\veps_\jcb$ (cf.\ \eqref{eq:error-score-jab-defn}) of the score estimator $\{\wh{s}_{X_k}(\cdot)\}_{k=1}^K$ constructed in \eqref{eq:score-estimator-X} satisfy
\begin{subequations}\label{eq:score-Jab-error-X}
\begin{align}
\veps_\score^2 
& \lesssim \frac{C_d'}{n}\bigg\{1 + \sigma^d \bigg(\tau^{-d/2} \wedge \frac{K^{c_0d/2}}{d\log K} \bigg)\bigg\}(\log n)^{d/2+1}  \label{eq:score-error-X};\\ 
\veps_\jcb &
\lesssim \sqrt{\frac{C_d'}n} \biggl\{ 1 + \sigma^{d/2} \bigg(\tau^{-d/4} \wedge \frac{K^{c_0d/4}}{d\log K}\bigg)  \biggr\} (\log n)^{d/4+1} + \frac{C_d'\sigma^d}n\bigg(\tau^{-d/2}\wedge \frac{K^{c_0d/2}}{d\log K}\bigg)(\log n)^{d/2+1}
  \label{eq:Jacobian-error-X},
\end{align}
\end{subequations}
where $C_d' = (4\sqrt{2}/\sqrt{\pi})^d$.
In addition, for all $x\in\bR^d$ and $k\in[K]$, the Jacobian $J_{\wh{s}_{X_k}}(x)$ of the score estimator $\wh{s}_{X_k}$ satisfies
\begin{align}
	J_{\wh{s}_{X_k}}(x)+\frac{1}{1-\ol\alpha_k}I_d \psd 0. \label{eq:PSD-cond}
\end{align}
\end{theorem}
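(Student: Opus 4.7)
The plan is to reduce the $X_k$-level estimates to corresponding ones for the variance-exploding process $(Z_t)$. By the scaling relation \eqref{eq:score-relationship} together with $X_k\disteq\sqrt{\ol\alpha_k}Z_{t_k}$, I obtain $\veps_{\score,k}^2 = \ol\alpha_k^{-1}\bE\bigl\|\wh s_{t_k}(Z_{t_k})-s_{t_k}(Z_{t_k})\bigr\|_2^2$ and an analogous identity for the Jacobian error, so $(1-\ol\alpha_k)\veps_{\score,k}^2 \leq t_k\,\bE\bigl\|\wh s_{t_k}(Z_{t_k})-s_{t_k}(Z_{t_k})\bigr\|_2^2$ since $t_k = (1-\ol\alpha_k)/\ol\alpha_k + \tau$. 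Thus the task reduces to pointwise-in-$t$ score and mean Jacobian error bounds for the estimator $\wh s_t$, averaged via the learning-rate schedule. For the PSD condition \eqref{eq:PSD-cond}, the same scaling combined with $t_k^{-1}\leq \ol\alpha_k/(1-\ol\alpha_k)$ reduces it to $J_{\wh s_t}(x)+t^{-1}I_d \psd 0$. I handle the three cases in \eqref{eq:Jacobian-score-est-expression} separately: the key algebraic observation is that $\wh H_t/\wh p_t - \gd\wh p_t\gd\wh p_t^\top/\wh p_t^2$ is the weighted covariance of the vectors $(X^{(i)}-x)/t$ with positive weights $\vphi_t(X^{(i)}-x)$ and hence PSD, while the soft-threshold \eqref{eq:soft-thre-defn} satisfies $\psi\in[0,1]$ and $\psi'\geq 0$. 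Adding $t^{-1}I_d$ to \eqref{eq:J-1}--\eqref{eq:J-3} therefore yields a nonnegative combination of PSD matrices; in the transition case \eqref{eq:J-3} this reads as $(1-\psi)t^{-1}I_d + \psi\cdot(\text{weighted covariance}) + \psi'\cdot\gd\wh p_t\gd\wh p_t^\top/\wh p_t\psd 0$.

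For the $L^2$ score error I will partition space via the high-density set $\cF_t$ from \eqref{eq:rho-t} and the concentration event $\cE_t(x)$ from \eqref{eq:event-E}, which holds with high probability by Bernstein's inequality applied to the i.i.d.\ sum defining $\wh p_t(x)$. On $\cF_t\cap\cE_t$, one has $\wh p_t\asymp p_t\gtrsim \eta_t$, so $\psi(\wh p_t;\eta_t)=1$ and $\wh s_t = \gd\wh p_t/\wh p_t$. I then apply the plug-in decomposition
\[\frac{\gd\wh p_t}{\wh p_t} - \frac{\gd p_t}{p_t} = \frac{\wh g_t - g_t}{\wh p_t} + \frac{g_t(p_t - \wh p_t)}{p_t\wh p_t},\]
and Bernstein-type concentration on both $\wh p_t - p_t$ and $\wh g_t - g_t$ gives pointwise control of order $\eta_t/p_t(x)$. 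Integrating against $p_t$ and using that $Z_t$ is $\sqrt{\sigma^2+t}$-subgaussian (so $p_t$ is essentially supported in a ball of radius $\lesssim\sqrt{(\sigma^2+t)\log n}$) produces the factor $\eta_t\cdot(\sigma^2+t)^{d/2}\asymp \sigma^d\log n/(n\,t^{d/2})$. On $\cF_t^c$, the density is bounded by $c_\eta\eta_t$, so I bound $\int_{\cF_t^c}\|s_t\|_2^2\,p_t$ using Tweedie's identity \eqref{eq:score-expression} and subgaussian conditional-moment estimates, and $\int_{\cF_t^c}\|\wh s_t\|_2^2\,p_t$ using that $\wh p_t\geq\eta_t/2$ on $\{\wh s_t\neq 0\}$. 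Weighting each $k$-term by $t_k$ and summing, the minimum $\tau^{-d/2}\wedge K^{c_0d/2}/(d\log K)$ in \eqref{eq:score-error-X} emerges because the dominant noise scale is $t_k \in [\tau,\,K^{-c_0}+\tau]$ at the small end of the schedule.

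The Jacobian bound proceeds via the same region split, now using the representation \eqref{eq:Jacobian-expression} for $J_{s_t}$ and its empirical counterpart from \eqref{eq:Jacobian-score-est-expression}. The new difficulty is the transition region $\eta_t/2<\wh p_t<\eta_t$, where $\psi'(\wh p_t;\eta_t)\cdot \gd\wh p_t\gd\wh p_t^\top/\wh p_t$ is active; by construction $|\psi'|\lesssim \eta_t^{-1}$, and weighted Cauchy--Schwarz gives $\|\gd\wh p_t\|_2^2/\wh p_t\lesssim \wh p_t/t$, so the term is uniformly controlled and integrates to a small quantity since the transition region carries small $p_t$-mass. Elsewhere the Jacobian difference expands into a linear part ($\wh H_t/\wh p_t - H_t/p_t$) and a quadratic part (differences of $g g^\top/p^2$), which after Bernstein bounds split into an $n^{-1/2}$ ``single-deviation'' contribution matching the first summand of \eqref{eq:Jacobian-error-X}, plus an $n^{-1}$ ``double-deviation'' residual matching the second summand. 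The main obstacle is propagating the concentration \emph{uniformly} over $\cF_t$ rather than only pointwise, because the plug-in expressions contain $1/\wh p_t$ and $1/\wh p_t^2$ factors that would blow up on any sub-region where $\wh p_t$ drifts much below $p_t$; a net-plus-union-bound argument on a polynomially fine grid handles this, and the $\log n$ factors accumulated across the net produce precisely the $(\log n)^{d/2+1}$ and $(\log n)^{d/4+1}$ exponents in \eqref{eq:score-Jab-error-X}.
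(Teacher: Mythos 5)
Your skeleton coincides with the paper's: the scaling reduction $X_k\disteq\sqrt{\ol\alpha_k}Z_{t_k}$, the pointwise-in-$t$ score and Jacobian error bounds obtained by splitting over $\cF_t$ and $\cE_t(x)$ (these are exactly Propositions~\ref{thm:score-error}--\ref{thm:Jacobian-error}), and the proof of \eqref{eq:PSD-cond} via the observation that $\wh H_t/\wh p_t-\gd\wh p_t\gd\wh p_t^\top/\wh p_t^2$ is a weighted covariance together with $\psi\in[0,1]$, $\psi'\geq 0$, are all the paper's route. However, one step needed for the theorem \emph{as stated} is missing: the term $K^{c_0 d/2}/(d\log K)$ in \eqref{eq:score-error-X}--\eqref{eq:Jacobian-error-X} does not simply ``emerge'' from the smallest noise scale $t_k$. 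After the reduction you must bound the schedule averages $\frac1K\sum_{k}\bigl(\ol\alpha_k/(1-\ol\alpha_k)\bigr)^{d/2}$ and $\frac1K\sum_{k}\bigl(\ol\alpha_k/(1-\ol\alpha_k)\bigr)^{d/4}$; bounding the sum by its largest term only yields $K^{c_0 d/2}$ and loses the $1/(d\log K)$ gain claimed in \eqref{eq:score-Jab-error-X}. The paper obtains this factor by exploiting the recursion \eqref{eq:learning-rate} (via Lemma~\ref{lemma:step-size}) to compare the sum with $\int x^{d/2-1}(1-x)^{-d/2-1}\diff x$, which evaluates to $\frac2d\bigl(x/(1-x)\bigr)^{d/2}$; some argument of this kind has to be supplied and is absent from your plan.

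The second issue is your final step for the Jacobian bound: uniform concentration over $\cF_t$ via an $\varepsilon$-net is neither needed nor the source of the logarithmic factors. The quantities $\veps_\score,\veps_\jcb$ in \eqref{eq:error-score-jab-defn} are expectations over both the training data and $X_k$, so by Fubini everything reduces to pointwise-in-$x$ bounds under the single-point event $\cE_t(x)$, with the complement handled by $\bP(\cE_t^\setc(x))\lesssim n^{-10}$ together with crude deterministic bounds on $\|\wh s_t\|_2$ and $\|J_{\wh s_t}\|$ (available because $\wh p_t\geq\eta_t/2$ wherever the estimator is nonzero). The exponents $(\log n)^{d/2+1}$ and $(\log n)^{d/4+1}$ come from the volume bound $|\cF_t|\lesssim (t^{d/2}+\sigma^d)(\log n)^{d/2}$ (the subgaussian effective support) combined with $\|s_t(x)\|_2^2\lesssim \log n/t$ and $\|H_t(x)\|/p_t(x)\lesssim \log n/t$ on $\cF_t$, not from a union bound over a grid. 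If you did insist on a net of mesh $1/\mathsf{poly}(n)$ over a ball of radius $\asymp\sqrt{(\sigma^2+t)\log n}$, the union bound would inflate the deviation level (and hence the effective density threshold) by a factor on the order of $d$, jeopardizing the explicit $C_d'=(4\sqrt2/\sqrt\pi)^d$ and $\sigma^d$ dependence that the theorem asserts with absolute implied constants. Replacing that step with the pointwise, expectation-based treatment (and adding the $\cE_t^\setc$ term explicitly) brings your argument in line with the paper.
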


In a nutshell, Theorem~\ref{thm:score-error-X} bounds both the $L^2$ score error $\veps_\score$ and the mean Jacobian error $\veps_\jcb$ in terms of the iteration number $K$, training sample size $n$, dimension $d$, and subgaussian parameter $\sigma$ of the target distribution $p^\star$. In addition, it ensures the positive semidefinite requirement on the Jacobian $J_{\wh{s}_{X_k}}+(1-\ol\alpha_k)^{-1}I_d$ needed in Theorem~\ref{thm:convergence}.

We note that these score estimation guarantees depend on the target distribution solely through the subgaussian property (Assumption~\ref{assume:subgaussian}) and are independent of the H\"older smooth condition (Assumption~\ref{assume:smooth}). Indeed, since each distribution $p_{X_k}$ in the forward process is obtained by convolving the target distribution $p^\star$ with a Gaussian distribution, it naturally acquires the smoothness properties. Even if the score function of $p^\star$ may not be well-defined or grow unbounded, we can still faithfully estimate the sufficiently regular score functions of $p_{X_k}$ for $k\geq 1$.

\paragraph*{Roadmap for the proof of Theorem~\ref{thm:score-error-X}.}
Now, we discuss the proof strategy for Theorem~\ref{thm:score-error-X}. 
 Recall the reduction argument in Section~\ref{sub:preliminaries}.
 The distributional relationship $X_k \disteq \sqrt{\ol\alpha_k}\,Z_{t_k}$ with $t_k= ({1-\ol\alpha_k})/{\ol\alpha_k}+\tau$ allows us to focus on establishing estimation guarantees of the score estimators $\wh{s}_t$ (constructed in \eqref{eq:score-estimate}) for $Z_t$ in the variance exploding process \eqref{eq:SDE-VE}.

We first bound the $L^2$ estimation error of $\wh{s}_t$ for any $t>0$ in the following proposition. The proof is provided in Appendix~\ref{sub:proof_of_lemma_ref_lemma_score-error}.
\begin{proposition}\label{thm:score-error}
Assume the target distribution $p^\star$ satisfies Assumption~\ref{assume:subgaussian}. For any $t>0$,
	\begin{align}
\bE\Big[ \big\| \wh{s}_{t}(Z_t)-s_{t}(Z_t) \big\|_2^2 \Big] \leq 
\frac{C_d}{n}\biggl(\frac{1}{t} + \frac{\sigma^d}{t^{d/2+1}}\biggr)(\log n)^{d/2+1}, \label{eq:score-error}
\end{align}
where $C_d = (4/\sqrt{\pi})^d$ and the expectation is taken over the training data $\{X^{(i)}\}_{i=1}^n$ and $Z_t$.
\end{proposition}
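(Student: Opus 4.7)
The plan is to bound $\bE\|\wh{s}_t(Z_t)-s_t(Z_t)\|_2^2$ by partitioning the state space according to the density level---the set $\cF_t = \{x\in\bR^d : p_t(x) \geq c_\eta \eta_t\}$ versus its complement---and within each region conditioning on the concentration event $\cE_t(x)$ defined in \eqref{eq:event-E}. Three ingredients will be central: (i) a Bernstein-type concentration bound establishing that $\cE_t(x)^{\setc}$ has probability at most $n^{-c}$ for arbitrarily large $c$; (ii) the Gaussian-squared identity $\vphi_t(u)^2 = (4\pi t)^{-d/2}\vphi_{t/2}(u)$, which converts expectations of squared kernels into expectations of linear (rescaled) kernels; and (iii) the $\sqrt{\sigma^2+t}$-subgaussian property of $Z_t$, inherited from Assumption~\ref{assume:subgaussian} through convolution with $\cN(0,tI_d)$.

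For the high-density region $\cF_t$, the choice $c_\eta \defn 4C_\cE \vee 2$ ensures that on $\cE_t(x)$ we have $\wh{p}_t(x) \geq \eta_t$, so $\psi(\wh{p}_t(x);\eta_t) = 1$ and $\wh{s}_t(x) = \wh{g}_t(x)/\wh{p}_t(x)$. I will use the algebraic decomposition
\begin{align*}
\frac{\wh{g}_t}{\wh{p}_t}-\frac{g_t}{p_t} = \frac{\wh{g}_t-g_t}{\wh{p}_t} - \frac{g_t}{p_t}\cdot\frac{\wh{p}_t-p_t}{\wh{p}_t},
\end{align*}
together with $\wh{p}_t \geq p_t/2$ on $\cF_t \cap \cE_t(x)$, to obtain $\|\wh{s}_t-s_t\|_2^2 \lesssim \|\wh{g}_t-g_t\|_2^2/p_t^2 + \|s_t\|_2^2 (\wh{p}_t-p_t)^2/p_t^2$. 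After taking expectation over the data, the variance bounds $\var(\wh{p}_t(x)) \lesssim p_{t/2}(x)/(n(2\pi t)^{d/2})$ and $\bE\|\wh{g}_t(x)-g_t(x)\|_2^2 \lesssim p_{t/2}(x)/(nt(2\pi t)^{d/2})$---both direct consequences of the Gaussian-squared identity applied to the i.i.d.\ sum together with $\bE_{Z_0}[\vphi_{t/2}(Z_0 - x)] = p_{t/2}(x)$---control each piece. Integrating against $p_t(x)\diff x$ yields a contribution of the target order after using the subgaussian concentration of $Z_t$ to truncate to a ball of radius $\sqrt{C(\sigma^2+t)d\log n}$.

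For the low-density region $\cF_t^{\setc}$, I bound $\|\wh{s}_t-s_t\|_2^2 \lesssim \|\wh{s}_t\|_2^2 + \|s_t\|_2^2$ and treat each term separately. The true-score contribution reduces via Tweedie to $\int_{\{p_t<c_\eta\eta_t\}}\|g_t(x)\|_2^2/p_t(x)\diff x$, which I control by expressing $g_t$ through its posterior-mean representation \eqref{eq:g-p-exp}, invoking Jensen's inequality against $p_t = p^\star \ast \vphi_t$, and using the subgaussian tail of $p^\star$. For the estimator contribution, the soft-threshold structure guarantees $\wh{s}_t = 0$ whenever $\wh{p}_t \leq \eta_t/2$, so any nonzero contribution requires the event $\{\wh{p}_t > \eta_t/2\}\cap\{p_t < c_\eta \eta_t\}$---which is empty on $\cE_t(x)$ and whose complement has probability at most $n^{-c}$ by Bernstein. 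The worst-case norm $\|\wh{s}_t\|_2 \leq 2\|\wh{g}_t\|_2/\eta_t$ multiplied by this tail probability produces a contribution that is absorbed by the main term.

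The main obstacle will be carefully tracking the $(\log n)^{d/2+1}$ factor through the integrations. Specifically, when integrating $p_{t/2}(x)/p_t(x)$ against $p_t(x)$, the ratio can become unbounded in the tails, forcing truncation of the integration domain using the subgaussianity of $Z_t$; each such truncation contributes one factor of $\log n$. Combined with the Bernstein threshold $\eta_t \asymp \log n/(n(2\pi t)^{d/2})$ and the $d/2$ factors arising from the dimensional dependence in the Gaussian-squared identity, this produces the claimed $(\log n)^{d/2+1}$. Ensuring that these logarithmic losses do not compound into polynomial factors---particularly in the low-density analysis, where the bound on $\|s_t(Z_t)\|_2^2$ must simultaneously absorb the subgaussian tail and the thresholding boundary---is the most delicate piece of the argument.
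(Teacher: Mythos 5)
Your overall plan (the three-way split into $\cF_t\cap\cE_t$, $\cF_t^{\setc}\cap\cE_t$, and $\cE_t^{\setc}$, with Bernstein for the off-event part and the subgaussianity of $Z_t$ to control volumes/tails) is the paper's decomposition, and your high-density analysis is essentially the paper's $\chi_1$ step: the exact identity $\vphi_t^2=(4\pi t)^{-d/2}\vphi_{t/2}$ giving $p_{t/2}$ in place of the paper's cruder bound with $p_t$ is harmless (e.g.\ $\vphi_{t/2}\leq 2^{d/2}\vphi_t$ pointwise, or a tail split), and your ball truncation plays the role of the paper's Lemma~\ref{lemma:high-density-set-bound}. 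The gap is in the low-density region, in your treatment of the estimator term. You claim that a nonzero contribution of $\wh s_t$ on $\cF_t^{\setc}$ requires the event $\{\wh p_t>\eta_t/2\}\cap\{p_t<c_\eta\eta_t\}$, and that this event ``is empty on $\cE_t(x)$'' so that it only occurs with probability $n^{-c}$. That is false. The low-density region $\cF_t^{\setc}=\{p_t<c_\eta\eta_t\}$ with $c_\eta=4C_\cE\vee 2$ contains a whole band of points with, say, $p_t(x)\in[\eta_t,\,c_\eta\eta_t)$; at such a point $\bE[\wh p_t(x)]=p_t(x)\geq\eta_t$ and the fluctuation on $\cE_t(x)$ is of order $\eta_t\sqrt{\log n}$-free scale $\sqrt{p_t\eta_t/\log n}\cdot\sqrt{\log n}\lesssim\eta_t$, so $\wh p_t(x)>\eta_t/2$ holds \emph{with probability close to one}, simultaneously with $\cE_t(x)$. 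Consequently the estimator is generically nonzero there, and your strategy of multiplying the worst-case norm $\|\wh s_t\|_2\lesssim\|\wh g_t\|_2/\eta_t\lesssim n/(\sqrt t\log n)$ by an $n^{-c}$ probability is not available; taken literally it would leave a contribution of order $n^2$ times an order-one probability, which destroys the bound.

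What is actually needed on $\cF_t^{\setc}$ (and what the paper does in its $\chi_2$ step) is a size argument rather than a rarity argument: on $\{\wh p_t\geq\eta_t/2\}$ one bounds
\begin{align*}
\bE\Big[\|\wh s_t(x)\|_2^2\,\ind\{\wh p_t(x)\geq\eta_t/2\}\Big]
\;\lesssim\;\frac{1}{\eta_t^2}\Big(\bE\big[\|\wh g_t(x)-g_t(x)\|_2^2\big]+\|g_t(x)\|_2^2\Big)
\;\lesssim\;\frac{1}{t\log n}+\|s_t(x)\|_2^2,
\end{align*}
using $\bE\|\wh g_t-g_t\|_2^2\lesssim p_t/(n(2\pi t)^{d/2}t)$ (Lemma~\ref{lemma:MSE-density}), $g_t=p_t s_t$, and crucially $p_t(x)<c_\eta\eta_t$ on $\cF_t^{\setc}$; the $1/n$ saving then comes from integrating this order-one bound against $p_t$ over the low-density region, i.e.\ from $\int_{\cF_t^{\setc}}p_t(x)\diff x$ and $\int_{\cF_t^{\setc}}\|s_t(x)\|_2^2\,p_t(x)\diff x$ being of order $n^{-1}(\log n)^{d/2+O(1)}(1+\sigma^d t^{-d/2})$, which is established by splitting $\cF_t^{\setc}$ into a box of side $\asymp\sqrt{(\sigma^2+t)\log n}$ (where $p_t<c_\eta\eta_t$ makes the mass small) and the subgaussian tail (the paper's Lemma~\ref{lemma:prob-small-density}). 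Your sketch contains the ingredients for the true-score half of this (the $\int_{\cF_t^{\setc}}\|g_t\|_2^2/p_t$ term), but the estimator half as proposed would fail; replacing the ``empty on $\cE_t(x)$'' claim with the ratio bound above and the low-mass integrals closes the gap.
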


In addition, the distance between the Jacobian matrix of the estimator $\wh s_t$ and that of the true score $s_t$ is also controlled by the following proposition. The detailed proof is deferred to Appendix~\ref{sub:proof_of_lemma_ref_lemma_Jacobian-error}.
\begin{proposition}\label{thm:Jacobian-error}
Assume the target distribution $p^\star$ satisfies Assumptions~\ref{assume:subgaussian}. For any $t>0$,
\begin{align}
\bE\Big[ \big\| J_{\wh{s}_{t}}(Z_t)-J_{t}(Z_t) \big\| \Big]
& \lesssim \sqrt{\frac{C_d}{n}} \bigg(\frac1t + \frac{\sigma^{d/2}}{t^{d/4+1}}\bigg) (\log n )^{d/4+1} + \frac{C_d}n\bigg(\frac1t+\frac{\sigma^d}{t^{d/2+1}}\bigg) (\log n)^{d/2+2} \label{eq:Jacobian-error}
\end{align}
where $C_d = (4/\sqrt{\pi})^d$ and the expectation is taken over the training data $\{X^{(i)}\}_{i=1}^n$ and $Z_t$.
\end{proposition}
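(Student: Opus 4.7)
The plan is to proceed in close parallel to the proof of Proposition~\ref{thm:score-error}, but with more careful bookkeeping because the Jacobian involves second-order quantities ($\wh H_t$ and the outer product $\wh g_t\wh g_t^\top$) that must be controlled in operator norm rather than $L^2$. Concretely, I would partition $\bR^d$ based on the three regimes of $\wh p_t(x)$ in \eqref{eq:Jacobian-score-est-expression} and cross this with the high-density set $\cF_t$ in \eqref{eq:rho-t} and the concentration event $\cE_t(x)$ in \eqref{eq:event-E}. With $c_\eta=4C_\cE\vee 2$ and $\eta_t=\log n/(n(2\pi t)^{d/2})$, on $\cF_t\cap\cE_t(x)$ one obtains $|\wh p_t(x)-p_t(x)|\le p_t(x)/2$, hence $\wh p_t(x)\ge p_t(x)/2\ge 2\eta_t$, placing us in the plug-in regime \eqref{eq:J-1}. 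Outside $\cF_t$ the true density $p_t$ is small (and $Z_t$ is $\sqrt{\sigma^2+t}$-subgaussian by Assumption~\ref{assume:subgaussian}), which I will use to kill the contributions coming from the truncation and bump regimes \eqref{eq:J-2} and \eqref{eq:J-3}.

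On the plug-in regime I would use the algebraic decomposition
\begin{align*}
J_{\wh s_t}(x)-J_t(x) &= \frac{\wh H_t(x)-H_t(x)}{\wh p_t(x)}+\frac{H_t(x)\bigl(p_t(x)-\wh p_t(x)\bigr)}{\wh p_t(x)\,p_t(x)}\\
&\quad -\frac{\wh g_t(x)\wh g_t(x)^\top-g_t(x)g_t(x)^\top}{\wh p_t(x)^2}-\frac{g_t(x)g_t(x)^\top\bigl(p_t(x)^2-\wh p_t(x)^2\bigr)}{\wh p_t(x)^2\,p_t(x)^2},
\end{align*}
and control each piece using: the scalar Bernstein bound for $\wh p_t-p_t$ already developed for Proposition~\ref{thm:score-error} (which produces the event $\cE_t(x)$); a vector Bernstein bound for $\wh g_t-g_t$ with per-summand bound $\|(X^{(i)}-x)\vphi_t(X^{(i)}-x)/t\|_2\lesssim t^{-1/2}(2\pi t)^{-d/2}$ and variance proxy $\lesssim p_t(x)\,t^{-1}(2\pi t)^{-d/2}$; and a matrix Bernstein bound for $\wh H_t-H_t$ with per-summand operator norm $\lesssim t^{-1}(2\pi t)^{-d/2}$ and variance proxy $\lesssim p_t(x)\,t^{-2}(2\pi t)^{-d/2}$. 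Combined with the posterior-type bounds $\|g_t(x)/p_t(x)\|_2\lesssim 1/\sqrt{t}$ and $\|H_t(x)/p_t(x)\|\lesssim 1/t$ derived from \eqref{eq:g-p-exp}--\eqref{eq:Phi-p-exp} and subgaussianity, this gives a pointwise bound on $\cF_t\cap\cE_t(x)$ whose leading square-root term is of order $\sqrt{\log n/(n t^2(2\pi t)^{d/2}p_t(x))}$. Integrating against $p_t(x)\diff x$ then reduces to estimating $\int_{\cF_t}\sqrt{p_t(x)}\diff x$, which the subgaussian tail of $Z_t$ controls by $\lesssim (\sigma^2+t)^{d/4}(\log n)^{d/4}$; this produces the leading $\sqrt{1/n}$ term in \eqref{eq:Jacobian-error}, while the lower-order Bernstein deviation $\log n/n$ produces the $1/n$ contribution.

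For $x\in\cF_t^\setc$, both $\|J_t(x)\|\lesssim 1/t$ and $\|J_{\wh s_t}(x)\|$ admit uniform $O(1/t)$ bounds across the three regimes \eqref{eq:J-1}--\eqref{eq:J-3} after standard manipulations, and the contribution to the expectation is bounded by $\int_{\cF_t^\setc}p_t(x)\diff x$ times this $1/t$ factor, where the integral is controlled by combining the pointwise bound $p_t\le c_\eta\eta_t$ on $\cF_t^\setc$ with the subgaussian tail of $Z_t$ to yield a quantity that folds into the $1/n$ term. Similarly, the rare event $\cE_t(x)^\setc$ has probability $\lesssim n^{-c}$ via Bernstein and contributes negligibly. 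The main obstacle I foresee is the transition regime $\eta_t/2<\wh p_t(x)<\eta_t$ in \eqref{eq:J-3}: the derivative $\psi'(\wh p_t;\eta_t)$ scales like $1/\eta_t$ and the companion matrix $\wh g_t\wh g_t^\top/\wh p_t$ can be as large as $t^{-1}(2\pi t)^{-d/2}$, catastrophically large for small $t$. The rescue is that this event is contained in $\cF_t^\setc\cup\cE_t(x)^\setc$ under the calibration $c_\eta=4C_\cE\vee 2$, so its contribution is attenuated by the small $p_t$-mass of that set; carrying out this cancellation sharply enough to remain below the stated rate is the most delicate accounting in the proof, and it is precisely the purpose of the chosen threshold $\eta_t=\log n/(n(2\pi t)^{d/2})$ together with the smoothness built into $\psi$ via \eqref{eq:soft-thre-defn}.
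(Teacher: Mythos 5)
Your overall architecture matches the paper's: the same three-way split into $\cF_t\cap\cE_t(x)$, $\cF_t^\setc\cap\cE_t(x)$, and $\cE_t^\setc(x)$, the same observation that on $\cF_t\cap\cE_t(x)$ one has $\wh p_t(x)\ge p_t(x)/2\ge\eta_t$ so only the plug-in regime \eqref{eq:J-1} is active there, the same algebraic decomposition of $J_{\wh s_t}-J_t$ into an $\wh H_t$-error piece, a density-error piece, and a score outer-product piece controlled by the moment bounds of Lemma~\ref{lemma:MSE-density}, and the same reduction of the leading term to $\int_{\cF_t}\sqrt{p_t}$ via Cauchy--Schwarz and the volume bound on $\cF_t$. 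Up to that point your plan reproduces the paper's $\xi_1$ and $\xi_3$ analysis faithfully, and your observation that the truncation and bump regimes are confined to $\cF_t^\setc\cup\cE_t^\setc(x)$ is exactly the paper's mechanism.

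The genuine gap is your treatment of $\cF_t^\setc$. You assert that ``both $\|J_t(x)\|\lesssim 1/t$ and $\|J_{\wh s_t}(x)\|$ admit uniform $O(1/t)$ bounds across the three regimes after standard manipulations,'' and then multiply by $\int_{\cF_t^\setc}p_t$. Neither bound is true. For the true Jacobian, \eqref{eq:Jacobian-expression} together with the argument behind \eqref{eq:jcb-UB-log} only gives $\|J_t(x)\|\lesssim \frac1t\log\frac{1}{(2\pi t)^{d/2}p_t(x)}$, which blows up as $p_t(x)\to 0$ on $\cF_t^\setc$; the paper has to integrate this logarithm against $p_t$ using the monotonicity of $u\mapsto u\log(1/u)$ on a box $\cB$ plus a Cauchy--Schwarz/moment argument (Lemma~\ref{lemma:score-lp-norm}) for the far tail, which is precisely the content of \eqref{eq:prob-small-jacobian} and costs the extra $\log n$ factor in the $(\log n)^{d/2+2}$ term. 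For the estimated Jacobian the situation is worse: in the plug-in and bump regimes with $\wh p_t(x)$ of order $\eta_t$, the term $\wh H_t(x)/\wh p_t(x)$ is governed by the distance from $x$ to the nearest data point and can be of order $n/(t\log n)$, not $1/t$, so no uniform $O(1/t)$ bound exists, and ``mass attenuation'' by $\int_{\cF_t^\setc}p_t\asymp \tfrac{C_d}{n}(\cdot)(\log n)^{d/2+1}$ against a worst-case bound of order $n^2/(t\log^2 n)$ does not close. The paper's fix is an expectation-level argument on $\cF_t^\setc$: the deterministic inequality \eqref{eq:J-s-hat-UB}, $\|J_{\wh s_t}\|\le \frac1t+\frac{2}{\eta_t}\|\wh H_t\|+\frac{12}{\eta_t^2}\|\wh g_t\|_2^2$, followed by taking expectations of $\|\wh H_t\|$ and $\|\wh g_t\|_2^2$ via Lemma~\ref{lemma:MSE-density} and trading $p_t(x)\le c_\eta\eta_t$ against the thresholds, which converts the bound into $\frac1t+\frac{\|H_t(x)\|}{p_t(x)}+\|s_t(x)\|_2^2$ and then funnels everything through the low-density integral lemma. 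Without this step (or an equivalent one), your $\xi_2$ bound does not follow, so the proposal as written does not establish \eqref{eq:Jacobian-error}.
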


Propositions \ref{thm:score-error} and \ref{thm:Jacobian-error} characterize the score estimation errors in terms of the training sample size $n$, time parameter $t$, dimension $d$, and subgaussian parameter $\sigma$ of the target distribution $p^\star$.
Both the $L^2$ score error and the mean Jacobian error decrease as the time parameter $t$ grows. Intuitively, for large $t$, the distribution of $Z_t$ approaches a  Gaussian $\cN(0,tI_d)$, effectively simplifying the score estimation.
While prior works \citep{wibisono2024optimal,zhang2024minimax,dou2024optimal} have derived similar $L^2$ score estimation guarantees using hard-thresholding kernel-based estimators, our results demonstrates that a kernel-based score estimator with soft-thresholding not only achieves comparable $L^2$ error rates but also maintains a similar rate for the crucial mean Jacobian error---a property essential to theoretical guarantees for deterministic ODE-based samplers in Theorem~\ref{thm:convergence}.

Having established Propositions~\ref{thm:score-error}--\ref{thm:Jacobian-error} for the variance-exploding process $(Z_t)_{t>0}$, we can now translate the above score error bounds to derive the score estimation guarantees for the forward process $(X_k)_{k=1}^K$.
Recall the fundamental relationships between score functions and their Jacobians established in \eqref{eq:score-relationship}:
\begin{align*}
	s_{X_k}(x)=\frac1{\sqrt{\ol\alpha_k}}s_{{t_k}}(x/\sqrt{\ol\alpha_k}) \qquad \text{and} \qquad J_{s_{X_k}}(x)= \frac1{\ol\alpha_k}J_{t_k}(x/\sqrt{\ol\alpha_k}).
\end{align*}
Leveraging this with our construction of the score estimator $\wh{s}_{X_k}= \wh{s}_{t_k}(x/\sqrt{\ol\alpha_k})/\sqrt{\ol\alpha_k}$ in \eqref{eq:score-estimator-X} allows us to establish a direct correspondence between $\bE\big[ \| J_{\wh{s}_{X_k}}(X_k)-J_{s_{X_k}}(X_k) \| \big]$ and $\bE\big[ \| J_{\wh{s}_{t_k}}(Z_t)-J_{t_k}(Z_t) \| \big]$. And the $L^2$ score error transfers through an analogous mechanism. Combined with the learning-rate schedule \eqref{eq:learning-rate}, these results immediately yield the desired error bounds \eqref{eq:score-Jab-error-X}, thereby completing the proof of Theorem~\ref{thm:score-error-X}.
\begin{remark}
Prior works on DDPM combine $L^2$ score estimation guarantees with Girsanov's theorem to control the KL divergence between the target distribution and that of the continuous reverse SDE using score estimates \citep{,oko2023diffusion,zhang2024minimax,dou2024optimal}. This approach introduces an integral term $\int_{1}^{T} t^{-1}\diff t$ in the sampling error bound, where $T$ is the end time in the exploding process $(Z_t)_{t>0}$. To prevent this integral from diverging, previous analyses either imposed an upper limit on $T$ \citep{zhang2024minimax} or used the lower bound assumption on the density to derive a faster error rate $t^{-2}$ \citep{dou2024optimal}. 
In contrast, our fine-grained convergence analysis in Theorem~\ref{thm:convergence} and careful choice of learning rates show that the factor $t^{-1}$ in the score estimation error does not cause divergence for arbitrarily large $T$. This also highlights the importance of the convergence analysis in establishing theoretical guarantees for diffusion models.
\end{remark}

It is worth emphasizing that our score estimation guarantees rely exclusively on Assumption~\ref{assume:subgaussian}---that the target distribution $p^\star$ is subgaussian. As discussed in Section~\ref{sec:results}, a key direction for future research would be investigating whether the smoothness property can be leveraged to replace Gaussian kernels with polynomial kernels, potentially achieving optimal score and Jacobian estimation for sufficiently small $t$ (without requiring the density lower bound assumption).

\paragraph*{Roadmap for the proof of Proposition~\ref{thm:Jacobian-error}.} To conclude this section, we sketch the main ideas behind controlling the Jacobian error of $\wh{s}_t$ in Proposition~\ref{thm:Jacobian-error}. A parallel argument can be used to establish the $L^2$ score error in Proposition~\ref{thm:score-error}.

Recall the sets $\cE_t(x)$ and $\cF_t$ from \eqref{eq:event-E} and \eqref{eq:rho-t}, respectively. Let us use them to decompose 
\begin{align*}
& \bE\Big[\big\| J_{\wh{s}_{t}}(Z_t)-J_t(Z_t) \big\|\Big] = \int_{\bR^d} \bE\Big[ \big\| J_{\wh{s}_t}(x)-J_t(x) \big\| \Big] p_{t}(x) \diff x \notag\\
& \qquad = \underbrace{\int_{\cF_t} \bE\Big[ \big\| J_{\wh{s}_t}(x)-J_t(x) \big\| \ind\big\{\cE_t(x)\big\} \Big] p_{t}(x) \diff x}_{\defn \xi_1}
+ \underbrace{\int_{\cF_t^\setc} \bE\Big[ \big\| J_{\wh{s}_t}(x)-J_t(x) \big\|\ind\big\{\cE_t(x)\big\} \Big] p_{t}(x) \diff x}_{\defn \xi_2} \notag\\
& \qquad \quad + \underbrace{\int_{\bR^d} \bE\Big[ \big\| J_{\wh{s}_t}(x)-J_t(x) \big\| \ind\big\{\cE_t^\setc(x)\big\} \Big] p_{t}(x) \diff x}_{\defn \xi_3}.
\end{align*}
In what follows, we handle the three terms  $\xi_1$, $\xi_2$, and $\xi_3$ separately.
\begin{itemize}
	\item For any $x\in\cF_t$ (high-density region under $p_t$) such that $p_t(x)\geq c_\eta \eta_t \geq 2\eta_t$ (see \eqref{eq:rho-t}), on the typical event $\cE_t(x)$, we can show that the density estimator $\wh{p}_t(x)$ satisfies $\wh{p}_t(x) \geq p_t(x)/2 \geq \eta_t$. By the expressions of $J_{\wh{s}_t}$ and $J_{t}$ in \eqref{eq:J-1} and \eqref{eq:Jacobian-expression}, respectively, we can bound the Jacobian error by decomposing it:
\begin{align}
	\big\|J_{\wh{s}_t}(x) - J_{t}(x)\big\| 
	& \lesssim {\frac{\|\wh{H}_t(x)-H_t(x)\|}{{p}_t(x)}}
	+ {\frac{|\wh{p}_t(x)-p_t(x)|}{{p}_t(x)}\frac{\|H_t(x)\|}{p_t(x)}}
	+ {\big\|\wh{s}_t(x)-s_t(x)\big\|_2 \big\|\wh{s}_t(x)+s_t(x)\big\|_2} \notag\\
	& \lesssim {\frac{\|\wh{H}_t(x)-H_t(x)\|}{{p}_t(x)}}
	+ {\frac{|\wh{p}_t(x)-p_t(x)|}{{p}_t(x)}\frac{\|H_t(x)\|}{p_t(x)}} + \big\|\wh{s}_t(x)-s_t(x)\big\|_2^2
	 \notag \\
	&\quad + \frac{\|\wh{g}_t(x)-g_t(x)\|_2}{p_t(x)}\|s_t(x)\|_2   + \frac{|\wh{p}_t(x)-p_{t}(x)|}{p_t(x)}  \|s_{t}(x)\|_2^2. \label{eq:jac-decomp} 
\end{align}
Here, we recall the definitions of $\wh{g}_t\defn \gd \wh{p}_t$ and $\wh{H}_t$ defined in \eqref{eq:g-hat} and \eqref{eq:Phi-hat}, respectively.
This suggests we need to control the expected errors of $\wh{p}_t$, $\wh{g}_t$, and $\wh{H}_t$, which are established in the following lemma. The proof is provided in Appendix~\ref{sub:proof_of_lemma_ref_lemma_mse-density}.
\begin{lemma}\label{lemma:MSE-density}
Recall the definitions of $g_t$ and $H_t$ in \eqref{eq:g-hat} and \eqref{eq:Phi-def}, respectively.
For all $x\in\bR^d$, one has
\begin{subequations}\label{eq:MSE}
	\begin{align}
		\bE\Big[ \big|\wh{p}_t(x)-p_t(x)\big|^2 \Big] & 
	\leq \frac{p_{t}(x)}{n(2\pi t)^{d/2}} \label{eq:MSE-pdf};\\
	\bE\Big[ \big\|\wh{g}_t(x)-g_t(x)\big\|_2^2 \Big] & \leq \frac{p_{t}(x)}{n(2\pi t)^{d/2}t} \label{eq:MSE-g},\\
	\bE\Big[\big\|\wh{H}_t(x)-H_t(x)\big\|\Big] &\lesssim \frac1{n(2\pi t)^{d/2}t} + \frac1t\sqrt{\frac{p_{t}(x)}{n(2\pi t)^{d/2}}}
	 \label{eq:Phi-error-exp-UB}.
\end{align}
\end{subequations}
Moreover, the event $\cE_t(x)$ defined in \eqref{eq:event-E} obeys
	\begin{align}
		\bP\big(\cE_t^\setc(x) \big) \lesssim n^{-10}. \label{eq:density-concen-ineq}
	\end{align}
\end{lemma}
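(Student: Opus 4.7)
The plan is to exploit the i.i.d.\ structure: each of $\wh p_t(x)$, $\wh g_t(x)$, $\wh H_t(x)$ is an empirical average of an i.i.d.\ sequence of kernel-weighted functionals of $X^{(i)}\sim p^\star$, so the problem reduces to computing single-sample variances and invoking scalar or matrix Bernstein. Every computation rests on one pointwise identity,
\begin{align*}
\vphi_t(y)^2 = \vphi_t(y)\cdot \frac{1}{(2\pi t)^{d/2}}\exp\bigl(-\|y\|_2^2/(2t)\bigr) \leq \frac{\vphi_t(y)}{(2\pi t)^{d/2}},
\end{align*}
together with the elementary estimates $\|y\|_2^{2k}\exp(-\|y\|_2^2/(2t)) \lesssim t^k$ for $k=1,2$ (obtained by elementary calculus on $u^k e^{-u/2}$), and on the identity $p_t(x) = \bE[\vphi_t(X^{(1)}-x)]$, which is a direct consequence of the variance-exploding characterization~\eqref{eq:SDE-VE-dist}.

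For~\eqref{eq:MSE-pdf}, independence yields $\bE[(\wh p_t(x)-p_t(x))^2] = n^{-1}\mathsf{Var}(\vphi_t(X^{(1)}-x)) \leq n^{-1}\bE[\vphi_t(X^{(1)}-x)^2]$, and the pointwise inequality reduces the second moment to $(2\pi t)^{-d/2}\bE[\vphi_t(X^{(1)}-x)] = p_t(x)/(2\pi t)^{d/2}$. For~\eqref{eq:MSE-g} I would apply the same pattern:
\begin{align*}
\bE\bigl[\|\wh g_t(x) - g_t(x)\|_2^2\bigr] \leq \frac{1}{n t^2}\bE\bigl[\|X^{(1)}-x\|_2^2\,\vphi_t(X^{(1)}-x)^2\bigr],
\end{align*}
then use $\|y\|_2^2 \vphi_t(y)^2 \leq (2\pi t)^{-d/2}\vphi_t(y)\,\|y\|_2^2\exp(-\|y\|_2^2/(2t)) \leq t \vphi_t(y)/(2\pi t)^{d/2}$ to conclude.

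For~\eqref{eq:Phi-error-exp-UB} I would apply the matrix Bernstein inequality to the i.i.d.\ rank-one PSD summands $M_i = t^{-2}(X^{(i)}-x)(X^{(i)}-x)^\top \vphi_t(X^{(i)}-x)$. A uniform bound $\|M_i\| = t^{-2}\|X^{(i)}-x\|_2^2\,\vphi_t(X^{(i)}-x) \lesssim 1/(t(2\pi t)^{d/2})$ follows from $u e^{-u/2}\leq 2/e$ with $u=\|y\|_2^2/t$, while the matrix variance obeys $\|\bE M_1^2\| \leq \bE[\|M_1\|^2] = t^{-4}\bE[\|X^{(1)}-x\|_2^4 \vphi_t(X^{(1)}-x)^2]$, which via $\|y\|_2^4 e^{-\|y\|_2^2/(2t)}\lesssim t^2$ is at most $p_t(x)/(t^2(2\pi t)^{d/2})$. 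Matrix Bernstein then delivers the two-term bound of~\eqref{eq:Phi-error-exp-UB}. The concentration~\eqref{eq:density-concen-ineq} comes from scalar Bernstein applied to $\vphi_t(X^{(i)}-x)$ with the same uniform bound $B=(2\pi t)^{-d/2}$ and variance proxy $\sigma^2 \leq p_t(x)/(2\pi t)^{d/2}$: taking the deviation threshold equal to $C_\cE (B\log n/n + \sqrt{\sigma^2 \log n/n})$ with $C_\cE$ large enough drives the tail below $n^{-10}$.

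The only subtlety is that strictly applying matrix Bernstein in the Hessian step introduces a $\log d$ factor that does not appear on the right-hand side of~\eqref{eq:Phi-error-exp-UB}. This is absorbed into the $\lesssim$ notation, consistent with the exponential-in-$d$ constants $C_d = (4/\sqrt{\pi})^d$ appearing in Propositions~\ref{thm:score-error}--\ref{thm:Jacobian-error}. Otherwise all steps are routine Bernstein-type concentration, and no structural property of $p^\star$ beyond being a probability density is used at this stage.
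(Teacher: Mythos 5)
Your proposal is correct and follows essentially the same route as the paper: reduce to single-sample second moments via independence, use the pointwise bounds $\vphi_t^2(y)\leq (2\pi t)^{-d/2}\vphi_t(y)$ and $\|y\|_2^{2k}e^{-\|y\|_2^2/(2t)}\lesssim t^{k}$ together with $\bE[\vphi_t(X^{(1)}-x)]=p_t(x)$ for \eqref{eq:MSE-pdf}--\eqref{eq:MSE-g}, scalar Bernstein for \eqref{eq:density-concen-ineq}, and matrix Bernstein (uniform bound $\asymp 1/(t(2\pi t)^{d/2})$, variance $\asymp p_t(x)/(t^2(2\pi t)^{d/2})$) plus tail integration for \eqref{eq:Phi-error-exp-UB}. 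The $\log d$ factor you flag is genuine—the paper's own invocation of matrix Bernstein silently drops the dimensional prefactor—and since all downstream constants already depend on $d$ it is harmless, though strictly it should be recorded as a $d$-dependent constant rather than absorbed into $\lesssim$.
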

\begin{remark}
Since the target density is allowed to be vanishingly small $p^\star(x)$, the term $p_t(x)$ appearing on the numerator on the right-hand side of the above bounds is essential to obtaining sharp bounds.
\end{remark}

In addition, both $\|s_t(x)\|_2$ and  $\|H_t(x)\|$ associated with the true score are well-controlled on the high-density set $\cF_t$. This is formalized by the following lemma; with proof postponed to Appendix~\ref{sub:proof_of_lemma_ref_lemma_score_jacobian_ub_prob_high}.
\begin{lemma}\label{lemma:score-Jacobian-UB-prob-high}
	For any $x\in\cF_t$, one has
	\begin{subequations}\label{eq:score-Jacobian-UB-prob-high}
		\begin{align}
		\|s_{t}(x)\|_2 &\lesssim \sqrt{\frac{\log n}{t}} \label{eq:score-UB-prob-high},\\
		\big\|J_{t}(x)\big\| &\lesssim \frac{\log n}{t} \label{eq:Jacobian-UB-prob-high},\\
		\big\|H_t(x)\big\| &\lesssim \frac{p_{t}(x)}{t}\log n  \label{eq:Sigma-UB-prob-high}.
	\end{align}
	\end{subequations}
\end{lemma}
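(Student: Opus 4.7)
The plan is to bound each of the three quantities by partitioning the relevant integration domain at radius $R = A\sqrt{t\log n}$ around $x$ (for a sufficiently large absolute constant $A$), leveraging the Gaussian tail decay of the kernel $\vphi_t$ together with the density lower bound $p_t(x) \geq c_\eta\eta_t = c_\eta\log n/(n(2\pi t)^{d/2})$ afforded by $x \in \cF_t$. Concretely, starting from $s_t = g_t/p_t$ and the integral representation \eqref{eq:g-p-exp}, I would bound $\|g_t(x)\|_2 \leq t^{-1}\int \|y-x\|_2\, p^\star(y)\vphi_t(y-x)\,\diff y$. The piece over $\{\|y-x\|_2 \leq R\}$ is at most $Rp_t(x)/t$, so the corresponding part of $\|s_t(x)\|_2$ is $\lesssim R/t \asymp \sqrt{\log n/t}$; for the tail, the elementary inequality $u\exp(-u^2/(2t)) \leq \sqrt{2t/e}\,\exp(-u^2/(4t))$ together with $\int p^\star = 1$ yields a bound of order $t^{-1/2}(2\pi t)^{-d/2}e^{-R^2/(4t)}$, which after dividing by $p_t(x) \gtrsim \log n/(n(2\pi t)^{d/2})$ and choosing $A$ so that $e^{-R^2/(4t)} \leq n^{-4}$ becomes negligible compared with $\sqrt{\log n/t}$, establishing \eqref{eq:score-UB-prob-high}.

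For $\|H_t(x)\|$, the same splitting applied to $\|H_t(x)\| \leq t^{-2}\int \|y-x\|_2^2\, p^\star(y)\vphi_t(y-x)\,\diff y$ produces an inner contribution $\leq R^2 p_t(x)/t^2 \asymp p_t(x)\log n/t$, matching \eqref{eq:Sigma-UB-prob-high}; the tail, controlled via $u^2\exp(-u^2/(2t)) \leq (2t/e)\exp(-u^2/(4t))$, is at most $(2\pi t)^{-d/2}e^{-R^2/(4t)}/t$, and is dwarfed by $p_t(x)\log n/t$ once $p_t(x) \geq c_\eta\eta_t$ is invoked. For $\|J_t(x)\|$ the cleanest route is to rewrite \eqref{eq:Jacobian-expression} as $J_t(x) = -t^{-1}I_d + t^{-2}\cov[Z_0\mymid Z_t = x]$ (using $s_t s_t^\top = t^{-2}\bE[Z_0 - x\mymid Z_t=x]\bE[Z_0 - x\mymid Z_t=x]^\top$), so that $\|J_t(x)\| \leq t^{-1} + t^{-2}\bE[\|Z_0-x\|_2^2\mymid Z_t=x] \lesssim t^{-1} + \log n/t \asymp \log n/t$, where the conditional second moment bound $\lesssim t\log n$ is precisely what the $H_t$ analysis above has produced.

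The main technical hurdle is choosing $R \asymp \sqrt{t\log n}$ in a way that the Gaussian tail savings $e^{-R^2/(4t)} = n^{-\Omega(1)}$ uniformly (in $t > 0$) dominate the factor $n$ introduced by the density-lower-bound denominator $1/p_t(x) \lesssim n(2\pi t)^{d/2}/\log n$; once this balance is secured, the remaining manipulations are routine bookkeeping. It is worth highlighting that subgaussianity of $p^\star$ plays no role in this lemma: the condition $x \in \cF_t$ alone suffices to localize the conditional law of $Z_0$ given $Z_t = x$ to a ball of radius $O(\sqrt{t\log n})$, which is what drives all three bounds.
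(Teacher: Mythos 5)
Your proof is correct, but it takes a genuinely different route from the paper's. The paper avoids truncation entirely: it bounds the conditional second moment by Jensen's inequality, $\frac{1}{t^{2}}\bE\bigl[\|Z_0-x\|_2^2\mid Z_t=x\bigr]\le\frac{2}{t}\log\bE\bigl[\exp\bigl(\|Z_0-x\|_2^2/(2t)\bigr)\mid Z_t=x\bigr]$, and then exploits the exact identity $\bE\bigl[\exp\bigl(\|Z_0-x\|_2^2/(2t)\bigr)\mid Z_t=x\bigr]=\bigl((2\pi t)^{d/2}p_t(x)\bigr)^{-1}$, so that on $\cF_t$ the right-hand side is at most $\frac{2}{t}\log\frac{n}{c_\eta\log n}\le\frac{2\log n}{t}$; all three claims (\eqref{eq:score-UB-prob-high} via $\|s_t(x)\|_2^2\le\frac{1}{t^2}\bE[\|Z_0-x\|_2^2\mid Z_t=x]$, \eqref{eq:Sigma-UB-prob-high} via the trace of $H_t/p_t$, and \eqref{eq:Jacobian-UB-prob-high} via \eqref{eq:Jacobian-expression}) then follow in two lines with clean, dimension-free constants. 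Your truncation at $R\asymp\sqrt{t\log n}$ accomplishes the same thing in a more hands-on way: the inner ball produces the main term, and the Gaussian tail, being $n^{-\Omega(1)}$ uniformly in $t$ because $R^2/(4t)=\Theta(\log n)$ does not depend on $t$, is absorbed by the factor $n$ introduced through $1/p_t(x)\lesssim n(2\pi t)^{d/2}/\log n$ --- you correctly identified this balance as the only delicate point and resolved it. Two further remarks in your favor: the rewriting $J_t(x)=-t^{-1}I_d+t^{-2}\cov(Z_0\mid Z_t=x)$ is a small elegance that avoids carrying the separate $\|s_t\|_2^2$ term the paper keeps, and since your $H_t$ estimate really bounds the trace, it does yield $\bE[\|Z_0-x\|_2^2\mid Z_t=x]\lesssim t\log n$ without any dimension factor, which is exactly what the Jacobian step needs; you are also right that subgaussianity of $p^\star$ plays no role (the paper's proof does not use it either). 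The only blemish is cosmetic: $\sup_{u\ge 0}u^2e^{-u^2/(4t)}=4t/e$ rather than $2t/e$, which affects nothing. What the paper's route buys is brevity and exact constants from the closed-form conditional moment generating function; what yours buys is a more elementary, self-contained tail argument that does not rely on that identity and would transfer to kernels for which no such closed form is available.
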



Combining Lemmas~\ref{lemma:MSE-density}--\ref{lemma:score-Jacobian-UB-prob-high} with the fact that the volume of the set $\cF_t$ is relatively small due to the subgaussian property of $p_t$ and our choice of the threshold $\eta_t$ in \eqref{eq:score-estimate}, we can use \eqref{eq:jac-decomp} to show that
\begin{align*}
	\xi_1 \lesssim \sqrt{\frac{C_d}{n}} \bigg(\frac1t + \frac{\sigma^{d/2}}{t^{d/4+1}}\bigg) (\log n )^{d/4+1}
\end{align*}
where $C_d = (4/\sqrt{\pi})^d$.
\item We proceed to consider the second term $\xi_2$. By construction, the Jacobian of the score estimator satisfies that $J_{\wh{s}_{t}}(x) = 0$ when $\wh{p}_{t}(x) \leq \eta_t / 2$ (see \eqref{eq:J-2}). This allows us to bound
\begin{align*}
	\xi_2 \leq {\int_{\cF_t^\setc} \bE\Big[ \| J_{\wh{s}_t}(x) \| \ind\big\{\wh p_{t}(x) \geq \eta_t/2 \big\} \Big] p_{t}(x) \diff x}+{\int_{\cF_t^\setc} \big\|J_{t}(x) \big\| \,p_{t}(x) \diff x}.
\end{align*}

Next, when $\wh p_{t}\geq \eta_t/2$, the proposed smooth score estimator $\wh{s}_t(x)$, which incorporates the soft-thresholding function $\psi(\wh p_{t};\eta_t)$, enjoys the desired smoothness property (as characterized in \eqref{eq:J-1} and \eqref{eq:J-3}). Consequently, the spectral norm of its Jacobian $\|J_{\wh{s}_t}\|$ remains well controlled. Hence, one can exploit the subgaussian property of $p_t$ to upper bound the integral over the low-density region under $p_t$.
Putting these two observations together, we can derive
\begin{align*}
	\xi_2 \leq \frac{C_d}{n} \bigg(1+\frac{\sigma^d}{t^{d/2+1}}\bigg) (\log n)^{d/2+2}.
\end{align*}

\item Finally, it can be shown that $\|J_{\wh{s}_t}\|$ and $\|J_{{s}_t}\|$ do not blow up significantly. Combining this with $\bP(\cE_t^\setc)\lesssim K^{-10}$ in \eqref{eq:density-concen-ineq}, we can show that the last term $\xi_3$ is negligible. 

\item Summing the respective bounds for $\xi_1$, $\xi_2$, and $\xi_3$ completes the proof of Proposition~\ref{thm:Jacobian-error}.
\end{itemize}

\subsection{Step 3: Bias in the initial distribution $p_0^\star$}
Once Theorem~\ref{thm:score-error-X} is combined with Theorem~\ref{thm:convergence}, we immediately obtain a bound on $\TV(p_{X_1},p_{Y_1})$, or equivalently, $\TV(p_{X_1/\sqrt{\alpha_1}},p_{Y})$ since $Y=Y_1/\sqrt{\alpha_1}$ is our generated sample. The remaining step is to control the TV distance between $p_{X_1/\sqrt{\alpha_1}}$ and the target distribution $p^\star$. 
Recall that $X_1/\sqrt{\alpha_1}$ can be written as $X_0+\sqrt{1-\alpha_1/\alpha_1}\,W_1'$ for some standard Gaussian random vector $W_1'\sim\cN(0,I_d)$ independent of $X_0\sim p_0^\star = p^\star \ast \cN(0,\tau I_d)$.
Because the distribution of ${X_1/\sqrt{\alpha_1}}$ can be arbitrarily close to the initial distribution $p_0^\star$ as the iteration number $K$ increases and $\alpha_1$ approaches zero, the TV distance fundamentally depends on the bias in $p_0^\star$ with respect to the target distribution $p^\star$. The following proposition formalizes this relationship, with the complete proof presented in Appendix \ref{sub:proof_of_theorem_ref_thm_early_stopping}.
\begin{proposition}\label{thm:early-stopping}
Assume the target distribution $p^\star$ satisfies Assumptions~\ref{assume:subgaussian}--\ref{assume:smooth}. If $0<1-\alpha_1+\tau<1$, then
	\begin{align}
		\TV(p_{X_1/\sqrt{\alpha_1}},\,p^\star) \leq C_2 (1-\alpha_1+\tau)^{\frac \beta2 \wedge 1} \log^{\frac{d}{2}}\big(1/{(1-\alpha_1+\tau)}\big) \label{eq:early-claim}
	\end{align}
for some constant $C_2>0$ depending only on $d,\beta,L$, and $\sigma$.
\end{proposition}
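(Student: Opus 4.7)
The plan is to first translate the statement into a bound on a convolution-smoothing error. Since $X_1 = \sqrt{\alpha_1}\,X_0 + \sqrt{1-\alpha_1}\,W_1$ with $X_0 \sim p_0^\star = p^\star \ast \cN(0,\tau I_d)$ and $W_1 \sim \cN(0, I_d)$ independent, writing $X_0 = X^\star + \sqrt{\tau}\,V$ with $X^\star \sim p^\star$ and $V \sim \cN(0,I_d)$ shows that $X_1/\sqrt{\alpha_1} \sim p^\star \ast \vphi_\gamma$, where $\vphi_\gamma$ denotes the density of $\cN(0,\gamma I_d)$ and $\gamma \defn \tau + (1-\alpha_1)/\alpha_1$. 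Since $1-\alpha_1 + \tau \leq \gamma \leq 2(1-\alpha_1 + \tau)$ in the regime $\alpha_1 \geq 1/2$, it suffices to prove $\TV(p^\star \ast \vphi_\gamma,\,p^\star) \lesssim \gamma^{\beta/2}\log^{d/2}(1/\gamma)$ for $\gamma < 1$, which I will address in two steps.

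The first step is to establish a pointwise bound $\|p^\star \ast \vphi_\gamma - p^\star\|_\infty \lesssim \gamma^{\beta/2}$ using the smoothness of $p^\star$. Writing
\[(p^\star \ast \vphi_\gamma)(x) - p^\star(x) = \int_{\bR^d} \bigl(p^\star(x-y)-p^\star(x)\bigr)\vphi_\gamma(y)\diff y,\]
I consider two subcases. For $\beta \in (0,1]$, the H\"older condition in Assumption~\ref{assume:smooth} applied to $p^\star$ itself directly yields $|p^\star(x-y)-p^\star(x)| \leq L\|y\|_2^\beta$. For $\beta \in (1,2]$ with $\lfloor\beta\rfloor=1$, a first-order Taylor expansion gives $p^\star(x-y) - p^\star(x) = -y^\top \gd p^\star(x) + R(x,y)$ with $|R(x,y)| \lesssim L\sqrt{d}\,\|y\|_2^\beta$ by the H\"older condition on $\gd p^\star$; since $\vphi_\gamma$ is symmetric, the linear term $-y^\top \gd p^\star(x)$ integrates to zero. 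In either case, the conclusion reduces to $\int \|y\|_2^\beta \vphi_\gamma(y)\diff y = \gamma^{\beta/2}\,\bE_{Z \sim \cN(0,I_d)}[\|Z\|_2^\beta]$, which is a $(d,\beta)$-dependent constant times $\gamma^{\beta/2}$.

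The second step converts the $L^\infty$ bound into a TV bound via truncation. For a ball $B_R \defn \{x \in \bR^d: \|x\|_2 \leq R\}$,
\[\TV\bigl(p^\star \ast \vphi_\gamma,\,p^\star\bigr) \leq \tfrac12\,\mathrm{Vol}(B_R)\cdot \|p^\star \ast \vphi_\gamma - p^\star\|_\infty + \bP\bigl(\|X^\star\|_2 > R\bigr) + \bP\bigl(\|X^\star+\sqrt{\gamma}\,W\|_2 > R\bigr).\]
By Assumption~\ref{assume:subgaussian} and the fact that $\gamma<1$, both $X^\star$ and $X^\star+\sqrt{\gamma}\,W$ are $O(\sigma)$-subgaussian, so their tail probabilities are bounded by $\exp(-c R^2 / \sigma^2)$ up to a polynomial-in-$d$ factor. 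Setting $R \asymp \sqrt{\sigma^2(d+\log(1/\gamma))}$ forces the tail terms below $\gamma^{\beta/2}$, while $\mathrm{Vol}(B_R) \lesssim R^d \lesssim \log^{d/2}(1/\gamma)$ after absorbing $d$- and $\sigma$-dependent constants into $C_2$. Combining this with the $L^\infty$ bound from the previous step yields $\TV \lesssim \gamma^{\beta/2}\log^{d/2}(1/\gamma)$, and the equivalence $\gamma \asymp 1-\alpha_1+\tau$ gives the claimed bound. The extra $\wedge 1$ in the exponent is included only as a safeguard in case one applies the result outside the regime $\beta \leq 2$; for the H\"older range considered here, $\beta/2 \wedge 1 = \beta/2$.

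The main technical care lies in the Taylor step for $\beta \in (1,2]$---specifically, in verifying that the H\"older condition on $\gd p^\star$ (which Assumption~\ref{assume:smooth} provides since $\lfloor\beta\rfloor = 1$) suffices to control the remainder uniformly in $x$, and that the $d$-dependent constants from $\bE[\|Z\|_2^\beta]$, the subgaussian parameter of the convolution, and the ball volume all admit a clean joint absorption into a single constant $C_2(d,\beta,L,\sigma)$. Beyond this bookkeeping, the argument is a standard convolution-smoothing estimate combined with a subgaussian truncation, and it does not require any structural properties of $p^\star$ beyond Assumptions~\ref{assume:subgaussian} and~\ref{assume:smooth}.
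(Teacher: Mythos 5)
Your argument is correct and takes essentially the same route as the paper's proof: reduce to bounding $\TV(p^\star\ast\vphi_\gamma,\,p^\star)$ with $\gamma=t_1\asymp 1-\alpha_1+\tau$, control the bulk by a Taylor/H\"older expansion in which the symmetry of the Gaussian kernel cancels the first-order term (giving the $\gamma^{\beta/2}$ rate times a $\log^{d/2}(1/\gamma)$ volume factor), and kill the tails via subgaussianity. The only difference is bookkeeping: you prove a uniform pointwise bound and truncate to one ball with two tail terms (for $p^\star$ and for the convolution), whereas the paper splits the TV integral into three regions using $\ell_\infty$ balls in $x$ and $y$; the two decompositions are equivalent.
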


Proposition~\ref{thm:early-stopping} rigorously justifies that, if $1-\alpha_1$ and $\tau$ are sufficiently small, the distribution of $X_1/\sqrt{\alpha_1}$ is extremely close to the target distribution $p^\star$.
In addition, when $\beta \leq 2$, the error bound in \eqref{eq:early-claim} becomes $\wt O\big((1-\alpha_1+\tau)^{\beta/2}\big)$, which allows us to achieve the near-optimal rate in Theorem~\ref{thm:TV}.
As a remark, Proposition~\ref{thm:early-stopping} is the only component in the proof of Theorem~\ref{thm:TV} that requires the H\"older smoothness condition on the target distribution $p^\star$.


\subsection{Proof of Theorem~\ref{thm:TV}}\label{sub:proof_of_theorem_ref_thm_tv}

Putting all the pieces together in Steps 1--3, we are now ready to prove Theorem~\ref{thm:TV}.
\begin{itemize}
	\item 
Substituting our selected $\tau=n^{-2/(d+2\beta)}$ into Theorem~\ref{thm:score-error-X}, we can characterize the estimation errors $\veps_\score$ and $\veps_\jcb$ (defined in \eqref{eq:error-score-jab-defn}) of our proposed score estimator $\{\wh{s}_{X_k}(\cdot)\}_{k=1}^K$ in \eqref{eq:score-estimator-X}:
\begin{subequations}\label{eq:first-step-sc-jcb-error}
\begin{align}
 \veps_\score^2 & \lesssim \frac{C_d'}n\bigg\{1 + \sigma^d\bigg( n^{\frac{d}{d+2\beta}}\wedge \frac{K^{\frac{c_0d}2}}{d\log K} \bigg)\bigg\}(\log n)^{\frac d2+1} \lesssim C'_d \sigma^d n^{-\frac{2\beta}{d+2\beta}}(\log n)^{\frac d2+1}, \label{eq:first-step-error1}
\end{align}
and
\begin{align}
	\veps_\jcb & \lesssim \sqrt{\frac{C_d'}n} \biggl\{ 1 + \sigma^{\frac d2} \bigg(n^{\frac{d}{2(d+2\beta)}} \wedge \frac{K^{\frac{c_0d}4}}{d\log K}\bigg)  \biggr\} (\log n)^{\frac d4+1} + \frac{C_d'\sigma^d}n\bigg(n^{\frac{d}{d+2\beta}}\wedge \frac{K^{\frac{c_0d}2}}{d\log K}\bigg)(\log n)^{\frac d2+1} \notag \\ 
	& \lesssim \sqrt{C_d'}\,\sigma^{\frac d2}  n^{\frac{d}{2(d+2\beta)}}  (\log n)^{\frac d4+1},\label{eq:first-step-error2}
\end{align}
\end{subequations}
for $n$ large enough.
\item
Next, in order to apply Theorem~\ref{thm:convergence}, let us verify its conditions.
First, the subgaussian property of $p^\star$ (Assumption~\ref{assume:subgaussian}) implies that $p_0^\star=p^\star\ast \cN(0,\tau I_d)$ is $\sqrt{\sigma^2+\tau}$-subgaussian and $\bE[\|X_0\|_2^2]\leq d(\sigma^2 + \tau)\leq d(\sigma^2 + n^{-2/(d+2\beta)})$. Hence, the requirement on the iteration number $K\geq n^{\beta/(d+2\beta)}(\log K)^3$ ensures both $K\gtrsim d^2 (\log K)^5$ and $K\geq \bE[\|X_0\|_2^2]$  (simply setting $c_2=1$) for $n$ large enough. Also, $c_1=5c_0 \vee 12$ chosen in the learning rate schedule satisfies $c_1/8-c_2/2\geq 1$, and the positive semidefinite requirement $J_{\wh{s}_{X_k}}+(1-\ol\alpha_k)^{-1}I_d$ has been verified in \eqref{eq:PSD-cond} of Theorem~\ref{thm:score-error-X}.
Therefore, we can invoke Theorem~\ref{thm:convergence} to obtain
\begin{align}
	&\bE\big[\TV(p_{X_1/\sqrt{\alpha_1}}, p_{Y_1/\sqrt{\alpha_1}})\big] = \bE\big[\TV(p_{X_1}, p_{Y_1})\big] \notag \\
	&\qquad \lesssim \frac{d\,(\log K)^4}{K}+ \sqrt{d}\,\veps_{\score}(\log K)^{\frac32} +  d\,\veps_{\jcb}\log K \notag \\ 
	&\qquad\lesssim d n^{-\frac{\beta}{d+2\beta}} \log K + \sqrt{C'_d d} \,\sigma^{\frac d2}n^{-\frac{\beta}{d+2\beta}} (\log n)^{\frac d4+\frac12}(\log K)^{\frac32}  + \sqrt{C'_d} \,d \sigma^{\frac d2}n^{-\frac{\beta}{d+2\beta}} (\log n)^{\frac d4+1}\log K \notag \\ 
	&\qquad \leq C_1 n^{-\frac{\beta}{d+2\beta}} (\log n)^{\frac d4+1}\log K, \label{eq:first-step-error}
\end{align}
for some constant $C_1>0$ depending only  on $d,\beta$, and $\sigma$. Here, the third line holds due to $K\geq n^{\beta/(d+2\beta)}(\log K)^3$ and \eqref{eq:first-step-sc-jcb-error}.
\item 
In addition, since $K\geq n^{{\beta}/{(d+2\beta)}}$, $1-\alpha_1=K^{-c_0}$ with $c_0\geq 2/\beta$, and $\tau = n^{-2/(d+2\beta)}$, one has
\begin{align*}
	n^{-\frac{2}{d+2\beta}} < 1-\alpha_1+\tau\leq 2n^{-\frac{2}{d+2\beta}} < 1
\end{align*}
for $n$ large enough. Applying Proposition~\ref{thm:early-stopping} with $\beta\leq 2$ yields
\begin{align}
	\TV(p_{X_1/\sqrt{\alpha_1}},\,p^\star)
&\leq C_2 \big(2n^{-\frac{2}{d+2\beta}}\big)^{\frac\beta2} \big(\log (n^{\frac{2}{d+2\beta}})\big)^{\frac d2}
	\leq C_2  2^{\frac \beta 2} \big({2}/({d+2\beta})\big)^{\frac d2} n^{-\frac{\beta}{d+2\beta}} (\log n)^{\frac{d}{2}} \notag \\ 
	&\lesssim C_2  n^{-\frac{\beta}{d+2\beta}} (\log n)^{\frac{d}{2}} \label{eq:early-stopping-error}
\end{align}
where the last step holds as $x^{-x}\leq 2$ for any $x>0$
 and $\beta\leq 2$.

\item
Finally, applying the triangle inequality to \eqref{eq:first-step-error} and \eqref{eq:early-stopping-error} finishes the proof of Theorem~\ref{thm:TV}.
\end{itemize}


\section{Discussion}\label{sec:discussion}

In this paper, we have made progress towards understanding the theoretical guarantees of probability flow ODE-based sampler. We have developed an end-to-end performance guarantee for ODE-based samplers in the context of learning subgaussian distributions with H\"older smooth densities. 
By combining a smooth regularized score estimator with a refined convergence analysis of the ODE-based sampling process, our result is the first to prove the near-minimax optimality of ODE-based samplers. Our analysis framework not only addresses the challenges of ODE-based samplers but also offers promising insights that could be extended to other variants of score-based generative models.

Our study opens several compelling directions for future research. First, our current sampling error bound suffers from the curse of dimensionality and is likely suboptimal with respect to the dependence on the logarithmic factors. Future work should investigate refining this dependence on the dimension $d$, possibly by exploiting intrinsic low-dimensional structures underlying the target data. Second, since practical implementations typically employ neural network-based score functions through empirical risk minimization, exploring the Jacobian estimation guarantees and developing rigorous end-to-end performance guarantees for such samplers represents a critical theoretical challenge.
   Third, extending our theoretical framework to encompass broader distribution classes would significantly enhance applicability. Beyond the smoother densities ($\beta>2$) previously discussed, it would also be important to consider heavy-tailed and non-smooth densities that frequently arise in many real-world applications, which requires novel theoretical approaches to handle their distinct analytical challenges.

\section*{Acknowledgements}

Gen Li is supported in part by the Chinese University of Hong Kong Direct Grant for Research.

\bibliographystyle{apalike}
\bibliography{bibfileDF}

\appendix
\section{Proof of theorems}\label{sec:proof_of_theorem}
\subsection{Analysis for convergence guarantee (proof of Theorem~\ref{thm:convergence})}\label{sub:proof_of_theorem_ref_thm_convergence}

We prove Theorem~\ref{thm:convergence} using the framework developed in \citet{li2024sharp}, which directly controls the TV distance between $p_{X_1}$ and $p_{Y_1}$.

For ease of presentation, we begin by introducing some notation. Recall the sampling update rule \eqref{eq:DDIM}. For each $k\in[K]$ and $x\in\bR^d$, define
\begin{align}
	\phi_k^\star(x)\defn x + \frac{1-\alpha_k}2 s_{X_k}(x)\quad\text{and}\quad \phi_k(x)\defn x + \frac{1-\alpha_k}2 \wh{s}_{X_k}(x) \label{eq:phi-phi-star-defn}
\end{align}
This allows us to write the update rule $Y_{k-1}=\phi_k(Y_k)/\sqrt{\alpha_k}$.
We also let
\begin{align}
	\theta_k(x) \defn -\frac{\log p_{X_k}(x)}{d\log K} \vee \big(2c_2+c_0\big) \label{eqn:choice-y}.
\end{align}
In addition, we define the pointwise score error and Jacobian error as
\begin{align}
	\veps^2_{\score,k}(x)\defn \big\| \wh{s}_{X_k}(x)-s_{X_k}(x) \big\|_2^2 \quad \text{and} \quad \veps_{\jcb,k}(x)\defn \big\| J_{\wh{s}_{X_k}}(x)-J_{s_{X_k}}(x) \big\|. \label{eq:score-error-pointwise}
\end{align}
Finally, for two functions $f(K),g(K) > 0$, we use $f(K)\ll g(K)$ to mean $f(K)\leq cg(K)$ for some absolute constant $c>0$ that is sufficiently small.

Now, we begin the proof by recalling that the TV distance can be written as
\begin{align}
\mathsf{TV}\big(p_{X_1},p_{Y_1}\big) &= \int_{p_{X_1}(y) > p_{Y_1}(y)}\big(p_{X_1}(y) - p_{Y_1}(y)\big)\diff  y = \int_{p_{X_1}(y) > p_{Y_1}(y)}\bigg(\frac{p_{X_1}(y)}{p_{Y_1}(y)} - 1\bigg)p_{Y_1}(y)\diff  y
	\label{eqn:ode-tv-123}.
\end{align}
Hence, to bound this TV distance, it is natural to focus on controlling the ratio ${p_{X_1}(y)}/{p_{Y_1}(y)}$.
For each $k=2,3,\dots,K$, the ratio can be expressed as
\begin{align}
\frac{p_{Y_{k-1}}(y_{k-1})}{p_{X_{k-1}}(y_{k-1})} & =\frac{p_{\sqrt{\alpha_k}Y_{k-1}}(\sqrt{\alpha_k}y_{k-1})}{p_{\sqrt{\alpha_k}X_{k-1}}(\sqrt{\alpha_k}y_{k-1})}\notag\\
 & =\frac{p_{\sqrt{\alpha_k}Y_{k-1}}(\sqrt{\alpha_k}y_{k-1})}{p_{Y_k}(y_k)}\cdot\bigg(\frac{p_{\sqrt{\alpha_k}X_{k-1}}(\sqrt{\alpha_k}y_{k-1})}{p_{X_k}(y_k)}\bigg)^{-1}\cdot\frac{p_{Y_k}(y_k)}{p_{X_k}(y_k)} \notag\\ 
 &=\frac{p_{\phi_k(Y_{k})}(\phi_k(y_k))}{p_{Y_k}(y_k)}\cdot\bigg(\frac{p_{\sqrt{\alpha_k}X_{k-1}}(\phi_k(y_k))}{p_{X_k}(y_k)}\bigg)^{-1} \cdot\frac{p_{Y_k}(y_k)}{p_{X_k}(y_k)} \qquad \forall y_{k-1},y_k\in\bR^d, \label{eq:recursion}
\end{align}
where the last line uses the notation \eqref{eq:phi-phi-star-defn}.
As an important remark, the assumption $J_{\wh{s}_{X_k}}+(1-\ol\alpha_k)^{-1}I_d \psd0$ ensures that the mapping $\phi_k$ is invertible.

The next step is to control the term
\begin{align*}
	\frac{p_{\phi_k(Y_{k})}(\phi_k(y_k))}{p_{Y_k}(y_k)}\cdot\bigg(\frac{p_{\sqrt{\alpha_k}X_{k-1}}(\phi_k(y_k))}{p_{X_k}(y_k)}\bigg)^{-1}
\end{align*}
for typical $y_k\in\bR^d$. We accomplish this via the following critical lemma, which yields the more refined convergence results in Theorem~\ref{thm:convergence} than those in \citet[Theorem 1]{li2024sharp}. The proof is deferred to the end of this section.
\begin{lemma}\label{lemma:density-ratio}
Suppose $K\gtrsim d\log^3 K$.
For each $2\leq k\leq K$, there exists a function $\zeta_k(\cdot):\bR^d\to\bR$ such that for any $x \in \bR^d$ satisfies $\theta_k(x)\lesssim 1$, 
$
{\sqrt{\theta_k(x)d(1-\ol\alpha_k)}\varepsilon_{\score,k}(x)\log^{3/2}K}\ll{K}
$ 
and
$
d(1-\ol\alpha_k)\varepsilon_{\jcb,k}(x)\log K  \ll {K},
$
one has
\begin{align}
&\frac{p_{\phi_k(Y_k)}(\phi_k(x))}{p_{Y_k}(x)}\bigg(\frac{p_{\sqrt{\alpha_k}X_{k-1}}\big(\phi_k(x)\big)}{p_{X_k}(x)}\bigg)^{-1}  \notag\\ 
 & \qquad=1+\zeta_k(x)+O\bigg(\Big\|\frac{\partial\phi_k^{\star}(x)}{\partial x}-I_d\Big\|_{\mathrm{F}}^{2}\bigg) \notag \\ 
 &\qquad\quad +O\bigg(\frac{\sqrt{d(1-\ol\alpha_k)}\,\varepsilon_{\score,k}(x)\log^{3/2}K}{K}+ \frac{d(1-\ol\alpha_k) \varepsilon_{\jcb,k}(x)\log K}{K}+\frac{d\log^{3}K}{K^2}\bigg).
 \label{eq:yt}
\end{align}
Moreover, $\zeta_k$ satisfies $\zeta_k(x) \le 0$ and
\begin{align}
\bE\Big[\big|\zeta_{k}(X_k)\big|\Big]
	\lesssim
	\bE\left[\Big\|\frac{\partial\phi_{k}^{\star}}{\partial x}(X_k)-I_d\Big\|_{\mathrm{F}}^{2}\right]+
	\frac{d\log^{3}K}{K^{2}},  \label{eq:yt2}
\end{align} 
provided that $K \gtrsim d^{2}\log^{5}K$.
\end{lemma}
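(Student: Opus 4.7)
The plan is to decompose the target ratio in \eqref{eq:yt} into a Jacobian factor (coming from the deterministic map $\phi_k$ applied to $Y_k$) and a Gaussian-convolution factor (coming from the one-step forward transition $X_k = \sqrt{\alpha_k} X_{k-1} + \sqrt{1-\alpha_k} W_k$). For the first factor, the invertibility assumption $J_{\wh{s}_{X_k}}(x)+(1-\ol\alpha_k)^{-1} I_d \psd 0$ (which is used in Theorem~\ref{thm:convergence}) together with $1-\alpha_k \asymp (1-\ol\alpha_k)\log K/K$ makes $J_{\phi_k}(x) = I_d + \frac{1-\alpha_k}{2} J_{\wh{s}_{X_k}}(x)$ strictly positive definite with small deviation from identity, so the change-of-variables formula gives $p_{\phi_k(Y_k)}(\phi_k(x))/p_{Y_k}(x) = |\det J_{\phi_k}(x)|^{-1}$. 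Taking a second-order expansion $\log\det(I_d + A) = \tr(A) - \tfrac12\|A\|_{\mathrm F}^2 + O(\|A\|_{\mathrm F}^3)$ converts this into a sum of $\tfrac{1-\alpha_k}{2}\tr J_{\wh{s}_{X_k}}(x)$, quadratic Jacobian terms, and a controlled cubic remainder.

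For the second factor, I would set $V \defn \sqrt{\alpha_k} X_{k-1}$ and apply Bayes: since $X_k \mid V = v \sim \cN(v, (1-\alpha_k) I_d)$, one has
\begin{align*}
\frac{p_V(\phi_k(x))}{p_{X_k}(x)} = \frac{p_{V\mid X_k}(\phi_k(x)\mid x)}{\varphi_{1-\alpha_k}(x-\phi_k(x))}.
\end{align*}
The Gaussian factor $\varphi_{1-\alpha_k}(x-\phi_k(x))$ expands cleanly using $\phi_k(x)-x = \tfrac{1-\alpha_k}{2}\wh{s}_{X_k}(x)$. For the conditional density, I would express $p_{V\mid X_k}(\phi_k(x)\mid x) = \bE\!\left[\varphi_{1-\alpha_k}(x - V)\,\tfrac{p_V(\phi_k(x))}{p_V(V)\varphi_{1-\alpha_k}(x - V)}\,\middle|\,X_k = x\right]\cdot p_{X_k}(x)/p_{X_k}(x)$ and reorganize it into an expression of the form $\bE[\exp(\text{small})\mid X_k = x]$; this is where the nonpositive term $\zeta_k(x) = \log\bE[\exp(\cdot)\mid X_k = x] - \bE[\cdot \mid X_k=x]$ (or a minor variant thereof) arises. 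Tweedie's formula $\bE[V \mid X_k=x] = x + (1-\alpha_k) s_{X_k}(x)$ ensures that the ``expected'' linear contribution matches exactly twice the step size of $\phi_k^\star(x)-x$; this cancellation is what makes the probability flow ODE preserve marginals to leading order and forces the leftover linear pieces to merge with the trace term from the Jacobian factor.

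Combining the two factors, all first-order terms organized around the true score and true Jacobian recombine into $\|\partial\phi_k^\star/\partial x - I_d\|_{\mathrm F}^2$, while deviations between $(\wh{s}_{X_k}, J_{\wh{s}_{X_k}})$ and $(s_{X_k}, J_{s_{X_k}})$ produce error terms bounded by $\|\wh s_{X_k}(x) - s_{X_k}(x)\|_2$ times something of order $\sqrt{d\log K/(1-\ol\alpha_k)}$ (using pointwise score-norm bounds of this order coming from $\theta_k(x)\lesssim 1$) and $\|J_{\wh s_{X_k}}(x) - J_{s_{X_k}}(x)\|$ times $(1-\alpha_k)$. Multiplying by $1-\alpha_k \asymp (1-\ol\alpha_k)\log K/K$ yields exactly the two error terms $\sqrt{d(1-\ol\alpha_k)}\,\veps_{\score,k}(x)\log^{3/2} K/K$ and $d(1-\ol\alpha_k)\veps_{\jcb,k}(x)\log K/K$ in \eqref{eq:yt}. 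The Jensen-type inequality for $\zeta_k$ immediately gives $\zeta_k(x)\leq 0$, and \eqref{eq:yt2} follows by using the sub-additivity of $|\log\bE[\exp(\cdot)]|$ and variance-type bounds: conditional on $X_k=x$, the log-ratio inside the expectation has conditional variance on the order of $(1-\alpha_k)^2 \|J_{s_{X_k}}(x)\|_{\mathrm F}^2$ plus $(1-\alpha_k)\cdot d\log^2 K/K$ coming from higher-order Gaussian fluctuations of $V \mid X_k$.

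The main obstacle will be carefully bookkeeping the various Taylor remainders so that the factors of $(1-\ol\alpha_k)$ and $\sqrt{d(1-\ol\alpha_k)}$ appear exactly as stated in \eqref{eq:yt}---this improvement over \citet[Theorem~1]{li2024sharp} requires keeping track of the smallness of $\phi_k(x)-x$ (itself proportional to $1-\alpha_k$) at every step rather than using cruder bounds, and exploiting the condition $\theta_k(x)\lesssim 1$ to convert pointwise score and Jacobian bounds of the \emph{true} quantities (from Lemma~\ref{lemma:score-Jacobian-UB-prob-high} type arguments) into the right dependence on $\log K$. Secondary obstacles include (i) verifying that $\phi_k$ is actually invertible on the relevant region (which follows from the PSD assumption on $J_{\wh s_{X_k}} + (1-\ol\alpha_k)^{-1}I_d$) and (ii) proving the stated upper bound on $\bE[|\zeta_k(X_k)|]$, which requires integrating out the conditional variance and using the moment bound $\bE[\|J_{s_{X_k}}(X_k)\|_{\mathrm F}^2] \lesssim d^2/(1-\ol\alpha_k)^2$ familiar from prior probability-flow analyses, together with the assumption $K \gtrsim d^2 \log^5 K$ to absorb the resulting $d^2 \log K$ factors into $d \log^3 K /K^2$.
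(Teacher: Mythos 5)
Your global architecture is reasonable — splitting the ratio into a change-of-variables (determinant) factor for $\phi_k$ and a convolution factor for $p_{\sqrt{\alpha_k}X_{k-1}}/p_{X_k}$, identifying $\zeta_k$ as a Jensen gap, and using the Tweedie cancellation $\bE[\sqrt{\alpha_k}X_{k-1}\mid X_k=x]-x=(1-\alpha_k)s_{X_k}(x)=2(\phi_k^\star(x)-x)$ — and the bookkeeping targets ($1-\alpha_k\asymp(1-\ol\alpha_k)\log K/K$, pointwise score bounds from $\theta_k(x)\lesssim 1$) are the right ones. But the central technical step is missing, and the way you propose to carry it out would fail. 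Your Bayes manipulation for the convolution factor, conditioning on $V\defn\sqrt{\alpha_k}X_{k-1}$, reduces to the identity $p_V(\phi_k(x))/p_{X_k}(x)=\bE\big[p_V(\phi_k(x))/p_V(V)\mid X_k=x\big]$, which is circular: the unknown density $p_V$ (equivalently $p_{X_{k-1}}$) still sits inside the expectation, evaluated at two points whose separation is of order $\sqrt{d(1-\alpha_k)}$ under the posterior. Turning that into an "$\bE[\exp(\text{small})\mid X_k=x]$" expression requires expanding $\log p_V$, i.e.\ quantitative smoothness of $p_{X_{k-1}}$, which is exactly what cannot be assumed here (the whole point of the lemma is to avoid smoothness/Lipschitz-score conditions on the data distribution).

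The paper's proof avoids this by a different, and essentially forced, choice of latent variable: it writes \emph{both} $p_{X_k}$ and $p_{\sqrt{\alpha_k}X_{k-1}}$ as Gaussian mixtures over the \emph{same} mixing variable $\sqrt{\ol\alpha_k}X_0$, with bandwidths $1-\ol\alpha_k$ and $\alpha_k-\ol\alpha_k$ respectively. Then the ratio becomes an explicit conditional expectation over $X_0\mid X_k=x$ of a fully explicit exponential (no unknown densities inside; see \eqref{eqn:fei}), whose exponent is small on the typical set $\cE^\typ_c$ where $\|x-\sqrt{\ol\alpha_k}x_0\|_2\lesssim\sqrt{\theta_k(x)d(1-\ol\alpha_k)\log K}$, with sub-exponential tail control from \eqref{eqn:brahms}; the learning-rate properties of Lemma~\ref{lemma:step-size} then yield exactly the $\sqrt{d(1-\ol\alpha_k)}\,\veps_{\score,k}(x)\log^{3/2}K/K$, $d(1-\ol\alpha_k)\veps_{\jcb,k}(x)\log K/K$, and $d\log^3K/K^2$ terms, after which the remaining steps (determinant expansion, definition of $\zeta_k$ and the bound \eqref{eq:yt2}) follow the argument of \citet[Lemma 5]{li2024sharp}. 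Your variance heuristic for \eqref{eq:yt2} is in the right spirit (indeed $\|\partial\phi_k^\star/\partial x-I_d\|_{\mathrm F}=\frac{1-\alpha_k}{2}\|J_{s_{X_k}}(x)\|_{\mathrm F}$), but making it rigorous again relies on the posterior concentration of $\sqrt{\ol\alpha_k}X_0$ given $X_k=x$, i.e.\ on the mixture-over-$X_0$ representation you do not have. To repair the proposal, replace the conditioning on $V$ by the two-bandwidth mixture representation over $X_0$ and carry out the typical-set truncation; without that ingredient the expansion of the convolution factor does not go through.
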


Using Lemma~\ref{lemma:density-ratio} in place of \citet[Lemma 5]{li2024sharp}, one can repeat the argument for \citet[Theorem 1]{li2024sharp}  (see \citet[Section 5]{li2024sharp}) to complete the proof of Theorem~\ref{thm:convergence}. For the sake of clarity, we provide a high-level proof sketch below. 

Consider a ``typical'' sequence $(y_K,y_{K-1},\dots,y_1)$ generated by the sampling process \eqref{eq:DDIM} from an initialization $y_K$. From \eqref{eq:yt} in Lemma~\ref{lemma:density-ratio} and the recursion \eqref{eq:recursion}, one sees that for each $2\leq k\leq K$, typical points $y_{k-1},y_k$ satisfy
\begin{align*}
	\frac{p_{Y_{k-1}}(y_{k-1})}{p_{X_{k-1}}(y_{k-1})} \approx \frac{p_{Y_k}(y_k)}{p_{X_k}(y_k)},
\end{align*}
by chaining these ratios, one obtains
\begin{align}
\frac{p_{X_1}(y_{1})}{p_{Y_1}(y_{1})} & = \bigg\{ 1+O\bigg(\frac{d\log^{4}K}{K} + \sum_{k=2}^K|\zeta_k(y_k)| + \sum_{k=2}^K\Big\|\frac{\partial \phi^{\star}_k(y_k)}{\partial x} - I_d\Big\|_{\mathrm{F}}^2+S_K(y_K)\bigg)\bigg\} \frac{p_{X_K}(y_K)}{p_{Y_K}(y_K)},	\label{eq:pt-qt-equiv-ODE-St-taui}
\end{align}
where
\begin{align*}
	S_{K}(y_K) &\coloneqq \frac{\log K}{K}\sum_{k=2}^K \Big(\sqrt{d(1-\ol\alpha_k)\log K}\,\varepsilon_{\score, k}(y_k)+d(1-\ol\alpha_k)\varepsilon_{\jcb, k}(y_k)\Big).
\end{align*}
In addition, one can show that
\begin{align}
	\sum_{k=2}^K\bE\bigg[\Big\|\frac{\partial\phi_{k}^{\star}}{\partial x}(X_k)-I_d\Big\|_{\mathrm{F}}^{2}\bigg] \lesssim \frac{d\log^2 K}{K}. \label{eq:phi-star-gd-frob}
\end{align}

Returning to the integral expression for the TV distance in \eqref{eqn:ode-tv-123}, we can split the domain of integration into some ``typical'' set $\cH$ and its complement.
For typical point $y_1$, we can apply \eqref{eq:pt-qt-equiv-ODE-St-taui} to obtain
\begin{align*}
	& \int_{y_1\in\cH}\bigg(\frac{p_{X_1}(y_1)}{p_{Y_1}(y_1)} - 1\bigg)p_{Y_1}(y_1)\diff  y_1\\ 
	& \qquad \numpf{i}{=} \int_{y_K\in\cH'}O\bigg(\frac{d\log^{4}K}{K} + \sum_{k=2}^K|\zeta_k(y_k)| + \sum_{k=2}^K\Big\|\frac{\partial \phi^{\star}_k(y_k)}{\partial x} - I_d\Big\|_{\mathrm{F}}^2+S_{K}(y_{K})\bigg) \frac{p_{X_K}(y_{K})}{p_{Y_K}(y_{K})} p_{Y_K}(y_{K}) \diff y_K\\ 
	&\qquad\quad + \int_{y_K\in\cH'} \bigg(\frac{p_{X_K}(y_{K})}{p_{Y_K}(y_{K})}-1\bigg)p_{Y_K}(y_{K}) \diff y_K \nonumber\\
	& \qquad \,\lesssim \TV(p_{X_K},p_{Y_K}) + \frac{d\log^{4}K}{K} +  \int_{y_K\in\bR^d}\bigg(\sum_{k=2}^K|\zeta_k(y_k)| + \sum_{k=2}^K\Big\|\frac{\partial \phi^{\star}_k(y_k)}{\partial x} - I_d\Big\|_{\mathrm{F}}^2+S_{K}(y_{K})\bigg) p_{X_K}(y_{K}) \diff y_K \nonumber\\
	&\qquad \numpf{ii}{\lesssim}
	\frac{1}{K^{c_1/8-c_2/2}} +\frac{d\log^4 K}{K}
	+ \sum_{k=2}^K\bE\Big[\big|\zeta_{k}(X_k)\big|\Big]
	+ \sum_{k=2}^K\bE\bigg[\Big\|\frac{\partial\phi_{k}^{\star}}{\partial x}(X_k)-I_d\Big\|_{\mathrm{F}}^{2}\bigg]\\ 
	& \qquad \quad +\frac{\sqrt{d}\,\log^{3/2} K}{K}\sum_{k=2}^K\sqrt{(1-\ol\alpha_k)}\,\bE\big[\varepsilon_{\score, k}(X_k)\big] + \frac{d\log K}{K}\sum_{k=2}^K(1-\ol\alpha_k)\bE\big[\varepsilon_{\jcb, k}(X_k)\big] \\
	& \qquad \numpf{iii}{\lesssim} \frac{d\log^4 K}{K} +\frac{\sqrt{d}\,\log^{3/2} K}{K}\sum_{k=1}^K\sqrt{(1-\ol\alpha_k)}\,\bE\big[\varepsilon_{\score, k}(X_k)\big] + \frac{d\log K}{K}\sum_{k=1}^K(1-\ol\alpha_k)\bE\big[\varepsilon_{\jcb, k}(X_k)\big],
\end{align*}
where (i) is true since the randomness of $Y_1$ is due to $Y_K$ and we choose set $\cH'$ such that $\{y_K\in\cH'\}=\{y_1\in\cH\}$; (ii) uses Lemma~\ref{lemma:mixing} and the condition $\bE[\|X_0\|_2^2]\leq K^{c_2}$;
(iii) uses the condition $c_1/8-c_2/2\geq 1$ as well as \eqref{eq:yt2} and \eqref{eq:phi-star-gd-frob}. We note that the above expectations are taken over the randomness of $(X_k)$.

Meanwhile, one can show that the contribution from remaining atypical points is negligible. Putting these two observations together and taking the expectation over the score estimator $\{\wh{s}_{X_k}\}_{k=1}^K$, we find that
\begin{align*}
\bE\big[\TV(p_{X_1}, p_{Y_1})\big] & \lesssim \frac{d\log^4 K}{K} +\sqrt{d}\,\log^{3/2} K\sqrt{\frac{1}{K}\sum_{k=1}^K(1-\ol\alpha_k)\varepsilon_{\score, k}^2} + \frac{d\log K}{K}\sum_{k=2}^K(1-\ol\alpha_k)\varepsilon_{\jcb, k} \\ 
& = \frac{d\log^4 K}{K} +\sqrt{d}\,\varepsilon_{\score}\log^{3/2} K + d\varepsilon_{\jcb}\log K,
\end{align*}
where the first step applies Jensen's inequality, and the last line follows from the definitions of $\varepsilon_{\score, k}$ and $\varepsilon_{\jcb, k}$ in \eqref{eq:error-score-jab-defn}. This concludes the proof sketch of Theorem~\ref{thm:convergence}.

\paragraph*{Proof of Lemma~\ref{lemma:density-ratio}.}
Fix an arbitrary $x\in\bR^d$ such that $\theta_k(x) \lesssim 1$. Recalling the definition of $\phi_k$ in \eqref{eq:phi-phi-star-defn}, we further define
\begin{align}
	u &\coloneqq x - \phi_k(x) = x - \phi_k^\star(x) - \big(\phi_k(x) - \phi_k^\star(x)\big) = -\frac{1-\alpha_k}2 s_{X_k}(x)- \big(\phi_k(x) - \phi_k^\star(x)\big) \notag\\
	&= \frac{1-\alpha_k}{2(1-\overline{\alpha}_k)}\bE\big[X_k - \sqrt{\overline{\alpha}_k}X_0 \mid X_k=x\big]-\frac{1-\alpha_k}{2}\big(\wh{s}_{X_k}(x) - s_{X_k}(x)\big)
	\label{eq:defn-u-Lemma-main-ODE}.
\end{align}
Notice that $\sqrt{\alpha_k} X_{k-1}$ satisfies
\begin{equation}
	\sqrt{\alpha_k} X_{k-1} = \sqrt{\alpha_k} \big(\sqrt{\overline{\alpha}_{k-1}}X_0+ \sqrt{1-\overline{\alpha}_{k-1}}\,W_{k-1}'\big) 
	= \sqrt{\overline{\alpha}_k}X_0+ \sqrt{\alpha_k-\overline{\alpha}_k}\,W_{k-1}'
	\notag
\end{equation}
where $W_{k-1}'\sim \cN(0,I_d)$ is some $d$-dimensional standard Gaussian random vector independent of $X_0$.
This allows us to express
\begin{align}
&\frac{p_{\sqrt{\alpha_k}X_{k-1}}\big(\phi_k(x)\big)}{p_{X_k}(x)}    
=\frac{1}{p_{X_k}(x)}\int_{x_{0}}p_{X_{0}}(x_{0}) p_{\sqrt{\alpha_k-\overline{\alpha}_k}W}\big(\phi_k(x)-\sqrt{\overline{\alpha}_k}x_{0}\big)\mathrm{d}x_{0}  \notag\\
& \quad =\frac{1}{p_{X_k}(x)}\int_{x_{0}}p_{X_{0}}(x_{0}){\big(2\pi(\alpha_k-\overline{\alpha}_k)\big)^{-d/2}}\exp\biggl(-\frac{\big\|\phi_k(x)-\sqrt{\overline{\alpha}_k}x_{0}\big\|_{2}^{2}}{2(\alpha_k-\overline{\alpha}_k)}\bigg)\mathrm{d}x_{0}\notag\\
 &\quad  \overset{\mathrm{(i)}}{=}\frac{1}{p_{X_k}(x)}\int_{x_{0}}p_{X_{0}}(x_{0}){\big(2\pi(\alpha_k-\overline{\alpha}_k)\big)^{-d/2}}\exp\biggl(-\frac{\big\| x-\sqrt{\overline{\alpha}_k}x_{0}\big\|_{2}^{2}}{2(1-\overline{\alpha}_k)}\bigg)\notag\\
 & \qquad\qquad\qquad\qquad \cdot\exp\biggl(-\frac{(1-\alpha_k)\big\| x-\sqrt{\overline{\alpha}_k}x_{0}\big\|_{2}^{2}}{2(\alpha_k-\overline{\alpha}_k)(1-\overline{\alpha}_k)}-\frac{\|u\|_{2}^{2}-2u^{\top}\big(x-\sqrt{\overline{\alpha}_k}x_{0}\big)}{2(\alpha_k-\overline{\alpha}_k)}\bigg)\mathrm{d}x_{0}\notag\\
 & \quad \overset{\mathrm{(ii)}}{=}\biggl(\frac{1-\overline{\alpha}_k}{\alpha_k-\overline{\alpha}_k}\biggr)^{d/2}
 \int_{x_{0}}p_{X_{0}\mid X_k}(x_{0}\mid x)\exp\biggl(-\frac{(1-\alpha_k)\big\| x-\sqrt{\overline{\alpha}_k}x_{0}\big\|_{2}^{2}}{2(\alpha_k-\overline{\alpha}_k)(1-\overline{\alpha}_k)}-\frac{\|u\|_{2}^{2}-2u^{\top}(x-\sqrt{\overline{\alpha}_k}x_{0})}{2(\alpha_k-\overline{\alpha}_k)}\bigg)\mathrm{d}x_{0} 
	\label{eqn:fei},
\end{align}
where (i) uses \eqref{eq:defn-u-Lemma-main-ODE} and 
(ii) holds due to the Bayes rule that
\begin{align}
\notag
p_{X_{0}\mymid X_k}(x_{0}\,|\,x)=\frac{p_{X_{0}}(x_{0})}{p_{X_k}(x)}p_{X_k\mymid X_{0}}(x\,|\,x_{0})=\frac{p_{X_{0}}(x_{0})}{p_{X_k}(x)}\big(2\pi(1-\overline{\alpha}_k)\big)^{-d/2}\exp\biggl(-\frac{\| x-\sqrt{\overline{\alpha}_k}x_{0}\|_{2}^{2}}{2(1-\overline{\alpha}_k)}\bigg).
\end{align}

This suggests we focus on controlling \eqref{eqn:fei}. Towards this, let us define the set
\begin{equation}
	\mathcal{E}^{\typ}_c
	\coloneqq\Big\{ x_0\in\bR^d:\big\| x-\sqrt{\overline{\alpha}_k}x_{0}\big\|_{2}\leq 5c\sqrt{\theta_k(x) d(1-\overline{\alpha}_k)\log K}\Big\}	 
	\label{eq:defn-Ec-typical-set}
\end{equation}
for any integer $c\geq 2$. As shown in \citet[Lemma 1]{li2024sharp}, the set $\mathcal{E}^{\typ}_c$ satisfies
\begin{align}
\label{eqn:brahms}
	\mathbb{P}\big\{X_0 \notin \mathcal{E}^{\typ}_c \mid X_k = x\big\} &\leq \exp\bigl(-c^2\theta_k(x) d\log K\bigr)
\end{align}
for all $c\geq 2$,
\begin{align}
	\mathbb{E}\Big[\big\|X_k -  \sqrt{\overline{\alpha}_k}X_{0} \big\|_{2}\,\big|\,X_k=x\Big] &\lesssim \sqrt{\theta_k(x) d(1-\ol\alpha_k)\log K}, \label{eqn:brahms2}
\end{align}
and the vector $u$ defined in \eqref{eq:defn-u-Lemma-main-ODE} satisfies
\begin{align}
\|u\|_{2} & \le \frac{1-\alpha_k}{2}\varepsilon_{\score,k}(x) + \frac{1-\alpha_k}{2(1-\overline{\alpha}_k)}\mathbb{E}\Big[\big\|X_k -  \sqrt{\overline{\alpha}_k}X_{0} \big\|_{2}\,\big|\,X_k=x\Big] \notag\\
	& \leq \frac{1-\alpha_k}{2}\varepsilon_{\score,k}(x) + \frac{6(1-\alpha_k)}{1-\overline{\alpha}_k} \sqrt{\theta_k(x) d(1 - \overline{\alpha}_k)\log K}.
	\label{eqn:johannes}
\end{align}
For any $x_0 \in \mathcal{E}^{\typ}_c$, one can use \eqref{eqn:brahms}, \eqref{eqn:johannes}, and \eqref{eq:step-size} from Lemma~\ref{lemma:step-size} to bound
\begin{subequations}
	\label{eq:long-UB-123}
\begin{align}
\frac{(1-\alpha_k)\big\| x-\sqrt{\overline{\alpha}_k}x_{0}\big\|_{2}^{2}}{2(\alpha_k-\overline{\alpha}_k)(1-\overline{\alpha}_k)} & \leq \frac{25c^2}{2} \frac{(1-\alpha_k) \theta_k(x)d\log K}{\alpha_k-\overline{\alpha}_k}\leq25c^{2}\frac{c_{1}\theta_k(x)d\log^{2}K}{K};\label{eq:long-UB-1}\\
\frac{\|u\|_{2}^{2}}{2(\alpha_k-\overline{\alpha}_k)} & \leq\frac{(1-\alpha_k)^{2}}{4(\alpha_k-\overline{\alpha}_k)}\varepsilon_{\score,k}^{2}(x)+\frac{36(1-\alpha_k)^{2}\theta_k(x)d\log K}{(\alpha_k-\overline{\alpha}_k)(1-\overline{\alpha}_k)} \notag\\
& \leq \frac14\bigg(\frac{1-\alpha_k}{\alpha_k-\overline{\alpha}_k}\bigg)^2 \alpha_k(1-\ol\alpha_{k})\varepsilon_{\score,k}^{2}(x)+\frac{36(1-\alpha_k)^{2}\theta_k(x)d\log K}{(\alpha_k-\overline{\alpha}_k)(1-\overline{\alpha}_k)} \notag\\
 & \leq \frac{c_{1}^{2}(1-\ol\alpha_k)\varepsilon_{\score,k}^{2}(x)\log^{2}K}{K^2}+72\frac{c_{1}^{2}\theta_k(x)d\log^{3}K}{K^2},\label{eq:long-UB-2}\\
\bigg|\frac{u^{\top}(x-\sqrt{\overline{\alpha}_k}x_{0})}{\alpha_k-\overline{\alpha}_k}\bigg| & \leq\frac{\|u\|_{2}\big\| x-\sqrt{\overline{\alpha}_k}x_{0}\big\|_{2}}{\alpha_k-\overline{\alpha}_k}\nonumber\\
 & \leq\frac{5c(1-\alpha_k)}{2(\alpha_k-\overline{\alpha}_k)}\varepsilon_{\score,k}(x)\sqrt{\theta_k(x)d(1-\overline{\alpha}_k)\log K}+\frac{30c(1-\alpha_k)\theta_k(x)d\log K}{\alpha_k-\overline{\alpha}_k}\label{eq:long-UB-3}\\
 & \leq5c\frac{c_{1}\sqrt{\theta_k(x)d(1-\overline{\alpha}_{k})}\,\varepsilon_{\score,k}(x)\log^{3/2}K}{K}+60c\frac{c_{1}\theta_k(x)d\log^{2}K}{K}. \label{eq:long-UB-4}
\end{align}
\end{subequations}
As a result, the following holds for any $x_0\in \mathcal{E}^{\typ}_c$ with $c\geq 2$:
\begin{align}
-\frac{(1-\alpha_k)\big\| x-\sqrt{\overline{\alpha}_k}x_{0}\big\|_{2}^{2}}{2(\alpha_k-\overline{\alpha}_k)(1-\overline{\alpha}_k)}-\frac{\|u\|_{2}^{2}}{2(\alpha_k-\overline{\alpha}_k)}+\frac{u^{\top}\big(x-\sqrt{\overline{\alpha}_k}x_{0}\big)}{\alpha_k-\overline{\alpha}_k} \leq \frac{u^{\top}\big(x-\sqrt{\overline{\alpha}_k}x_{0}\big)}{\alpha_k-\overline{\alpha}_k}
	\le c\theta_k(x)d, \label{eq:multi-term-UB-456}
\end{align}
as long as
\[
\frac{10c_{1}\sqrt{1-\ol\alpha_k}\,\varepsilon_{\score,k}(x)\log^{\frac{3}{2}}K}{K}\leq\sqrt{\theta_k(x)d}\qquad\text{and}\qquad K\geq 120c_{1}\log^{2}K,
\]
which is satisfied by our condition that $c_1\sqrt{1-\ol\alpha_k}\,\varepsilon_{\score,k}(x)\log^{3/2} K\ll K\sqrt{\theta_{k}(x)d}$ and $K\gtrsim d\log^3 K$.

With the above preparation in place, let us begin to bound the integral over the typical set $\cE^\typ_2$. Recalling the definition of $u$ in \eqref{eq:defn-u-Lemma-main-ODE}, we first write
\begin{align}
& \int_{x_0\in\cE^\typ_2}p_{X_{0}\mid X_k}(x_{0}\mid x)\exp\biggl(-\frac{(1-\alpha_k)\big\| x-\sqrt{\overline{\alpha}_k}x_{0}\big\|_{2}^{2}}{2(\alpha_k-\overline{\alpha}_k)(1-\overline{\alpha}_k)}-\frac{\|u\|_{2}^{2}-2u^{\top}(x-\sqrt{\overline{\alpha}_k}x_{0})}{2(\alpha_k-\overline{\alpha}_k)}\bigg)\mathrm{d}x_{0} \notag\\
&\quad = \int_{x_0\in\cE^\typ_2}p_{X_{0}\mid X_k}(x_{0}\mid x)\exp\biggl(-\frac{1-\alpha_k}{2(\alpha_k-\overline{\alpha}_k)}\bigl(\wh{s}_{X_k}(x) - s_{X_k}(x)\bigr)^\top\bigl(x-\sqrt{\overline{\alpha}_k}x_{0}\bigr)-\frac{\|u\|_{2}^2}{2(\alpha_k-\overline{\alpha}_k)}\bigg) \notag\\
& \qquad\quad \cdot 
\exp\biggl(\frac{1-\alpha_k}{2(1-\overline{\alpha}_k)(\alpha_k-\overline{\alpha}_k)}\Bigl((x-\sqrt{\overline{\alpha}_k}x_{0})^\top\bE\big[X_k - \sqrt{\overline{\alpha}_k}X_0 \mid X_k=x\big]-\big\| x-\sqrt{\overline{\alpha}_k}x_{0}\big\|_{2}^{2}\Bigr)\bigg) \mathrm{d}x_{0}. \label{eq:conv-1}
\end{align}
%
For any $x_{0}\in\mathcal{E}_{2}^{\typ}$, using the bounds in \eqref{eq:score-error-pointwise}, \eqref{eq:defn-Ec-typical-set}, and \eqref{eq:long-UB-2} yields
\begin{align*}
	& \frac{1-\alpha_k}{\alpha_k-\overline{\alpha}_k}\bigl\|\wh{s}_{X_k}(x) - s_{X_k}(x)\bigr\|_{2}\bigl\|x-\sqrt{\overline{\alpha}_k}x_{0}\bigr\|_{2}+\frac{\|u\|_{2}^{2}}{\alpha_k-\overline{\alpha}_k} \\ 
	& \qquad\leq \frac{1-\alpha_k}{\alpha_k-\overline{\alpha}_k} \varepsilon_{\score,k}(x) \cdot 10\sqrt{\theta_k(x) d(1-\overline{\alpha}_k)\log K} + \frac{c_{1}^{2}(1-\ol\alpha_k)\varepsilon_{\score,k}^{2}(x)\log^{2}K}{K^2}+72\frac{c_{1}^{2}\theta_k(x)d\log^{3}K}{K^2}\\
	& \qquad \numpf{i}{\lesssim} \frac{c_1\sqrt{\theta_k(x)d(1-\ol\alpha_k)}\,\veps_{\score,k}(x)\log^{3/2}K}{K}+\frac{c_1^2(1-\ol\alpha_k)\varepsilon_{\score,k}^{2}(x)\log^{2}K}{K^{2}}+\frac{c_1^2\theta_k(x)d\log^{3}K}{K^{2}} \\ 
	& \qquad \numpf{ii}{\asymp} \frac{\sqrt{\theta_k(x)d(1-\ol\alpha_k)}\,\veps_{\score,k}(x)\log^{3/2}K}{K} +\frac{\theta_k(x)d\log^{3}K}{K^{2}} \\ 
	& \qquad \lesssim \frac{\sqrt{d(1-\ol\alpha_k)}\,\veps_{\score,k}(x)\log^{3/2}K}{K} +\frac{d\log^{3}K}{K^{2}} \lesssim 1,
\end{align*}
where (i) uses \eqref{eq:learning-rate-4}; (ii) follows from $c_1\asymp 1$ and the condition that $c_1\sqrt{1-\ol\alpha_k}\,\varepsilon_{\score,k}(x)\log^{3/2} K\ll K\sqrt{\theta_{k}(x)d}$; (iii) is true since $\theta_{k}(x)\lesssim1$ and $K\gtrsim d\log^3 K$.
It follows that
\begin{align}
 & \exp\biggl(-\frac{1-\alpha_{k}}{2(\alpha_k-\overline{\alpha}_k)}\bigl(\wh{s}_{X_k}(x) - s_{X_k}(x)\bigr)^\top\bigl(x-\sqrt{\overline{\alpha}_k}x_{0}\bigr)-\frac{\|u\|_{2}^{2}}{2(\alpha_k-\overline{\alpha}_k)}\bigg) \notag\\
 & \qquad=1+O\biggl(\frac{\sqrt{d(1-\ol\alpha_k)}\,\varepsilon_{\score,k}(x)\log^{3/2}K}{K}+\frac{d\log^{3}K}{K^{2}}\biggr).
	\label{eq:exp-usquare-approx}
\end{align}
Plugging this into \eqref{eq:conv-1} allows us to bound the integral over the typical set $\cE^\typ_2$:
\begin{align}
	&\int_{x_0\in\cE^\typ_2}p_{X_{0}\mid X_k}(x_{0}\mid x)\exp\biggl(-\frac{(1-\alpha_k)\big\| x-\sqrt{\overline{\alpha}_k}x_{0}\big\|_{2}^{2}}{2(\alpha_k-\overline{\alpha}_k)(1-\overline{\alpha}_k)}-\frac{\|u\|_{2}^{2}-2u^{\top}(x-\sqrt{\overline{\alpha}_k}x_{0})}{2(\alpha_k-\overline{\alpha}_k)}\bigg)\mathrm{d}x_{0} \notag \\ 
	& \qquad = \bigg\{1+O\biggl(\frac{\sqrt{d(1-\ol\alpha_k)}\,\varepsilon_{\score,k}(x)\log^{3/2}K}{K}+\frac{d\log^{3}K}{K^{2}}\biggr) \bigg\} \notag\\ 
	& \qquad \quad \cdot \int_{x_0\in\cE^\typ_2}p_{X_{0}\mid X_k}(x_{0}\mid x)\exp\biggl(\frac{(1-\alpha_k)\bigl((x-\sqrt{\overline{\alpha}_k}x_{0})^\top\bE\big[X_k - \sqrt{\overline{\alpha}_k}X_0 \mid X_k=x\big]-\big\| x-\sqrt{\overline{\alpha}_k}x_{0}\big\|_{2}^{2}\bigr)}{2(1-\overline{\alpha}_k)(\alpha_k-\overline{\alpha}_k)}\bigg) \mathrm{d}x_{0}. \label{eq:typical-1}
\end{align}

As for the integral over the complement set of $\cE^\typ_2$, we can derive
\begin{align}
 & \int_{x_{0}\notin\cE^\typ_2}p_{X_{0}\mymid X_k}(x_{0}\mymid x)\exp\biggl(-\frac{(1-\alpha_k)\big\| x-\sqrt{\overline{\alpha}_k}x_{0}\big\|_{2}^{2}}{2(\alpha_k-\overline{\alpha}_k)(1-\overline{\alpha}_k)}-\frac{\|u\|_{2}^{2}-2u^{\top}\big(x-\sqrt{\overline{\alpha}_k}x_{0}\big)}{2(\alpha_k-\overline{\alpha}_k)}\bigg)\mathrm{d}x_{0}\notag\\
 & \qquad \numpf{i}{\leq}\sum_{c=3}^{\infty}\int_{x_{0}\in\mathcal{E}_{c}^{\typ}\backslash\mathcal{E}_{c-1}^{\typ}}p_{X_{0}\mymid X_k}(x_{0}\mymid x)\exp\bigl(c\theta_k(x)d\bigr)\mathrm{d}x_{0} \notag\\
  & \qquad \numpf{ii}{\leq}\sum_{c=3}^{\infty}\exp\bigl(-c^{2}\theta_k(x)d\log K\bigr)\exp\bigl(c\theta_k(x)d\bigr)  \leq\sum_{c=3}^{\infty}\exp\biggl(-\frac{1}{2}c^{2}\theta_k(x)d\log K\biggr) \notag \\
 & \qquad \leq \exp\bigl(-\theta_k(x)d\log K\bigr) \lesssim \frac{d\log^3 K}{K^2}
	\label{eq:exp-UB-135702}
\end{align}
where (i) arises from \eqref{eq:multi-term-UB-456}; (ii) uses \eqref{eqn:brahms}; the last line holds as due our choice of $\theta_k$ in \eqref{eqn:choice-y}. 

With \eqref{eq:typical-1} and \eqref{eq:exp-UB-135702} in place, combining them with \eqref{eqn:fei} leads to
%
\begin{align}
 & \frac{p_{\sqrt{\alpha_k}X_{k-1}}\big(\phi_k(x)\big)}{p_{X_k}(x)} 
 \numpf{i}{=}  \biggl(1+O\Big(\frac{d\log^2K}{K^2}\Big)\biggr)\exp\biggl(\frac{d(1-\alpha_k)}{2(\alpha_k-\overline{\alpha}_k)}\biggr) \notag\\
 & \qquad\qquad \cdot \int_{x_{0}\in\mathcal{E}_{2}^{\typ}}p_{X_{0}\mid X_k}(x_{0}\mid x)\exp\biggl(-\frac{(1-\alpha_k)\big\| x-\sqrt{\overline{\alpha}_k}x_{0}\big\|_{2}^{2}}{2(\alpha_k-\overline{\alpha}_k)(1-\overline{\alpha}_k)}-\frac{\|u\|_{2}^{2}-2u^{\top}(x-\sqrt{\overline{\alpha}_k}x_{0})}{2(\alpha_k-\overline{\alpha}_k)}\bigg)\mathrm{d}x_{0} \notag\\
 & \qquad\qquad + O(1)\int_{x_{0}\notin\mathcal{E}_{2}^{\typ}}p_{X_{0}\mid X_k}(x_{0}\mid x)\exp\biggl(-\frac{(1-\alpha_k)\big\| x-\sqrt{\overline{\alpha}_k}x_{0}\big\|_{2}^{2}}{2(\alpha_k-\overline{\alpha}_k)(1-\overline{\alpha}_k)}-\frac{\|u\|_{2}^{2}-2u^{\top}(x-\sqrt{\overline{\alpha}_k}x_{0})}{2(\alpha_k-\overline{\alpha}_k)}\bigg)\mathrm{d}x_{0}   \notag\\
 & \quad \numpf{ii}{=} \bigg\{1+O\bigg(\frac{d\log^{3}K}{K^2}+\frac{\sqrt{d(1-\ol\alpha_k)}\,\varepsilon_{\score,k}(x)\log^{3/2}K}{K}\bigg)\bigg\} \exp\biggl(\frac{d(1-\alpha_k)}{2(\alpha_k-\overline{\alpha}_k)}\biggr) \notag\\ 
 & \qquad \cdot \int_{x_{0}\in\mathcal{E}_{2}^{\typ}}p_{X_{0}\mymid X_k}(x_{0}\mymid x)\exp\biggl(\frac{(1-\alpha_k)\bigl((x-\sqrt{\overline{\alpha}_k}x_{0})^\top\bE[X_k - \sqrt{\overline{\alpha}_k}X_0 \mid X_k=x]-\| x-\sqrt{\overline{\alpha}_k}x_{0}\|_{2}^{2}\bigr)}{2(1-\overline{\alpha}_k)(\alpha_k-\overline{\alpha}_k)}\bigg)\mathrm{d}x_{0}\notag\\
  & \qquad + O\biggl(\frac{d\log^3 K}{K^2}\biggr) \notag\\
 & \quad \numpf{iii}{=}O\bigg(\frac{d\log^{3}K}{K^2}+\frac{\sqrt{d(1-\ol\alpha_k)}\,\veps_{\score,k}\log^{3/2}K}{K}\bigg) +\exp\biggl(\frac{d(1-\alpha_k)}{2(\alpha_k-\overline{\alpha}_k)}\biggr)\notag\\
 & \qquad \cdot \int_{x_{0}\in\mathcal{E}_{2}^{\typ}}p_{X_{0}\mymid X_k}(x_{0}\mymid x)\exp\biggl(\frac{(1-\alpha_k)\bigl((x-\sqrt{\overline{\alpha}_k}x_{0})^\top\bE[X_k - \sqrt{\overline{\alpha}_k}X_0 \mid X_k=x]-\| x-\sqrt{\overline{\alpha}_k}x_{0}\|_{2}^{2}\bigr)}{2(1-\overline{\alpha}_k)(\alpha_k-\overline{\alpha}_k)}\bigg)\mathrm{d}x_{0}\notag\\
 & \quad \numpf{iv}{=}O\bigg(\frac{d\log^{3}K}{K^2}+\frac{\sqrt{d(1-\ol\alpha_k)}\,\veps_{\score,k}\log^{3/2}K}{K}\bigg) + \exp\biggl(\frac{d(1-\alpha_k)}{2(\alpha_k-\overline{\alpha}_k)}\biggr)\notag\\
 & \qquad \cdot \int_{x_0\in\bR^d} p_{X_{0}\mymid X_k}(x_{0}\mymid x)\exp\biggl(\frac{(1-\alpha_k)\bigl((x-\sqrt{\overline{\alpha}_k}x_{0})^\top\bE[X_k - \sqrt{\overline{\alpha}_k}X_0 \mid X_k=x]-\| x-\sqrt{\overline{\alpha}_k}x_{0}\|_{2}^{2}\bigr)}{2(1-\overline{\alpha}_k)(\alpha_k-\overline{\alpha}_k)}\bigg)\mathrm{d}x_{0},
	\label{eq:refined-p-ratio-X0-Xt-135}
\end{align}
Here, (i) holds as one can use \eqref{eq:learning-rate} and \eqref{eq:step-size} to derive
\begin{align*}
\biggl(\frac{1-\overline{\alpha}_k}{\alpha_k-\overline{\alpha}_k}\biggr)^{d/2} &= \biggl(1+\frac{1-\alpha_k}{\alpha_k-\ol\alpha_{k}}\bigg)^{d/2} = \biggl(1+O\Big(\frac{d\log^2K}{K^2}\Big)\biggr)\exp\bigg(\frac{d(1-\alpha_k)}{2(\alpha_k-\ol\alpha_{k})}\bigg)  \lesssim 1;
\end{align*}
(ii) applies the bounds \eqref{eq:typical-1} and \eqref{eq:exp-UB-135702};
(iii) holds as the integral in (ii) is $O(1)$ since
for any $x\in\cE_2^\typ$ with $\theta_t(x)\lesssim 1$, we can use Lemma~\ref{lemma:step-size}, \eqref{eq:defn-Ec-typical-set}, and \eqref{eqn:brahms2} to obtain
\begin{align*}
  \frac{(1-\alpha_k)\big\|x-\sqrt{\overline{\alpha}_k}x_{0}\big\|_{2}^{2}}{(1-\overline{\alpha}_k)(\alpha_k-\overline{\alpha}_k)}
  &\lesssim\frac{(1-\alpha_k)d\log K}{\alpha_k-\overline{\alpha}_k}\lesssim\frac{d\log^2 K}{K}=o(1); \\ 
 \frac{(1-\alpha_k)\big\|x-\sqrt{\overline{\alpha}_k}x_{0}\big\|_{2}\big\|\bE[X_k - \sqrt{\overline{\alpha}_k}X_0 \mid X_k=x]\big\|_2}{(1-\overline{\alpha}_k)(\alpha_k-\overline{\alpha}_k)}
  &\lesssim\frac{(1-\alpha_k)d\log K}{\alpha_k-\overline{\alpha}_k}\lesssim\frac{d\log^2 K}{K}=o(1);
\end{align*}
(iv) is true because we can use a similar argument as in \eqref{eq:exp-UB-135702} to obtain
\begin{align*}
& \int_{x_{0}\notin\cE^\typ_2}p_{X_{0}\mymid X_k}(x_{0}\mymid x) \exp\biggl(\frac{(1-\alpha_k)\big[(x-\sqrt{\overline{\alpha}_k}x_{0})^{\top}\mathbb{E}[x-\sqrt{\overline{\alpha}_k}X_{0}\mid X_k=x]-\| x-\sqrt{\overline{\alpha}_k}x_{0}\|_{2}^{2}\big]}{2(\alpha_k-\overline{\alpha}_k)(1-\overline{\alpha}_k)}\bigg)\mathrm{d}x_{0} \notag\\
&\qquad \lesssim \exp\bigl(-\theta_k(x)d\log K\bigr) \lesssim \frac{d\log^3 K}{K^2}.
\end{align*}

With \eqref{eq:refined-p-ratio-X0-Xt-135} in place, we can repeat the remaining steps in the proof of \citet[Lemma 5]{li2024sharp} to obtain the desired bound:
\begin{align*}
 & \frac{p_{\phi_k(Y_k)}(\phi_k(x))}{p_{Y_k}(x)}\bigg(\frac{p_{\sqrt{\alpha_k}X_{k-1}}\big(\phi_k(x)\big)}{p_{X_k}(x)}\bigg)^{-1}\\
 & \qquad=1+\zeta_k(x)+O\bigg(\bigg\|\frac{\partial\phi_k^{\star}(x)}{\partial x}-I_d\bigg\|_{\mathrm{F}}^{2}+\frac{\sqrt{d(1-\ol\alpha_k)}\,\varepsilon_{\score,k}(x)\log^{3/2}K}{K}+\frac{d(1-\alpha_k) \varepsilon_{\jcb,k}(x)}{K}+\frac{d\log^{3}K}{K^{2}}\bigg) \\ 
 & \qquad=1+\zeta_k(x)+O\bigg(\bigg\|\frac{\partial\phi_k^{\star}(x)}{\partial x}-I_d\bigg\|_{\mathrm{F}}^{2}+\frac{\sqrt{d(1-\ol\alpha_k)}\,\varepsilon_{\score,k}(x)\log^{3/2}K}{K}+\frac{d(1-\ol\alpha_k) \varepsilon_{\jcb,k}(x)\log K }{K}+\frac{d\log^{3}K}{K^{2}}\bigg).
\end{align*}
Here, the last step arises from
\begin{align*}
	(1-\alpha_k)\veps_{\jcb,k}(x) =\frac{1-\alpha_k}{1-\ol\alpha_k}(1-\ol\alpha_k)\veps_{\jcb,k}(x) \lesssim \frac{\log K}{K}(1-\ol\alpha_k)\veps_{\jcb,k}(x)
\end{align*}
where we use \eqref{eq:alpha-t-lb} from Lemma~\ref{lemma:step-size}.

\subsection{Analysis for score matching (proof of Theorem~\ref{thm:score-error-X})}\label{sub:proof_of_theorem_ref_thm_score_error_x}

Before delving into the details, we remind readers of several notation. For $Z_t$ introduced in \eqref{eq:SDE-VE}, we denote its score function as $s_t(x)\defn s_{Z_t}(x)$ and the associated Jacobian matrix as $J_t(x) \defn J_{s_t}(x)$ (see \eqref{eq:Z-t-abbrev}). In addition, we recall $t_k \defn (1-\ol\alpha_k)/\ol\alpha_k+\tau$ in \eqref{eq:score-estimator-X}.

Now, we begin with Claim \eqref{eq:score-error-X}. By Theorem~\ref{thm:score-error}, one can derive
\begin{align*}
	\veps_{\score,k}^2 & = \bE\Big[ \big\| \wh{s}_{X_k}(X_k)-s_{X_k}(X_k) \big\|_2^2 \Big]  \\
	&\numpf{i}{=} \bE\Big[ \big\| \wh{s}_{t_k}(X_k/\sqrt{\ol\alpha_k})/\sqrt{\ol\alpha_k}-s_{{t_k}}(X_k/\sqrt{\ol\alpha_k})/\sqrt{\ol\alpha_k} \big\|_2^2 \Big] \nonumber \\ 
	&\numpf{ii}{=} \frac1{\ol\alpha_k} \bE\Big[ \big\| \wh{s}_{t_k}(Z_{t_k})-s_{{t_k}}(Z_{t_k}) \big\|_2^2 \Big] \nonumber \\
	& \numpf{iii}{\lesssim} \frac{C_d}{n} \frac{1}{\ol\alpha_k t_k} \Bigl( 1 + {\sigma^d}{ t_k^{-d/2}} \Bigr)(\log n)^{d/2+1} \\
	& \numpf{iv}{\leq} \frac{C_d'}{n} \frac{1}{1-\ol\alpha_k}\biggl\{ 1 + \sigma^d \bigg[\tau^{-d/2} \wedge \biggl(\frac{\ol\alpha_k}{1-\ol\alpha_k}\biggr)^{d/2} \bigg] \biggr\}(\log n)^{d/2+1}.
\end{align*}
where we $C_d' = 2^{d/2}C_d = (4\sqrt{2}/\sqrt{\pi})^d$.
Here, (i) holds due to the construction in \eqref{eq:score-estimator-X} that $\wh{s}_{X_k}(x)\defn\wh{s}_{t_k}(x/\sqrt{\ol\alpha_k})/\sqrt{\ol\alpha_k}$ and $s_{X_k}(x)=s_{{t_k}}(x/\sqrt{\ol\alpha_k})/\sqrt{\ol\alpha_k}$; (ii) arises from $X_k \disteq \sqrt{\ol\alpha_k}Z_{t_k}$; (iii) invokes \eqref{eq:score-error} in Theorem~\ref{thm:score-error}; (iv) arises from $t_k \geq (1-\ol\alpha_k)/\ol\alpha_k$ and $2t_k \geq (1-\ol\alpha_k)/\ol\alpha_k \vee \tau$.
Consequently, we arrive at
\begin{align}
	\veps_{\score} = \frac1K\sum_{k=1}^K(1-\ol\alpha_k)\veps^2_{\score,k}\lesssim  \frac{C_d'}{n}\bigg\{1 + \sigma^d \bigg[\tau^{-d/2} \wedge \frac{1}K\sum_{k=1}^K \biggl(\frac{\ol\alpha_k}{1-\ol\alpha_k}\biggr)^{d/2}\bigg]\bigg\}(\log n)^{d/2+1}. \label{eq:score-l2-error-temp1}
\end{align}
This suggests we need to bound the sum $\sum_{k=1}^K \bigl({\ol\alpha_k}/({1-\ol\alpha_k})\bigr)^{d/2}$. 

To this end, let $k_0\defn \max\{1\leq k \leq K\colon \ol\alpha_k \geq 1/2\}$, which is well-defined since $\alpha_1 = 1-K^{-c_0}\geq1/2$ for $K$ large enough. Notice that ${\ol\alpha_k}/({1-\ol\alpha_k})$ is decreasing in $k$, since $x\mapsto x/(1-x)$ in increasing in $x$ for $x\in(0,1)$ and $\ol\alpha_k$ is decreasing in $k$. This implies that $\bigl({\ol\alpha_k}/({1-\ol\alpha_k})\bigr)^{d/2} < 1$ for all $k>k_0$.
We can then derive
\begin{align}
	\frac1K \sum_{k=1}^K \biggl(\frac{\ol\alpha_k}{1-\ol\alpha_k}\biggr)^{d/2} 
	& \leq\frac1K  \sum_{k=1}^{k_0} \biggl(\frac{\ol\alpha_k}{1-\ol\alpha_k}\biggr)^{d/2} + \frac1K \sum_{k=k_0+1}^{K} 1 \notag\\
	&\numpf{i}{=} \frac1K\biggl(\frac{\ol\alpha_1}{1-\ol\alpha_1}\biggr)^{d/2} + \frac1{c_1\log K} \sum_{k=2}^{k_0} \frac{\ol\alpha_{k-1}-\ol\alpha_{k}}{\ol\alpha_{k-1}(1-\ol\alpha_{k-1})} \biggl(\frac{\ol\alpha_k}{1-\ol\alpha_k}\biggr)^{d/2}+1 \notag\\
	&\numpf{ii}{\leq} \frac1{K(1-\alpha_1)^{d/2}} + \frac1{c_1\log K} \sum_{k=2}^{k_0} \frac{2}{\ol\alpha_{k}(1-\ol\alpha_{k})} \biggl(\frac{\ol\alpha_k}{1-\ol\alpha_k}\biggr)^{d/2}\big(\ol\alpha_{k-1}-\ol\alpha_{k}\big)+1 \notag\\
	& = \frac1{K(1-\alpha_1)^{d/2}} + \frac2{c_1\log K} \sum_{k=2}^{k_0} \frac{\ol\alpha_{k}^{d/2-1}}{(1-\ol\alpha_{k})^{d/2+1}} \big(\ol\alpha_{k-1}-\ol\alpha_{k}\big)+1
	\label{eq:score-l2-error-temp2}
\end{align}
where (i) arises from our choice of the learning rates in \eqref{eq:learning-rate} that $\ol\alpha_{k-1}-\ol\alpha_{k}=\ol\alpha_{k-1}(1-\ol\alpha_{k-1})c_1\log K/K$; (ii) arises from $\alpha_1=\ol\alpha_1 < 1$ and \eqref{eq:step-size} in Lemma~\ref{lemma:step-size} that $\ol\alpha_{k-1}(1-\ol\alpha_{k-1}) \geq  \ol\alpha_{k}(1-\ol\alpha_{k})/2$.

Notice that for all $d\geq 1$, $x\mapsto x^{d/2-1}/(1-x)^{d/2+1}$ is increasing in $x$ for $x\in[1/2,1)$. Combining this with the fact that $\ol\alpha_k$ is decreasing in $k$ and $\ol\alpha_{k}\geq 1/2$ for all $1\leq k \leq k_0$, we can use the integral approximation to bound
\begin{align}
	\sum_{k=2}^{k_0} \frac{\ol\alpha_{k}^{d/2-1}}{(1-\ol\alpha_{k})^{d/2+1}} \big(\ol\alpha_{k-1}-\ol\alpha_{k}\big) 
	& \leq  \int_{\ol\alpha_{k_0}}^{\ol\alpha_1}\frac{x^{d/2-1}}{(1-x)^{d/2+1}}\diff x  =  \frac2d \bigg(\frac{x}{1-x}\bigg)^{d/2}\Bigg|_{\ol\alpha_{k_0}}^{\ol\alpha_{1}} \leq \frac2d\frac{1}{(1-\alpha_1)^{d/2}}  \notag.
\end{align}
Substituting this into \eqref{eq:score-l2-error-temp2} gives
\begin{align}
	\frac1K \sum_{k=1}^K \biggl(\frac{\ol\alpha_k}{1-\alpha_k}\biggr)^{d/2} 
	&\leq \frac1{K(1-\alpha_1)^{d/2}} + \frac4{c_1 d\log K}\frac{1}{(1-\alpha_1)^{d/2}}  
	\lesssim \frac{K^{c_0d/2}}{d\log K} 
	,\label{eq:score-l2-error-temp3}
\end{align}
where the last step holds since $1-\alpha_1=K^{-c_0}$ and
$K\gtrsim d\log K$.
Putting \eqref{eq:score-l2-error-temp1} and \eqref{eq:score-l2-error-temp3} together yields
\begin{align*}
	\veps^2_{\score} = \frac1K\sum_{k=1}^K(1-\ol\alpha_k)\veps^2_{\score,k} \lesssim \frac{C_d'}{n}\bigg\{1 + \sigma^d \bigg(\tau^{-d/2} \wedge \frac{K^{c_0d/2}}{d\log K}+1 \bigg)\bigg\}(\log n)^{d/2+1} .
\end{align*}
This establishes Claim \eqref{eq:score-error-X}.

Let us proceed to consider Claim \eqref{eq:Jacobian-error-X}. Similar to the above analysis, we first invoke Theorem~\ref{thm:Jacobian-error} to obtain
\begin{align*}
	\veps_{\jcb,k} &\defn \bE\Big[ \big\| J_{\wh{s}_{X_k}}(X_k)-J_{s_{X_k}}(X_k) \big\| \Big] \\
	&\numpf{i}{=} \bE\Big[ \big\| J_{\wh{s}_{t_k}}(X_k/\sqrt{\ol\alpha_k})/{\ol\alpha_k}-J_{t_k}(X_k/\sqrt{\ol\alpha_k})/{\ol\alpha_k} \big\| \Big] \nonumber \\ 
	&\numpf{ii}{=} \frac1{\ol\alpha_k} \bE\Big[ \big\| J_{\wh{s}_{t_k}}(Z_{t_k})-J_{t_k}(Z_{t_k}) \big\| \Big] \nonumber \\
	& \numpf{iii}{\leq} \sqrt{\frac{C_d}{n}} \frac{1}{\ol\alpha_k t_k}\Bigl( 1 + {\sigma^{d/4}}{t_k^{-d/4}} \Bigr)(\log n)^{d/4+1} + \frac{C_d}{n}\frac1{\ol\alpha_k t_k} {\sigma^d}{ t_k^{-d/2}}(\log n)^{d/2+2} \\
	& \numpf{iv}{\leq} \sqrt{\frac{C_d'}{n}} \frac{1}{1-\ol\alpha_k} \biggl\{ 1 + \sigma^{d/2}\bigg[\tau^{-d/4}\wedge \biggl(\frac{\ol\alpha_k}{1-\ol\alpha_k}\biggr)^{d/4}\bigg] \biggr\} (\log n)^{d/4+1} \\ 
	& \quad 
	+ \frac{C_d'}{n} \frac{\sigma^d}{1-\ol\alpha_k}\bigg\{\tau^{-d/2}\wedge\biggl(\frac{\ol\alpha_k}{1-\ol\alpha_k}\biggr)^{d/2}\bigg\} (\log n)^{d/2+2}.
\end{align*}
Here, (i) is true since $J_{s_{X_k}}(x)=J_{t_k}(x/\sqrt{\ol\alpha_k})/\ol\alpha_k$; (ii) arises from $X_k \disteq \sqrt{\ol\alpha_k}Z_{t_k}$; (iii) applies \eqref{eq:Jacobian-error} in Theorem~\ref{thm:Jacobian-error}; (iv) follows from $t_k \geq (1-\ol\alpha_k)/\ol\alpha_k$ and $2t_k \geq (1-\ol\alpha_k)/\ol\alpha_k \vee \tau$.
We can then bound
\begin{align}
	\veps_\jcb=\frac1K\sum_{k=1}^K(1-\ol\alpha_k)\veps_{\jcb,k} &\leq \sqrt{\frac{C_d'}{n}} \biggl\{ 1 + \sigma^{d/2}\bigg[\tau^{-d/4}\wedge \frac{1}{K}\sum_{k=1}^K\biggl(\frac{\ol\alpha_k}{1-\ol\alpha_k}\biggr)^{d/4}\bigg] \biggr\} (\log n)^{d/4+1} \notag\\ 
	& \quad+ \frac{C_d'\sigma^{d}}{n} \bigg\{\tau^{-d/2}\wedge \frac{1}{K}\sum_{k=1}^K\biggl(\frac{\ol\alpha_k}{1-\ol\alpha_k}\biggr)^{d/2}\bigg\} (\log n)^{d/2+2}\label{eq:jcb-error-temp1}
\end{align}

Applying the same argument for \eqref{eq:score-l2-error-temp3}, we can bound
\begin{align}
	\frac{1}{K}\sum_{k=1}^K\biggl(\frac{\ol\alpha_k}{1-\ol\alpha_k}\biggr)^{d/4}
	& \numpf{i}{\leq} \sum_{k=1}^{k_0} \frac1K \biggl(\frac{\ol\alpha_k}{1-\ol\alpha_k}\biggr)^{d/4} +1 \notag\\
	&\numpf{ii}{\leq} \frac1K\frac{1}{(1-\ol\alpha_1)^{d/4}} + \frac2{c_1\log K} \sum_{k=2}^{k_0} \frac{\ol\alpha_{k-1}-\ol\alpha_{k}}{\ol\alpha_{k}(1-\ol\alpha_{k})} \biggl(\frac{\ol\alpha_k}{1-\ol\alpha_k}\biggr)^{d/4} +1 \notag\\
	& \numpf{iii}{\leq} \frac1{K(1-\alpha_1)^{d/4}} +\frac2{c_1\log K}\int_{\ol\alpha_{k_0}}^{\ol\alpha_1}\frac{x^{d/4-1}}{(1-x)^{d/4+1}}\diff x +1 \notag\\
	&\leq \frac1{K(1-\alpha_1)^{d/4}} + \frac8{c_1 d\log K}\frac{1}{(1-\alpha_1)^{d/4}}+1 \notag\\ 
	& \numpf{iv}{\lesssim} \frac{K^{c_0d/4}}{d\log K}
	, \label{eq:jcb-error-temp2}
\end{align}
where (i) holds as ${\ol\alpha_k}/({1-\ol\alpha_k})$ is decreasing in $k$; (ii) holds since our learning rate schedule ensures that $\alpha_1<1$, $\ol\alpha_{k-1}-\ol\alpha_{k}=\ol\alpha_{k-1}(1-\ol\alpha_{k-1})c_1\log K/K$, and $\ol\alpha_{k-1}(1-\ol\alpha_{k-1})\geq \ol\alpha_{k}(1-\ol\alpha_{k})/2$; (iii) is true since $x\mapsto x^{d/4-1}/(1-x)^{d/4+1}$ is increasing in $x$ for $x\in[1/2,1)$ and $\ol\alpha_{k}\geq 1/2$ for all $1\leq k \leq k_0$; (iv) holds since $1-\alpha_1=K^{-c_0}$ and  
$K\gtrsim d\log K$.
Putting\eqref{eq:jcb-error-temp2} and \eqref{eq:score-l2-error-temp3} collectively into \eqref{eq:jcb-error-temp1} yields
\begin{align*}
	\veps_\jcb 
	&\defn \frac1K\sum_{k=1}^K(1-\ol\alpha_k)\veps_{\jcb,k} \\
	& \lesssim \sqrt{\frac{C_d'}n} \biggl\{ 1 + \sigma^{d/2} \bigg(\tau^{-d/4} \wedge \frac{K^{c_0d/4}}{d\log K}\bigg)  \biggr\} (\log n)^{d/4+1} + \frac{C_d'\sigma^d}n\bigg(\tau^{-d/2}\wedge \frac{K^{c_0d/2}}{d\log K}\bigg)(\log n)^{d/2+1},
\end{align*}
as claimed in \eqref{eq:Jacobian-error-X}.

Finally, let us prove Claim \eqref{eq:PSD-cond}. Recalling $\wh{s}_{X_k}(x)\defn\wh{s}_{t_k}(x/\sqrt{\ol\alpha_k})/\sqrt{\ol\alpha_k}$ and $t_k = (1-\ol\alpha_k)/\ol\alpha_k+\tau$, we have
\begin{align*}
	J_{\wh{s}_{X_k}}(x) + \frac{1}{1-\ol\alpha_k}I_d &= \frac{1}{\ol\alpha_k} J_{\wh{s}_{t_k}}(x/\sqrt{\ol\alpha_k}) + \frac{1}{1-\ol\alpha_k}I_d = \frac{1}{\ol\alpha_k}\bigg( J_{\wh{s}_{t_k}}(x/\sqrt{\ol\alpha_k}) + \frac1{t_k-\tau} I_d \bigg) \\ 
	&\geq \frac{1}{\ol\alpha_k}\bigg( J_{\wh{s}_{t_k}}(x/\sqrt{\ol\alpha_k}) + \frac1{t_k} I_d \bigg).
\end{align*}
Thus, it suffices to show that $J_{\wh{s}_t}(x)+\frac1t I_d \psd 0$ for all $x\in\bR^d$ and $t>0$. 

Examining the expression of $J_{\wh{s}_t}(x)$ in \eqref{eq:Jacobian-score-est-expression} and recognizing $\wh p_t(x) =n^{-1} \sum_{i=1}^n \vphi_{t}(X^{(i)}-x)$, we know that
\begin{align*}
\frac{\wh{H}_t(x)}{\wh{p}_t(x)} - \frac{\gd\wh{p}_t(x)\gd\wh{p}_t(x)^\top}{\wh{p}_t^2(x)} & =  \frac{1}{n} \sum_{i=1}^n \frac1{t^2} (X^{(i)}-x)(X^{(i)}-x)^\top \frac{\vphi_{t}(X^{(i)}-x)}{\wh p_t(x)} \notag\\ 
& \quad - \bigg(\frac1{n}\sum_{i=1}^n \frac1t(X^{(i)}-x)\frac{\vphi_{t}(X^{(i)}-x)}{\wh p_t(x)}\bigg)\bigg(\frac1{n}\sum_{i=1}^n \frac1t(X^{(i)}-x)\frac{\vphi_{t}(X^{(i)}-x)}{\wh p_t(x)}\bigg)^\top,
\end{align*}
which is a covariance matrix. 
Therefore, we obtain
\begin{itemize}
\item When $\wh{p}_t(x) \geq \eta_t$, \eqref{eq:J-1} confirms that $J_{\wh{s}_t}(x)+\frac1t I_d \psd 0$.

\item When $\wh{p}_t(x) \leq \eta_t/2$, the claim follows directly from \eqref{eq:J-2}.

\item When $\eta_t/2 < \wh{p}_t(x) < \eta_t$, since $0<\psi(x;\eta)<1$ and $\psi'(x;\eta)>0$ for all $x$, \eqref{eq:J-3} ensures $J_{\wh{s}_t}(x)+\frac1t I_d \psd 0$.
\end{itemize}


\subsection{Analysis for $L^2$ score estimation (proof of Proposition~\ref{thm:score-error})}\label{sub:proof_of_lemma_ref_lemma_score-error}


Recall the definitions of $\cE_t(x)$ and $\cF_t$ in \eqref{eq:event-E} and \eqref{eq:rho-t}, respectively. We first show that for any $x\in\cF_t$, on the event $\cE_t(x)$, the score estimator $\wh{s}_{t}(x)$ defined in \eqref{eq:score-estimate} satisfies
\begin{align}
	\wh{s}_{t}(x)=\frac{\wh{g}_t(x)}{\wh{p}_{t}(x)}. \label{eq:score-Et}
\end{align}
Indeed, for any $x\in\cF_t$, since we set $c_\eta = 4C_\cE \vee 2 \geq 2C_\cE(1+1/\sqrt{c_\eta})$, it is straightforward to verify that
\begin{align*}
	p_{t}(x) \geq \frac{c_\eta\log n}{n(2\pi t)^{d/2}} \geq 2C_\cE \bigg( \frac{\log n}{n(2\pi t)^{d/2}}+ \sqrt{\frac{p_{t}(x)\log n}{n(2\pi t)^{d/2}}} \bigg).
\end{align*}
Thus, one has
\begin{align}
	\wh{p}_{t}(x) & \geq p_{t}(x) - C_\cE \bigg( \frac{\log n}{n(2\pi t)^{d/2}}+ \sqrt{\frac{p_{t}(x)\log n}{n(2\pi t)^{d/2}}} \bigg)
	\geq \frac12 p_{t}(x) 
	\geq \frac{\log n}{n(2\pi t)^{d/2}} = \eta_t \label{eq:p-hat-LB},
\end{align}
where the last step holds as $c_\eta \geq 2$. This proves the claim in \eqref{eq:score-Et}.

Now, given the expression of $\wh{s}_t(x)$, we can then decompose
\begin{align}
& \bE\Big[\big\| \wh{s}_{t}(Z_t)-s_{t}(Z_t) \big\|^2_2\Big] = \int_{\bR^d} \bE\Big[ \big\| \wh{s}_{t}(x)-s_{t}(x) \big\|^2_2 \Big] p_{t}(x) \diff x \notag\\
& \qquad = \underbrace{\int_{\cF_t} \bE\Big[ \big\| \wh{s}_{t}(x)-s_{t}(x) \big\|^2_2 \ind\big\{\cE_t(x)\big\} \Big] p_{t}(x) \diff x}_{\defnrev \chi_1}
+ \underbrace{\int_{\cF_t^\setc} \bE\Big[ \big\| \wh{s}_{t}(x)-s_{t}(x) \big\|^2_2 \ind\big\{\cE_t(x)\big\} \Big] p_{t}(x) \diff x}_{\defnrev \chi_2} \notag\\
& \qquad \quad + \underbrace{\int_{\bR^d} \bE\Big[ \big\| \wh{s}_{t}(x)-s_{t}(x) \big\|^2_2 \ind\big\{\cE_t^\setc(x)\big\} \Big] p_{t}(x) \diff x}_{\defnrev \chi_3}. \label{eq:score-error-temp}
\end{align}
In what follows, we shall bound $\chi_1$, $\chi_2$, and $\chi_3$ individually.

\paragraph*{Step 1: bounding $\chi_1$.}
To control the term $\chi_1$, we first present the following lemma that characterizes the mean $L^2$ error of $\wh{s}_t(x)$ on the event $\cE_t(x)$. 
\begin{lemma}\label{lemma:MSE-score-high-density}
For any $x\in\cF_t$,
\begin{align}
	\bE\Big[ \big\| \wh{s}_t(x)-s_t(x)\big\|_2^2 \ind\big\{\cE_t(x)\big\} \Big] 
	& \lesssim \frac{1}{n(2\pi t)^{d/2}}\bigg(\frac1t+\|s_t(x)\|_2^2\bigg) \frac{1}{p_t(x)}
	\label{eq:MSE-score}.
\end{align}
\end{lemma}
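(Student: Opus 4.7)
\textbf{Proof plan for Lemma~\ref{lemma:MSE-score-high-density}.} The plan is to exploit the ratio representation of both the true score and its estimator on the favorable event. By \eqref{eq:score-Et}, for $x\in\cF_t$ on the event $\cE_t(x)$ we have $\wh{s}_t(x)=\wh{g}_t(x)/\wh{p}_t(x)$, and by \eqref{eq:g-p-exp} the true score satisfies $s_t(x)=g_t(x)/p_t(x)$. Writing the difference over a common denominator and splitting gives the key decomposition
\begin{align*}
\wh{s}_t(x)-s_t(x) \;=\; \frac{\wh{g}_t(x)-g_t(x)}{\wh{p}_t(x)} \;-\; s_t(x)\,\frac{\wh{p}_t(x)-p_t(x)}{\wh{p}_t(x)},
\end{align*}
which separates the randomness into the fluctuation of $\wh{g}_t(x)$ around its mean $g_t(x)$ and the fluctuation of $\wh{p}_t(x)$ around $p_t(x)$, with $s_t(x)$ appearing only as a deterministic multiplier.

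Next, I would replace the random denominator $\wh{p}_t(x)$ in both terms by $p_t(x)$ using the bound $\wh{p}_t(x)\geq p_t(x)/2$ established in \eqref{eq:p-hat-LB} for $x\in\cF_t$ on $\cE_t(x)$. Combined with the elementary inequality $\|a+b\|_2^2\lesssim \|a\|_2^2+\|b\|_2^2$, this yields
\begin{align*}
\bigl\|\wh{s}_t(x)-s_t(x)\bigr\|_2^2\,\ind\{\cE_t(x)\} \;\lesssim\; \frac{\|\wh{g}_t(x)-g_t(x)\|_2^2}{p_t(x)^2} \;+\; \|s_t(x)\|_2^2\,\frac{\bigl(\wh{p}_t(x)-p_t(x)\bigr)^2}{p_t(x)^2}.
\end{align*}

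Taking expectations and invoking the mean-square bounds \eqref{eq:MSE-pdf} and \eqref{eq:MSE-g} from Lemma~\ref{lemma:MSE-density}, namely $\bE[(\wh{p}_t(x)-p_t(x))^2]\leq p_t(x)/(n(2\pi t)^{d/2})$ and $\bE[\|\wh{g}_t(x)-g_t(x)\|_2^2]\leq p_t(x)/(n(2\pi t)^{d/2}t)$, one of the factors of $p_t(x)$ in the denominator is cancelled and the two terms combine to give
\begin{align*}
\bE\Big[\bigl\|\wh{s}_t(x)-s_t(x)\bigr\|_2^2 \ind\{\cE_t(x)\}\Big] \;\lesssim\; \frac{1}{n(2\pi t)^{d/2}\,p_t(x)}\Bigl(\frac{1}{t}+\|s_t(x)\|_2^2\Bigr),
\end{align*}
which is exactly the claimed bound.

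There is no real obstacle here: the proof is essentially a careful MSE decomposition of a plug-in ratio estimator. The only delicate point is keeping the factor $p_t(x)$ (rather than an upper bound like $1$) in the denominator on the right-hand side, which is crucial later when the bound is integrated against $p_t(x)$ in \eqref{eq:score-error-temp}, so that one picks up a gain in low-density regions rather than suffering from them. This is precisely why the lower bound $\wh{p}_t(x)\geq p_t(x)/2$ on $\cE_t(x)\cap\{x\in\cF_t\}$ must be invoked, instead of the weaker $\wh{p}_t(x)\geq \eta_t$.
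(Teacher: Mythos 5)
Your proposal is correct and follows essentially the same route as the paper: the identical decomposition $\wh{s}_t-s_t=\frac{\wh{g}_t-g_t}{\wh{p}_t}-s_t\frac{\wh{p}_t-p_t}{\wh{p}_t}$, the replacement of $\wh{p}_t(x)$ by $p_t(x)/2$ via \eqref{eq:p-hat-LB} on $\cE_t(x)$, and then the mean-square bounds \eqref{eq:MSE-pdf} and \eqref{eq:MSE-g} from Lemma~\ref{lemma:MSE-density}. The only cosmetic difference is that the paper bounds the norm first and squares afterward, whereas you square the decomposition directly; the resulting estimates coincide.
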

\begin{proof}
	See Appendix~\ref{sub:proof_of_lemma_ref_lemma_mse-score}.
\end{proof}

Taken together with the bound on $\|s_t(x)\|_2$ in \eqref{eq:score-UB-prob-high} from Lemma~\ref{lemma:score-Jacobian-UB-prob-high}, this leads to
\begin{align*}
\chi_1
&\lesssim \frac{1}{n(2\pi t)^{d/2}}\int_{\cF_t}\bigg(\frac1{t} +  \|s_t(x)\|_2^2\bigg) \diff x \lesssim \frac{1}{n(2\pi t)^{d/2}} \frac{\log n}{t}|\cF_t|.
\end{align*}

Next, we present the following lemma, demonstrating that the volume of the set $\cF_t$ is small when $Z_t$ is subgaussian. 
\begin{lemma}\label{lemma:high-density-set-bound}
The set $\cF_t$ defined in \eqref{eq:rho-t} satisfies
	\begin{align}
		|\cF_t| & 
		\leq (32)^{d/2} (t^{d/2} + \sigma^d)(\log n)^{d/2} \label{eq:high-density-set-UB}.
	\end{align}
\end{lemma}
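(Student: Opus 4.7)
}
The plan is to bound the volume by splitting $\cF_t$ at a carefully chosen radius and controlling each piece separately. Pick some $R>0$ to be tuned later and write
\begin{align*}
|\cF_t| \leq \bigl|\cF_t \cap [-R,R]^d\bigr| + \bigl|\cF_t \setminus [-R,R]^d\bigr|.
\end{align*}
The first term is trivially bounded by the cube volume $(2R)^d$. For the second, every $x\in\cF_t$ satisfies $p_t(x)\geq c_\eta\eta_t$ by definition of $\cF_t$ (cf.~\eqref{eq:rho-t}), so
\begin{align*}
\bigl|\cF_t \setminus [-R,R]^d\bigr|
\;\leq\; \frac{1}{c_\eta\eta_t}\int_{\|x\|_\infty>R} p_t(x)\diff x
\;=\; \frac{1}{c_\eta\eta_t}\bP\bigl(\|Z_t\|_\infty>R\bigr).
\end{align*}

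Next I would invoke the subgaussianity of $Z_t$. Since $Z_0\sim p^\star$ is $\sigma$-subgaussian by Assumption~\ref{assume:subgaussian} and $Z_t = Z_0+\sqrt{t}\,W$ with $W\sim\cN(0,I_d)$ independent of $Z_0$ (cf.~\eqref{eq:SDE-VE-dist}), the vector $Z_t$ is $\sqrt{\sigma^2+t}$-subgaussian. Applying the definition in Assumption~\ref{assume:subgaussian} to each standard basis direction and then using a union bound yields
\begin{align*}
\bP\bigl(\|Z_t\|_\infty > R\bigr) \;\leq\; 2d\exp\!\bigl(-R^2/(\sigma^2+t)\bigr).
\end{align*}
Plugging in $\eta_t = \log n/\bigl(n(2\pi t)^{d/2}\bigr)$ from \eqref{eq:score-estimate}, the tail contribution becomes
\begin{align*}
\frac{2d\,(2\pi t)^{d/2}\,n}{c_\eta\log n}\exp\!\bigl(-R^2/(\sigma^2+t)\bigr).
\end{align*}

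Finally, I would choose $R^2 = 8(\sigma^2+t)\log n$, so that $\exp(-R^2/(\sigma^2+t)) = n^{-8}$ crushes the prefactor $n\,(2\pi t)^{d/2}/\log n$ for any $n$ larger than an absolute constant, making the tail contribution negligible compared to $(2R)^d$. The dominant term is then
\begin{align*}
(2R)^d \;=\; \bigl(32(\sigma^2+t)\log n\bigr)^{d/2} \;\leq\; (32)^{d/2}\bigl(2^{d/2-1}(t^{d/2}+\sigma^d)\bigr)(\log n)^{d/2},
\end{align*}
using $(a+b)^{d/2} \leq 2^{d/2-1}(a^{d/2}+b^{d/2})$ for $a,b\geq 0$. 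Absorbing the $2^{d/2-1}$ factor and the residual tail term leaves the advertised bound $(32)^{d/2}(t^{d/2}+\sigma^d)(\log n)^{d/2}$ after minor constant adjustments. The only mildly delicate point is ensuring the tail contribution is genuinely swallowed by the cube-volume term uniformly in the relevant range of $(t,d,n)$; this is where the specific exponent in the choice $R^2 \asymp (\sigma^2+t)\log n$ matters, but any constant $\geq 2$ suffices once $n$ is at least an absolute constant, and the logarithmic prefactor $\log n$ in $\eta_t$ is precisely what enables this cancellation.
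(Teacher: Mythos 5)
Your proof is correct, but it takes a genuinely different route from the paper. The paper establishes the stronger geometric fact that $\cF_t$ is entirely contained in the cube $\{x:\|x\|_\infty\leq B\}$ with $B=2\sqrt{(\sigma^2+t)\log n}$, arguing by contradiction: if some $z\in\cF_t$ had $|z_1|>B$, a Lipschitz perturbation bound on the one-dimensional marginal $p_t^{(1)}$ shows the density stays above $\tfrac12 p_t^{(1)}(z_1)$ on an interval of radius $R\asymp \sqrt{t\log n}/n$ around $z_1$, and integrating $\exp(y_1^2/(\sigma^2+t))$ over that interval then violates the subgaussian moment condition of Assumption~\ref{assume:subgaussian}. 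You instead bypass containment entirely: the cube piece is bounded by its volume $(2R)^d$, and the piece outside the cube is handled by a Markov-type bound, $|\cF_t\setminus[-R,R]^d|\leq (c_\eta\eta_t)^{-1}\bP(\|Z_t\|_\infty>R)$, combined with the subgaussian tail. This is shorter and more elementary, and it suffices because the lemma is only ever used through $|\cF_t|$; what it gives up is the pointwise statement that no high-density points exist outside the cube. Two minor points: (a) your remark that $n^{-8}$ ``crushes the prefactor $n(2\pi t)^{d/2}/\log n$'' is not literally true when $t$ is allowed to be arbitrarily large relative to $n$ (the lemma must hold for all $t>0$), but your argument survives because the comparison that matters is against $(2R)^d\gtrsim t^{d/2}(\log n)^{d/2}$, whose $t^{d/2}$ factor absorbs the $(2\pi t)^{d/2}$ in the tail term uniformly in $t$; (b) with $R^2=8(\sigma^2+t)\log n$ your main term is $(32(\sigma^2+t)\log n)^{d/2}\leq (64)^{d/2}(t^{d/2}+\sigma^d)(\log n)^{d/2}$, which overshoots the stated constant $(32)^{d/2}$ by $2^{d/2}$; taking any constant in $[2,4]$ in the choice of $R^2$ (as you note, any constant $\geq 2$ works) recovers the advertised bound exactly, and in any case the constant only enters downstream through $C_d$-type factors.
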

\begin{proof}
	See Appendix~\ref{sub:proof_of_lemma_ref_lemma_high_density_set_bound}.
\end{proof}
As a consequence, we conclude
\begin{align}
	\chi_1 & \lesssim \bigg(\frac{16}{{\pi}}\bigg)^{d/2}\frac{1}{n} \bigg(1+ \frac{\sigma^d}{t^{d/2+1}}\bigg) (\log n)^{d/2+1} \label{eq:score-K1}.
\end{align}

\paragraph*{Step 2: Bounding $\chi_2$.}

We first apply the Cauchy-Schwartz inequality to obtain
\begin{align*}
	\chi_2 & \lesssim \int_{\cF_t^\setc}  \Big( \bE\Big[ \big\| \wh{s}_{t}(x)\big\|_2^2 \Big]+\|s_{t}(x) \|^2_2\Big)\,p_{t}(x) \diff x \notag\\
	& = \underbrace{\int_{\cF_t^\setc} \bE\Big[ \big\| \wh{s}_t(x)  \big\|^2_2 \ind\big\{\wh{p}_{t}(x) \geq \eta_t/2 \big\} \Big] p_{t}(x) \diff x}_{\defnrev (\mathrm{I})}+\underbrace{\int_{\cF_t^\setc} \|s_{t}(x)\|^2_2\,p_{t}(x) \diff x}_{\defnrev (\mathrm{II})}.
\end{align*}
where the last step holds due to our construction of $\wh{s}_{t}(x)$ in \eqref{eq:score-estimate} that $\wh{s}_{t}(x) = 0$ when $\wh{p}_{t}(x) \leq \eta_t / 2$.

Let us begin with the term (I). 
In view of our choice of $\wh{s}_{t}(x)$ in \eqref{eq:score-estimate} and $0<\psi< 1$, we know that when $\wh{p}_t(x) > \eta_t/2$, 
\begin{align}
	\big\| \wh{s}_{t}(x)\big\|_2 &\leq \frac{1}{\wh{p}_t(x)}\|  \wh{g}_t(x)\|_2 \leq \frac2{\eta_t} \big\| \wh{g}_t(x)\big\|_2. \label{eq:score-estimate-UB-as}
\end{align}
It follows that for any $x\in\cF_t^\setc$, one has
\begin{align}
	\bE\Big[ \big\| \wh{s}_t(x) \big\|^2_2 \ind\big\{\wh{p}_{t}(x) \geq \eta_t/2 \big\} \Big]
	&\lesssim \frac1{\eta_t^2}\bE\big[\| \wh{g}_t(x)\|^2_2 \big] \notag\\
	& \numpf{i}{\lesssim} \frac1{\eta_t^2} \Big(\bE\big[\| \wh{g}_t(x)-g_t(x)\|^2_2 \big] + \|g_t(x)\|_2^2 \Big)\notag\\
	& \numpf{ii}{\leq} \frac1{\eta_t^2}\bigg(\frac{p_t(x)}{n(2\pi t)^{d/2}t}+p^2_t(x)\|s_t(x)\|_2^2 \bigg) \notag \\
	& \numpf{iii}{\leq}\frac1{\eta_t^2}\bigg(\frac{c_\eta\eta_t}{n(2\pi t)^{d/2}t}+(c_\eta\eta_t)^2\|s_t(x)\|_2^2 \bigg) \notag \\
	& \numpf{iv}{=} \frac{c_\eta n(2\pi t)^{d/2}}{\log n}\frac{1}{n(2\pi t)^{d/2}t} + c_\eta^2\,\|s_t(x)\|_2^2  \notag\\
	& \numpf{v}{\asymp} \frac{1}{t\log n} + \|s_t(x)\|_2^2, \label{eq:g-hat-low-prob-UB}
\end{align}
where (i) uses the triangle inequality; (ii) arises from \eqref{eq:MSE-g} in Lemma~\ref{lemma:MSE-density} and $g_t(x)=p_t(x)s_t(x)$; (iii) is true since $p_{t}(x) < c_\eta\eta_t$ on $\cF_t^\setc$; (iv) follows from our choice of $\eta_t$ in \eqref{eq:score-estimate}; (v) holds as $c_\eta>0$ is some absolute constant.

Combined with the term (II), this gives
\begin{align*}
	\chi_2 &\lesssim (\mathrm{I}) + (\mathrm{II}) \lesssim \frac{1}{t\log n}\int_{\cF_t^\setc} p_t(x) \diff x+ \int_{\cF_t^\setc}\|s_{t}(x)\|_2^2\,p_{t}(x) \diff x.
\end{align*}

This suggests we need to control the expectations over the set $\cF_t^\setc$ where the density of $p_t$ is vanishingly small, which is accomplished by the following lemma.
\begin{lemma}\label{lemma:prob-small-density}
Recall $$
\cF_t^\setc \defn \bigg\{x\in\bR^d\colon p_t(x) <  \frac{c_\eta\log n}{n(2\pi t)^{d/2}}\bigg\}.$$ One has
\begin{subequations}\label{eq:prob-small}
\begin{align}
	 \int_{\cF_t^\setc} p_t(x) \diff x
	 &\lesssim\bigg(\frac{16}{{\pi}}\bigg)^{d/2} \frac1n\bigg(1+\frac{\sigma^d}{t^{d/2}}\bigg) (\log n)^{d/2+1}
	\label{eq:prob-small-density}, \\
	 \int_{\cF_t^\setc}\|s_{t}(x)\|_2^2\,p_{t}(x) \diff x 
	 & \lesssim \bigg(\frac{16}{{\pi}}\bigg)^{d/2} \frac1n\bigg(\frac1t+\frac{\sigma^d}{t^{d/2+1}}\bigg) (\log n)^{d/2+2} \label{eq:prob-small-score}, \\
	 \int_{\cF_t^\setc}\big\|J_{t}(x)\big\|\,p_{t}(x) \diff x 
	 & \lesssim \bigg(\frac{16}{{\pi}}\bigg)^{d/2} \frac1n\bigg(\frac1t+\frac{\sigma^d}{t^{d/2+1}}\bigg) (\log n)^{d/2+2} \label{eq:prob-small-jacobian},
\end{align}
\end{subequations}
\end{lemma}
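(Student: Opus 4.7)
The plan is to analyze the three integrals in parallel via a common geometric decomposition of $\cF_t^\setc$, and to reduce the bounds for $\|s_t\|_2^2 p_t$ and $\|J_t\|p_t$ to the bound for $p_t$ via a Cauchy-Schwarz-plus-Fubini trick.

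\textbf{Setup.} First I would observe that $p_t=p^\star\ast\varphi_t$, so $Z_t$ inherits subgaussianity from $Z_0$ and is $\sqrt{\sigma^2+t}$-subgaussian. Repeating the tail argument used in the proof of Lemma~\ref{lemma:high-density-set-bound}, one obtains $R_t\asymp\sqrt{(\sigma^2+t)(d+\log n)}$ such that $p_t(x)<c_\eta\eta_t$ whenever $\|x\|_2>R_t$; equivalently, $\cF_t\subseteq B(0,R_t)$. I would then split every integral as $\int_{\cF_t^\setc}=\int_{\cF_t^\setc\cap B(0,R_t)}+\int_{B(0,R_t)^\setc}$.

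\textbf{Proof of \eqref{eq:prob-small-density}.} On the bounded piece, the defining property $p_t<c_\eta\eta_t$ together with $|B(0,R_t)|\lesssim V_d R_t^d$ yields a contribution $\lesssim V_d\eta_t R_t^d$; plugging in $\eta_t=\log n/(n(2\pi t)^{d/2})$ and $(\sigma^2+t)^{d/2}\leq 2^{d/2}(\sigma^d+t^{d/2})$, the dimensional constants regroup into the $(16/\pi)^{d/2}$ prefactor and the bound takes the required form. On the tail, the subgaussian concentration of $Z_t$ yields $\int_{B(0,R_t)^\setc}p_t\diff x\leq \bP(\|Z_t\|_2>R_t)\lesssim 1/n$ by our choice of $R_t$, which is dominated by the first contribution.

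\textbf{Proof of \eqref{eq:prob-small-score}.} The cornerstone of the argument will be the pointwise inequality
\begin{align*}
\|s_t(x)\|_2^2\,p_t(x)\leq \tr\bigl(H_t(x)\bigr),
\end{align*}
which I would derive by applying Cauchy--Schwarz to $g_t(x)=t^{-1}\int(y-x)\varphi_t(y-x)p^\star(y)\diff y$ with weight $\varphi_t(y-x)p^\star(y)\diff y$, giving $\|g_t(x)\|_2^2\leq p_t(x)\tr(H_t(x))$, and then dividing by $p_t(x)$ in light of $s_t=g_t/p_t$. It thus suffices to control $\int_{\cF_t^\setc}\tr(H_t(x))\diff x$. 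Writing $\tr(H_t(x))=t^{-2}\bE_{Z_0}[\|Z_0-x\|_2^2\varphi_t(Z_0-x)]$, applying Fubini, and performing the change of variables $u=(x-Z_0)/\sqrt{t}$ yields the key identity
\begin{align*}
\int_{\cF_t^\setc}\tr\bigl(H_t(x)\bigr)\diff x=\frac{1}{t}\,\bE\bigl[\|W\|_2^2\,\ind\{Z_t\in\cF_t^\setc\}\bigr],
\end{align*}
with $W=(Z_t-Z_0)/\sqrt{t}\sim\cN(0,I_d)$. I would then invoke chi-squared concentration, $\bP(\|W\|_2^2>C(d+\log n))\leq 1/n$, together with a layer-cake decomposition, to get $\bE[\|W\|_2^2\ind_A]\lesssim (d+\log n)\,\bP(A)+O(1/n)$ for any event $A$. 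Plugging in $A=\{Z_t\in\cF_t^\setc\}$ and substituting \eqref{eq:prob-small-density} for $\bP(A)$ yields the claim, after absorbing the polynomial-in-$d$ factor into the exponential-in-$d$ constant $(16/\pi)^{d/2}$.

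\textbf{Proof of \eqref{eq:prob-small-jacobian}.} From \eqref{eq:Jacobian-expression} and the triangle inequality, $\|J_t(x)\|\leq t^{-1}+\|H_t(x)\|/p_t(x)+\|s_t(x)\|_2^2$. Since $H_t(x)$ is an expectation of rank-one PSD matrices, $H_t(x)\psd 0$ and hence $\|H_t(x)\|\leq\tr(H_t(x))$. Multiplying through by $p_t(x)$ and applying the cornerstone inequality from the previous step gives $\|J_t(x)\|\,p_t(x)\leq p_t(x)/t+2\,\tr(H_t(x))$. Integrating over $\cF_t^\setc$ and using \eqref{eq:prob-small-density} for the first term and the bound from the proof of \eqref{eq:prob-small-score} for the second delivers the stated bound. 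The main obstacle will be the chi-squared truncation step in \eqref{eq:prob-small-score}: a naive Cauchy--Schwarz on $\bE[\|W\|_2^2\ind_A]$ would only yield $\sqrt{\bP(A)}$ and lose the $1/n$ factor, whereas truncating at $M\asymp d+\log n$ leaves a negligible residual and preserves the $1/n$ from \eqref{eq:prob-small-density}.
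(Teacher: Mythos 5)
Your argument is essentially correct, but for \eqref{eq:prob-small-score} and \eqref{eq:prob-small-jacobian} it takes a genuinely different route from the paper. The paper never passes through $\tr(H_t)$: it uses the pointwise logarithmic bounds $\|s_t(x)\|_2^2\le\frac{2}{t}\log\frac{1}{(2\pi t)^{d/2}p_t(x)}$ and $\|J_t(x)\|\le\frac{5}{t}\log\frac{1}{(2\pi t)^{d/2}p_t(x)}$ (from the proof of Lemma~\ref{lemma:score-Jacobian-UB-prob-high}), exploits monotonicity of $u\mapsto u\log(1/u)$ on $(0,1/e)$ to bound the integrand by $c_\eta\eta_t\cdot\frac{\log n}{t}$ on an $\ell_\infty$-box of side $\asymp\sqrt{(\sigma^2+t)\log n}$ around $\bE[Z_t]$, and treats the region outside the box by Cauchy--Schwarz with the moment bounds of Lemma~\ref{lemma:score-lp-norm}. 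Your route instead reduces both integrals to \eqref{eq:prob-small-density} via the correct pointwise inequality $\|s_t\|_2^2\,p_t\le\tr(H_t)$ (Cauchy--Schwarz on $g_t=p_t s_t$), the Fubini/change-of-variables identity $\int_{\cF_t^\setc}\tr(H_t(x))\diff x=\frac1t\,\bE\bigl[\|W\|_2^2\,\ind\{Z_t\in\cF_t^\setc\}\bigr]$, and a chi-squared truncation; this is arguably cleaner (no separate tail treatment is needed for the score/Jacobian integrals) and avoids the log-density bound altogether, and the identity itself is a nice observation the paper does not use.

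Two caveats on constants, both benign but worth flagging since the lemma tracks them explicitly. First, the truncation level for $\|W\|_2^2$ must be of order $d+\log n$, so your bounds carry a factor $(d+\log n)$ where the stated bounds have $\log n$; this matches the displayed $(\log n)^{d/2+2}$ only after absorbing $d$ into the dimensional prefactor (e.g.\ via $d\lesssim 2^{d/2}$), which slightly enlarges the base of $(16/\pi)^{d/2}$. Second, using the Euclidean ball $B(0,R_t)$ with $R_t\asymp\sqrt{(\sigma^2+t)(d+\log n)}$ in \eqref{eq:prob-small-density} gives a volume $V_dR_t^d$ whose dimensional constant differs from (and is larger than) the $(16/\pi)^{d/2}$ obtained from the paper's $\ell_\infty$-box of radius $2\sqrt{(\sigma^2+t)\log n}$ (where the tail costs only a union bound, $2d/n^2$, with no $\sqrt d$ inflation of the radius). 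Also, your side remark that $p_t(x)<c_\eta\eta_t$ whenever $\|x\|_2>R_t$ is not needed (the split of $\cF_t^\setc$ plus its defining property suffices), and as stated it would require radius $\asymp\sqrt{(\sigma^2+t)\,d\log n}$ if deduced from Lemma~\ref{lemma:high-density-set-bound}. None of this affects how the lemma is consumed downstream, where all such prefactors are absorbed into $C_d$ and ultimately $C(d,\beta,L,\sigma)$.
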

\begin{proof}
See Appendix \ref{sub:proof_of_lemma_ref_lemma_prob_small_density}.
\end{proof}
Therefore, we can use \eqref{eq:prob-small-density}--\eqref{eq:prob-small-score} in Lemma~\ref{lemma:prob-small-density} to bound
\begin{align}
	\chi_2 &\lesssim \bigg(\frac{16}{{\pi}}\bigg)^{d/2} \frac1{t\log n} \frac1n\bigg(1+\frac{\sigma^d}{t^{d/2}}\bigg) (\log n)^{d/2+1} 
	+ \bigg(\frac{16}{{\pi}}\bigg)^{d/2} \frac1n\bigg(\frac1t+\frac{\sigma^d}{t^{d/2+1}}\bigg) (\log n)^{d/2+2}
	\notag\\ 
	& \asymp \bigg(\frac{16}{{\pi}}\bigg)^{d/2} \frac1n\bigg(1+\frac{\sigma^d}{t^{d/2+1}}\bigg) (\log n)^{d/2+2} \label{eq:score-K2}.
\end{align}


\paragraph*{Step 3: Bounding $\chi_3$.}
Applying the Cauchy-Schwartz inequality yields
\begin{align}
\int_{\bR^d} \bE\Big[ \big\| \wh{s}_{t}(x)-s_{t}(x) \big\|^2_2 \ind\big\{\cE^\setc(x)\big\} \Big] p_{t}(x) \diff x  
& \leq \int_{\bR^d} \sqrt{\bE\big[\| \wh{s}_{t}(x)-s_{t}(x)\|^4_2 \big]} \sqrt{\bP\big(\cE^\setc(x)\big)}\, p_{t}(x) \diff x \notag\\
& \lesssim \frac1{n^5} \int_{\bR^d} \Big(\sqrt{\bE\big[\| \wh{s}_{t}(x)\|_2^4 \big]} + \|s_{t}(x) \|^2_2\Big)\, p_{t}(x) \diff x
\end{align}
where the last step uses \eqref{eq:density-concen-ineq} in Lemma~\ref{lemma:MSE-density} and $\sqrt{a+b}\leq \sqrt{a}+\sqrt{b}$ for any $a,b\geq 0$. Hence, it remains to control the two integrals.

Regarding the first term involving $\wh{s}_{t}(x)$, in light of \eqref{eq:score-estimate-UB-as} that $\|\wh{s}_{t}(x)\|_2 \leq 2\,\|\wh{g}_t(x)\|_2/\eta_t$, it suffices to control $\|\wh{g}_t(x)\|_2$.
Recalling the expression of $\wh{g}_t(x)$ in \eqref{eq:g-hat}, 
one can bound
\begin{align}
	\big\| \wh{g}_t(x)\big\|_2
	& = \bigg\| \frac1{nt}\sum_{i=1}^n (X_i-x)\vphi_t(X_i-x)\bigg\|_2 \notag\\
	& \leq \max_{i\in[n]}\frac1{t} \big\| (X_i-x)\vphi_t(X_i-x)\big\|_2 \notag\\
	& \leq \frac1t \sup_{x\in\bR^d}\bigg\{\|x\|_2 \frac1{(2\pi t)^{d/2}}\exp\left(-\|x\|_2^2/\left(2t\right)\right)\bigg\} \notag\\
	& = \frac1{\sqrt{e}\,(2\pi t)^{d/2}\sqrt{t}}. \label{eq:g-hat-UB}
\end{align}
Plugging the value of $\eta_t$ into \eqref{eq:g-hat-UB} gives
\begin{align}
	\big\| \wh{s}_{t}(x)\big\|_2 \leq \frac2{\eta_t} \big\| \wh{g}_t(x)\big\|_2 \lesssim \frac{n(2\pi t)^{d/2}}{\log n} \frac1{(2\pi t)^{d/2}\sqrt{t}} \leq \frac{n}{\sqrt t\,\log n},
\end{align}
thereby leading to
\begin{align}
	\int_{\bR^d} \sqrt{\bE\big[ \| \wh{s}_{t}(x)\|_2^4\big]}\,p_{t}(x) \diff x 
	& \lesssim  \frac{n^2}{t \log^2 n}. \label{eq:score-K3-I}
\end{align}

Turning to the second term, applying \eqref{eq:score-lp-norm} from Lemma~\ref{lemma:score-lp-norm} allows us to bound
\begin{align}
	\int_{\bR^d} \|s_{t}(x)\|^2_2\,p_{t}(x) \diff x
	\lesssim \frac{d}{t}, \label{eq:score-K3-II}
\end{align}

Therefore, combining \eqref{eq:score-K3-I} and \eqref{eq:score-K3-II} reveals that
\begin{align}
\chi_3 \lesssim \frac1{n^5}\bigg(\frac{n^2}{t \log^2 n} +\frac{d}{t} \bigg) \lesssim \frac{d}{n^3 t}
	\label{eq:score-K3}.
\end{align}

\paragraph*{Step 4: Combining bounds for $\chi_1,\chi_2,\chi_3$.}

To finish up, substituting the bounds \eqref{eq:score-K1}, \eqref{eq:score-K2}, and \eqref{eq:score-K3} into \eqref{eq:score-error-temp} yields
\begin{align*}
	&\int_{\bR^d} \bE\Big[ \big\| \wh{s}_{t}(x)-s_{t}(x) \big\|^2_2 \Big] p_{t}(x) \diff x \\ 
	&\qquad \lesssim \bigg(\frac{16}{{\pi}}\bigg)^{d/2}\frac1n \bigg( \frac1{t} + \frac{\sigma^d}{t^{d/2+1}}\bigg)(\log n)^{d/2+1}
	 + \bigg(\frac{16}{{\pi}}\bigg)^{d/2} \frac1n \bigg( \frac1{t} + \frac{\sigma^d}{t^{d/2+1}}\bigg) (\log n)^{d/2+2} 
	+ \frac{d}{n^3t}  \\
	&\qquad \asymp \bigg(\frac{16}{{\pi}}\bigg)^{d/2} \frac1n\bigg( \frac1{t} + \frac{\sigma^d}{t^{d/2+1}} \bigg)(\log n)^{d/2+2}.
\end{align*}
This concludes the proof of Proposition~\ref{thm:score-error}.

\subsection{Analysis for Jacobian estimation (proof of Proposition~\ref{thm:Jacobian-error})}\label{sub:proof_of_lemma_ref_lemma_Jacobian-error}

Similar to the proof of Proposition~\ref{thm:score-error} in Appendix~\ref{sub:proof_of_lemma_ref_lemma_score-error}, we first decompose
\begin{align}
& \bE\Big[\big\| J_{\wh{s}_{t}}(Z_t)-J_{s_t}(Z_t) \big\|\Big] = \int_{\bR^d} \bE\Big[ \big\| J_{\wh{s}_t}(x)-J_{s_t}(x) \big\| \Big] p_{t}(x) \diff x \notag\\
& \qquad = \underbrace{\int_{\cF_t} \bE\Big[ \big\| J_{\wh{s}_t}(x)-J_{s_t}(x) \big\| \ind\big\{\cE_t(x)\big\} \Big] p_{t}(x) \diff x}_{\defn \xi_1}
+ \underbrace{\int_{\cF_t^\setc} \bE\Big[ \big\| J_{\wh{s}_t}(x)-J_{s_t}(x) \big\| \ind\big\{\cE_t(x)\big\} \Big] p_{t}(x) \diff x}_{\defn \xi_2} \notag\\
& \qquad \quad + \underbrace{\int_{\bR^d} \bE\Big[ \big\| J_{\wh{s}_t}(x)-J_{s_t}(x) \big\| \ind\big\{\cE_t^\setc(x)\big\} \Big] p_{t}(x) \diff x}_{\defn \xi_3}. \label{eq:Jacobian-error-temp}
\end{align}
As a result, it suffices to control these three quantities individually.

\paragraph*{Step 1: Bounding $\xi_1$.}

To bound $\xi_1$, we begin by presenting Lemma~\ref{lemma:MSE-Jacobian-high-density}, which characterizes the spectral norm of the Jacobian estimation error $J_{\wh{s}_t}(x)$ conditional on $\cE_t(x)$.
\begin{lemma}\label{lemma:MSE-Jacobian-high-density}
For any $x\in\cF_t$, one has
	\begin{align}
		\bE\Big[ \big\|J_{\wh{s}_t}(x) -J_{s_t}(x) \big\| \ind\big\{\cE_t(x)\big\} \Big]
		& \lesssim \frac1{\sqrt{n(2\pi t)^{d/2}}} \bigg(\frac1{t} +  \frac{\|H_t(x)\|}{p_t(x)} +  \|s_t(x)\|_2^2\bigg)\frac1{\sqrt{p_{t}(x)}}
		\label{eq:MSE-Jacobian}.
	\end{align}
\end{lemma}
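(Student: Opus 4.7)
The plan is to bound $\bE\bigl[\|J_{\wh{s}_t}(x)-J_{s_t}(x)\|\,\ind\{\cE_t(x)\}\bigr]$ by reducing to moment estimates on $\wh p_t$, $\wh g_t$, and $\wh H_t$ supplied by Lemma~\ref{lemma:MSE-density}, exploiting the lower bound $p_t(x)\ge c_\eta\eta_t$ that defines $\cF_t$. First, I would observe that on $\cE_t(x)$ for $x\in\cF_t$ the density estimator satisfies $\wh p_t(x)\ge p_t(x)/2\ge\eta_t$ (as already established in \eqref{eq:p-hat-LB}); hence the soft-thresholding factor $\psi$ does not activate, the score estimator collapses to the pure plug-in $\wh g_t/\wh p_t$, and the Jacobian $J_{\wh s_t}(x)$ admits the closed form \eqref{eq:J-1}. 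Subtracting it from the expression \eqref{eq:Jacobian-expression} for $J_{s_t}(x)$, the standard ``numerator plus denominator'' splitting gives the five-term decomposition \eqref{eq:jac-decomp}, which I would use as the starting point.

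Next, I would take expectations term by term. For the two linear-in-error pieces involving $|\wh p_t-p_t|$ and $\|\wh g_t-g_t\|_2$, Cauchy--Schwarz together with \eqref{eq:MSE-pdf}--\eqref{eq:MSE-g} yields first-moment bounds proportional to $\sqrt{p_t(x)/(n(2\pi t)^{d/2})}$ and $\sqrt{p_t(x)/(n(2\pi t)^{d/2}t)}$ respectively; after division by $p_t(x)$ this gives the desired factor $1/\sqrt{n(2\pi t)^{d/2} p_t(x)}$ (times $\|H_t\|/p_t$, $\|s_t\|_2^2$, or $\|s_t\|_2/\sqrt t$). The $\|s_t\|_2/\sqrt t$ contribution is then absorbed into $1/t+\|s_t\|_2^2$ by AM--GM. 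For the Jacobian moment $\bE[\|\wh H_t-H_t\|]$, the bound \eqref{eq:Phi-error-exp-UB} has two summands; the principal term $(1/t)\sqrt{p_t(x)/(n(2\pi t)^{d/2})}$ directly produces the target form after division by $p_t(x)$, while the residual $1/(n(2\pi t)^{d/2}t)$ is dominated by the principal term once one invokes $p_t(x)\ge c_\eta\log n/(n(2\pi t)^{d/2})$ on $\cF_t$, which is equivalent to $1/(n(2\pi t)^{d/2})\le \sqrt{p_t(x)/(n(2\pi t)^{d/2})}/\sqrt{c_\eta\log n}$.

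For the remaining quadratic-in-error term $\|\wh s_t-s_t\|_2^{2}$, I would expand
\[
\wh s_t-s_t \;=\; \frac{\wh g_t-g_t}{\wh p_t}\;-\;s_t\,\frac{\wh p_t-p_t}{\wh p_t},
\]
use $\wh p_t\ge p_t/2$ on $\cE_t(x)$, and take expectations via \eqref{eq:MSE-pdf}--\eqref{eq:MSE-g} to obtain the second-moment bound $\bE[\|\wh s_t-s_t\|_2^{2}\ind\{\cE_t(x)\}]\lesssim \tfrac{1}{n(2\pi t)^{d/2}p_t(x)}\bigl(1/t+\|s_t(x)\|_2^{2}\bigr)$, exactly the statement of Lemma~\ref{lemma:MSE-score-high-density}. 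Summing the five pieces yields \eqref{eq:MSE-Jacobian}.

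The main obstacle I anticipate is the mismatched power: raw mean-square estimates naturally yield a factor $1/(n(2\pi t)^{d/2}p_t(x))$, whereas the target is only $1/\sqrt{n(2\pi t)^{d/2}p_t(x)}$. I would resolve this by using the defining inequality $p_t(x)\ge c_\eta\eta_t$ on $\cF_t$, which forces $n(2\pi t)^{d/2}p_t(x)\ge c_\eta\log n\gtrsim 1$, so that $1/(n(2\pi t)^{d/2}p_t(x))\le 1/\sqrt{n(2\pi t)^{d/2}p_t(x)}$ (up to a harmless $1/\sqrt{\log n}$ factor). This high-density trick, combined with Cauchy--Schwarz whenever a second-moment bound must be converted into a first-moment bound with a $\sqrt{p_t(x)}$ numerator, is exactly what glues all five terms into the single compact estimate \eqref{eq:MSE-Jacobian}.
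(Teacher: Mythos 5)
Your proposal is correct and follows essentially the same route as the paper's proof: reduce to the plug-in form of $J_{\wh{s}_t}$ on $\cE_t(x)$ via $\wh{p}_t(x)\geq p_t(x)/2\geq\eta_t$, decompose the Jacobian error as in \eqref{eq:jac-decomp}, control each piece with the moment bounds of Lemma~\ref{lemma:MSE-density} (and the score bound of Lemma~\ref{lemma:MSE-score-high-density}, which you re-derive inline), and absorb the full-power terms using $n(2\pi t)^{d/2}p_t(x)\gtrsim\log n$ on $\cF_t$ together with AM--GM. Your explicit handling of the power mismatch is exactly the step the paper uses (where its stated condition "$n(2\pi t)^{d/2}p_t(x)<1$" is a typo for the reverse inequality), so no gaps remain.
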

\begin{proof}
	See Appendix~\ref{sub:proof_of_lemma_ref_lemma_mse-Jacobian}.
\end{proof}

It follows that
\begin{align*}
	\xi_1& \lesssim \frac1{\sqrt{n(2\pi t)^{d/2}}}  \int_{\cF_t}\bigg(\frac1{t} +  \frac{\|H_t(x)\|}{p_t(x)} +  \|s_t(x)\|_2^2\bigg)\sqrt{p_t(x)} \diff x \lesssim \frac1{\sqrt{n}}\frac1{(2\pi t)^{d/4}}  \frac{\log n}{t}\int_{\cF_t}\sqrt{p_t(x)} \diff x. 
\end{align*}
where we use \eqref{eq:score-Jacobian-UB-prob-high} in Lemma~\ref{lemma:score-Jacobian-UB-prob-high} in the last line.

Further, applying the Cauchy-Schwartz inequality yields
\begin{align*}
	\int_{\cF_t}  \sqrt{p_{t}(x)} \diff x 
	& \leq \bigg(\int_{\cF_t} p_{t}(x) \diff x\bigg)^{1/2} \sqrt{|\cF_t|} \leq \sqrt{|\cF_t|} \leq (32)^{d/4}\big( t^{d/4}+\sigma^{d/2})(\log n)^{d/4} 
\end{align*}
where the last step uses \eqref{eq:high-density-set-UB} in Lemma~\ref{lemma:high-density-set-bound} and $\sqrt{a+b}\leq \sqrt{a}+\sqrt{b}$ for any $a,b>0$.

Taking the two bounds collectively results in
\begin{align}
	\xi_1 & 
\lesssim \frac1{\sqrt{n}}\frac1{(2\pi t)^{d/4}}  \frac{\log n}{t} \cdot (32)^{d/4}\big( t^{d/4}+\sigma^{d/2})(\log n)^{d/4} = \bigg(\frac{16}{\pi}\bigg)^{d/4}
	\frac1{\sqrt{n}} \bigg(\frac1t + \frac{\sigma^{d/2}}{t^{d/4+1}}\bigg) (\log n )^{d/4+1}  \label{eq:Jacobian-K1}.
\end{align}

\paragraph*{Step 2: Bounding $\xi_2$.}

As shown in \eqref{eq:Jacobian-score-est-expression}, when $\wh p_{t}(x) \leq \eta_t/2$, one has $\wh{s}_{t}(x)=0$ and $J_{\wh{s}_{t}}(x) = 0$. Hence, we can use the triangle inequality to decompose
\begin{align*}
	\xi_2 & \leq \int_{\cF_t^\setc} \Big(\bE\big[ \| J_{\wh{s}_t}(x)\big]+\|J_{s_t}(x) \| \Big) p_{t}(x) \diff x \notag\\
	&  
	= \underbrace{\int_{\cF_t^\setc} \bE\Big[ \| J_{\wh{s}_t}(x) \| \ind\big\{\wh p_{t}(x) \geq \eta_t/2 \big\} \Big] p_{t}(x) \diff x}_{\defnrev (\mathrm{I})}+\underbrace{\int_{\cF_t^\setc} \|J_{s_t}(x) \| \,p_{t}(x) \diff x}_{\defnrev (\mathrm{II})}.
\end{align*}

We start with the term (I) and focus on the event $\{\wh{p}_t(x)\geq \eta_t/2\}$. Recall the expression of the Jacobian $J_{\wh{s}_t}(x) $ of $\wh{s}_t(x)$ in \eqref{eq:Jacobian-score-est-expression}. 
When $\eta_t/2 \leq \wh{p}_t(x) \leq \eta_t$, one has
\begin{align*}
	J_{\wh{s}_t}(x) 
	& = \biggl(-\frac1t I_d + \frac{\wh{H}_t(x)}{\wh{p}_t(x)} - \frac{\wh{g}_t(x)\wh{g}_t(x)^\top}{\wh{p}_t^2(x)}\biggr) \psi\bigl(\wh{p}_{t}(x);\eta_t\bigr)  + \psi'\bigl(\wh{p}_{t}(x);\eta_t\bigr)\frac{\wh g_t(x)\wh g_t(x)^\top}{\wh p_t(x)}.
\end{align*}
It is straightforward to compute
\begin{align*}
	\psi'(x;\eta) = \exp\biggl(\frac{1-2(2x/\eta-1)}{(2x/\eta-1)(2-2x/\eta)}\biggr)\biggl\{1+\exp\biggl(\frac{1-2(2x/\eta-1)}{(2x/\eta-1)(2-2x/\eta)}\biggr)\biggr\}^{-2}\frac{2(2x/\eta)^2-6(2x/\eta)+5}{(2x/\eta-1)^2(2-2x/\eta)^2}\cdot\frac2\eta,
\end{align*}
and $$\max_{\eta/2 < x < \eta} |\psi'(x;\eta)| = \psi'(3\eta/4;\eta) = 4/\eta.$$
Combined with $\max_{\eta/2 < x < \eta}|\psi(x;\eta)|\leq 1$, this gives
\begin{align}
	\big\|J_{\wh{s}_t}(x)\big\| \leq \biggl\|-\frac1t I_d + \frac{\wh{H}_t(x)}{\wh{p}_t(x)} - \frac{\wh{g}_t(x)\wh{g}_t(x)^\top}{\wh{p}_t^2(x)}\biggr\| + \frac{4}{\eta_t}\biggl\|\frac{\wh g_t(x)\wh g_t(x)^\top}{\wh p_t(x)}\biggr\| \leq \frac1t + \frac{2}{\eta_t}\bigl\|\wh{H}_t(x)\bigr\| + \frac{12}{\eta_t^2}\bigl\|\wh g_t(x)\bigr\|_2^2, \label{eq:J-s-hat-UB}
\end{align}
where the last step holds as $\wh p_t(x)\geq \eta_t/2$.
Clearly, the above bound also holds when $\wh p_t(x) \geq \eta_t$.

As a result, we can use the triangle inequality to obtain
\begin{align*}
(\mathrm{I})\lesssim \int_{\cF_t^\setc} \bigg(\frac1t + \frac{1}{\eta_t}\bigl\|\wh{H}_t(x)\bigr\| + \frac{1}{\eta_t^2}\bigl\|\wh g_t(x)\bigr\|_2^2\bigg) p_{t}(x) \diff x \lesssim \int_{\cF_t^\setc} \bigg(\frac1t + \frac{1}{\eta_t}\bigl\|\wh{H}_t(x)\bigr\| + \|s_t(x)\|_2^2\bigg) p_{t}(x) \diff x
\end{align*}
where the last step arises from  \eqref{eq:g-hat-low-prob-UB}.
Regarding the term involving $\wh H_t$, we can apply the triangle to bound
	\begin{align*}
		\frac{1}{\eta_t}\bE\Big[\big\|\wh{H}_t(x)\big\|\Big]  &\leq \frac{1}{\eta_t}\bE\Big[\big\|\wh{H}_t(x)-H_t(x)\big\|\Big] + \frac{1}{\eta_t} \|H_t(x)\| \notag\\
		&\numpf{i}{\lesssim} \frac1{\eta_t} \bigg(\frac1{n(2\pi t)^{d/2}t} + \frac1t\sqrt{\frac{p_{t}(x)}{n(2\pi t)^{d/2}}}\bigg) + \frac{p_t(x)}{\eta_t} \frac{\|H_t(x)\|}{p_t(x)} \notag\\
		&\numpf{ii}{\leq} \frac1{\eta_t} \bigg(\frac1{n(2\pi t)^{d/2}t} + \frac1t\sqrt{\frac{c_\eta \eta_t}{n(2\pi t)^{d/2}}}\bigg)+ \frac{c_\eta \eta_t}{\eta_t} \frac{\|H_t(x)\|}{p_t(x)} \notag\\
		&\numpf{iii}{\lesssim}\frac{n(2\pi t)^{d/2}}{\log n} \bigg(\frac1{n(2\pi t)^{d/2}t} + \frac1t\sqrt{\frac{\log n}{n(2\pi t)^{d/2}}\frac{1}{n(2\pi t)^{d/2}}}\bigg) + \frac{\|H_t(x)\|}{p_t(x)} \notag\\
		& \asymp \frac1{t\sqrt{\log n}} + \frac{\|H_t(x)\|}{p_t(x)} 
	\end{align*}
	where (i) uses \eqref{eq:Phi-error-exp-UB} in Lemma~\ref{lemma:MSE-density}; (ii) $p_{t}(x)\leq c_\eta \eta_t$ on the set $\cF_t^\setc$; (iii) plugs in the values of $\eta_t$ in \eqref{eq:score-estimate}.
	Thus, we find that
\begin{align}
	(\mathrm{I}) & \lesssim \int_{\cF_t^\setc} \bigg(\frac1t + \frac{\|H_t(x)\|}{p_{t}(x)} + \|s_{t}(x)\|_2^2 \bigg) p_{t}(x)\diff x  \label{eq:Jacobian-K2-II}
\end{align}

Combined with the term (II), we arrive at
\begin{align*}
	\xi_2 \lesssim \int_{\cF_t^\setc} \bigg(\frac1t + \frac{\|H_t(x)\|}{p_{t}(x)} + \|s_{t}(x)\|_2^2 + \|J_t(x)\|  \bigg) p_{t}(x)\diff x \asymp \int_{\cF_t^\setc} \bigg(\frac1t + \|s_{t}(x)\|_2^2 + \|J_t(x)\|  \bigg) p_{t}(x)\diff x
\end{align*}
where the last step holds due to \eqref{eq:Jacobian-expression} and the triangle inequality that
\begin{align*}
	\frac{\|H_t(x)\|}{p_{t}(x)} \leq \big\|J_{s_t}(x) \big\| + \frac1t + \| s_{t}(x)\|_2^2.
\end{align*}

As a final step, invoking \eqref{eq:prob-small} in Lemma~\ref{lemma:prob-small-density} leads to
\begin{align}
	\xi_2 &\lesssim \bigg(\frac{16}{{\pi}}\bigg)^{d/2} \frac1n\bigg(\frac1t+\frac{\sigma^d}{t^{d/2+1}}\bigg) (\log n)^{d/2} +\bigg(\frac{16}{{\pi}}\bigg)^{d/2} \frac1n\bigg(\frac1t+\frac{\sigma^d}{t^{d/2+1}}\bigg) (\log n)^{d/2+2} \notag\\ 
	&\asymp \bigg(\frac{16}{{\pi}}\bigg)^{d/2} \frac1n\bigg(\frac1t+\frac{\sigma^d}{t^{d/2+1}}\bigg) (\log n)^{d/2+2} \label{eq:Jacobian-K2}.
\end{align}

\paragraph*{Step 3: Bounding $\xi_3$.}
It remains to control the quantity $\xi_3$. We can first use the Cauchy-Schwartz inequality to bound
\begin{align*}
\xi_3 & \leq \int_{\bR^d} \sqrt{\bE\big[ \| J_{\wh{s}_t}(x)-J_{s_t}(x) \|^2 \big]} \sqrt{\bP\big(\cE^\setc(x)\big)}\, p_{t}(x) \diff x \lesssim n^{-5} \int_{\bR^d} \Big(\sqrt{\bE\big[ \| J_{\wh{s}_t}(x)\|^2}\big] + \|J_{s_t}(x) \|\Big)\, p_{t}(x) \diff x,
\end{align*}
where the last step uses \eqref{eq:density-concen-ineq}, the AM-GM inequality, and $\sqrt{a+b}\leq \sqrt{a}+\sqrt{b}$ for any $a,b\geq 0$.

As shown in \eqref{eq:J-s-hat-UB}, we can bound
\begin{align*}
	\big\| J_{\wh{s}_t}(x)\big\|
	& \lesssim \frac1{t} + \frac{1}{\eta_t}\big\|\wh{H}_t(x)\big\| + \frac1{\eta_t^2} \big\| \wh{g}_t(x)\big\|_2^2 \\
	& \numpf{i}{\lesssim} \frac1t + \frac{1}{\eta_t} \frac{1}{(2\pi t)^{d/2}t} + \frac{1}{\eta_t^2} \frac1{(2\pi t)^{d}t} \\
	& \numpf{ii}{\asymp} \frac1t + \frac{n(2\pi t)^{d/2}}{\log n} \frac{1}{(2\pi t)^{d/2}t} + \frac{n^2(2\pi t)^{d}}{\log^2 n} \frac1{(2\pi t)^{d}t} \\
	& \asymp \frac{n^2}{t\log^2n},
\end{align*}
where (i) uses \eqref{eq:Sigma-hat-UB} and \eqref{eq:g-hat-UB}; (ii) arises from the choice of $\eta_t$ in \eqref{eq:score-estimate}. Hence, we find that
\begin{align}
	\int_{\bR^d} \sqrt{\bE\big[\| J_{\wh{s}_t}(x)\|^2\big]}\,p_{t}(x) \diff x 
	& \lesssim \frac{n^2}{t}. 
	\notag
\end{align}

Additionally, we know from Lemma~\ref{lemma:score-lp-norm} that
\begin{align}
	\int_{\bR^d} \big\|J_{s_t}(x)\big\|\,p_{t}(x) \diff x
	\lesssim \frac{d}{t}, 
	\notag
\end{align}

Consequently, 
putting the above two bounds collectively gives
\begin{align}
\xi_3 \lesssim \frac{1}{n^5}\bigg(\frac{n^2}{t}+\frac{d}{t}\bigg) \lesssim \frac{d}{n^3 t}.
	\label{eq:Jacobian-K3}
\end{align}

\paragraph*{Step 4: Combining bounds for $\xi_1,\xi_2,\xi_3$.}

To sum up, plugging \eqref{eq:Jacobian-K1}, \eqref{eq:Jacobian-K2}, and \eqref{eq:Jacobian-K3} into \eqref{eq:Jacobian-error-temp} leads to
\begin{align*}
	& \int_{\bR^d} \bE\Big[ \big\| J_{\wh{s}_t}(x)-J_{s_t}(x) \big\| \Big] p_{t}(x) \diff x \leq \xi_1+\xi_2+\xi_3 \\ 
	& \qquad \lesssim  \bigg(\frac{16}{{\pi}}\bigg)^{d/4} \frac1{\sqrt{n}} \bigg(\frac1t + \frac{\sigma^{d/2}}{t^{d/4+1}}\bigg) (\log n )^{d/4+1} + \bigg(\frac{16}{{\pi}}\bigg)^{d/2} \frac1n\bigg(\frac1t+\frac{\sigma^d}{t^{d/2+1}}\bigg) (\log n)^{d/2+2}
	+ \frac{d}{n^3 t} \\
	& \qquad \asymp \bigg(\frac{16}{{\pi}}\bigg)^{d/4} \frac1{\sqrt{n}} \bigg(\frac1t + \frac{\sigma^{d/2}}{t^{d/4+1}}\bigg) (\log n )^{d/4+1} + \bigg(\frac{16}{{\pi}}\bigg)^{d/2} \frac1n\bigg(\frac1t+\frac{\sigma^d}{t^{d/2+1}}\bigg) (\log n)^{d/2+2}
\end{align*}
This finishes the proof of Proposition~\ref{thm:Jacobian-error}.

\subsection{Analysis for bias in the initial distribution $p_0^\star$  (proof of Proposition \ref{thm:early-stopping})}\label{sub:proof_of_theorem_ref_thm_early_stopping}

Recall that $Z_0\sim p_0$, $Z_t\sim p_t$, and $X_0\sim p^\star_0=p^\star \ast \cN(0,\tau I_d)$.
We will establish a stronger result than Claim \eqref{eq:early-claim}: for any $0<t<1$,
\begin{align}
	\TV(p^\star,{p_t}) \leq \wt C t^{\frac \beta2 \wedge 1} \big(\log(1/t)\big)^{d/2} \label{eq:TV-strong-claim}
\end{align}
for some constant $\wt C>0$ that depends only on $d,\beta,L$, and $\sigma$.

Suppose \eqref{eq:TV-strong-claim} holds. As $X_1 \disteq \sqrt{\alpha_1}Z_{t_1}$ with $t_1 = {(1-\alpha_1)}/{\alpha_1}+\tau\asymp 1-\alpha_1+\tau < 1$, \eqref{eq:TV-strong-claim} immediately gives the advertised result in \eqref{eq:early-claim}.

Hence, the remainder of this section focuses on proving \eqref{eq:TV-strong-claim}.
To this end, we define
\begin{align}
	R_y \defn \sqrt{\beta t\log(1/t)} \quad \text{ and }\quad R_x \defn 2(\sigma\vee 1)\sqrt{\beta\log(1/t)}.
\end{align}
Notice that $R_x\geq 2R_y$ since $t\leq 1$.
Recognizing $p_t$ is the convolution of the target distribution $p^\star$ with $\cN(0,tI_d)$, we can decompose the TV distance:
\begin{align*}
\TV(p^\star,\,p_t)&=\frac12\int_{x\in\mathbb{R}^d} \bigl| p^\star(x) - p_t(x) \bigr|\diff x =\int_{x\in\mathbb{R}^d} \bigl| p^\star(x) - (p^\star \ast \varphi_t)(x) \bigr|\diff x \\
&=\frac12\int_{x\in\mathbb{R}^d} \biggl|\int_{y\in\mathbb{R}^d} \big(p^\star(x) -  p^\star(x-y)\big)\varphi_t(y)\diff y \biggr|\diff x \\
& \leq \frac12\underbrace{\int_{\|x\|_\infty\leq R_x} \biggl|\int_{y\in\mathbb{R}^d} \big(p^\star(x) -  p^\star(x-y)\big)\varphi_t(y)\diff y \biggr|\diff x}_{\defnrev(\mathrm{I})} \\ 
& \quad + \frac12\underbrace{\int_{\|y\|_\infty> R_y}\varphi_t(y)\int_{\|x\|_\infty > R_x}   \bigl|p^\star(x) - p^\star(x-y)\bigr|\diff x\diff y}_{\defnrev(\mathrm{II})} \\
&\quad +\frac12\underbrace{\int_{\|y\|_\infty\leq R_y} \varphi_t(y)\int_{\|x\|_\infty > R_x} \bigl|p^\star(x) - p^\star(x-y)\bigr|\diff x\diff y}_{\defnrev(\mathrm{III})}
\end{align*}
In what follows, we will control (I), (II), and (III) separately.

\begin{itemize}

\item We begin with term (I). Recall the multi-index notation in Assumption~\ref{assume:smooth} and $\vphi_t$ is the density of $\cN(0,tI_d)$.
Since $p^\star$ is $\beta$-H\"older-smooth, we can use Taylor's theorem to expand $p^\star(x+y)$ around $x$ up to order $\lfloor\beta\rfloor$:
\begin{align*}
p^\star(x+y) &= p^\star(x)+\sum_{1\leq |s| \leq \lfloor\beta\rfloor} \frac{1}{s!} \partial^s p^\star(x) y^s + R(x,y)
\end{align*}
where the remainder term satisfies $|R(x,y)| \leq L C_{d,\beta} \|y\|_2^{\beta}$ for some constant $C_{d,\beta}>0$ depending on $d$ and $\beta$.
Hence, we can bound
\begin{align*}
	&\int_{y\in\bR^d} \big(p^\star(x) - p^\star(x-y)\big)\vphi_t(y)\diff y 
	= \int_{y\in\bR^d} \big(p^\star(x+y) - p^\star(x)\big)\vphi_t(y)\diff y  \\
	&\qquad = \sum_{1\leq|s| \leq \lfloor\beta\rfloor} \frac{1}{s!}\partial^s p^\star(x) \int_{\bR^d}  y^s \vphi_t(y)\diff y  + LC_{d,\beta}O\bigg(\int_{\bR^d}  \|y\|_2^{\beta} \vphi_t(y)\diff y\bigg) 
\end{align*}
where the first step uses the symmetry of $\vphi_t$. Using the standard Gaussian property and the change of variable $z=y/\sqrt{t}$, one has:
\begin{align*}
	\int_{\bR^d}y^s \vphi_t(y)\diff y =t^{|s|/2} \int_{\bR^d} \prod_{i=1}^d z_i^{s_i} \vphi_1(z)\diff z \leq C_s t^{|s|/2}
\end{align*}
for some constant $C_s$ depending on $s$ if $|s|$ is even, and the integral is zero for odd $|s|$. 
Similarly, the standard Gaussian property tells us
\begin{align*}
	\int_{\bR^d}  \|y\|_2^{\beta} \vphi_t(y)\diff y & = t^{\beta/2} \int_{\bR^d}  \|z\|_2^{\beta} \vphi_1(z)\diff z = 2^{\beta/2} \frac{\Gamma(\frac{d+\beta}{2})}{\Gamma(\frac d2)} t^{\beta/2}.
\end{align*}
Combining the above observations with $t<1$ gives us
\begin{align*}
\bigg|\int_{y\in\bR^d} \big(p^\star(x) - p^\star(x-y)\big)\vphi_t(y)\diff y\bigg| \leq C_{d,\beta,L}t^{\beta/2 \wedge 1},
\end{align*}
for some constant $C_{d,\beta,L}>0$ depending on $d,\beta$, and $L$.
It follows that
\begin{align}
	(\mathrm{II}) &\leq  (2R_x)^d C_{d,\beta,L}t^{\beta/2 \wedge 1} \leq C_{d,\beta,L,\sigma}t^{\beta/2 \wedge 1} \log^{d/2}(1/t), \label{eq:early-2}
\end{align}
for some constant $C_{\sigma,d,\beta,L}>0$ depending on $d,\beta,L$, and $\sigma$.

	\item We proceed to control term (II). Since $p^\star$ is a density, we have
\begin{align*}
	(\mathrm{II}) &\leq \int_{\|y\|_\infty> R_y} \varphi_t(y)\int_{x\in\mathbb{R}^d} \bigl(p^\star(x) + p^\star(x-y)\bigr)\diff x \diff y \leq 2\int_{\|y\|_\infty> R_y} \varphi_t(y)\diff y = 2\bP\Big\{\sqrt{t}\left\|W\right\|_\infty > R_y\Big\},
\end{align*}
where $W\sim\cN(0,I_d)$ is a standard Gaussian random vector in $\bR^d$. Given $R_y/\sqrt{t} = \sqrt{\beta\log(1/t)}$, we know from the union bound and standard Gaussian tail bound that
\begin{align}
	(\mathrm{II})\leq \bP\Big\{\|W\|_\infty > \sqrt{\beta\log(1/t)} \Big\} \leq \sum_{i=1}^d \bP\big\{|W_i| >  \sqrt{\beta\log(1/t)}\big\} \leq 2d t^{\beta/2}. \label{eq:early-1}
\end{align}

\item 
Turning to (III), for any $x,y$ satisfying $\|x\|_\infty>R_x$ and $\|y\|_\infty \leq R_y$, we know from $R_x \geq 2R_y$ that $\|x\|_\infty> 2\|y\|_\infty$. Applying the triangle inequality yields
\begin{align*}
	\|x-y\|_\infty\geq \|x\|_\infty-\|y\|_\infty \geq \|x\|_\infty/2>R_x/2.
\end{align*} 
This allows us to bound
\begin{align*}
	\int_{\|x\|_\infty > R_x} \bigl|p^\star(x) - p^\star(x-y)\bigr|\diff x &\leq \int_{\|x\|_\infty > R_x} p^\star(x) \diff x + \int_{\|x\|_\infty > R_x} p^\star(x-y)\diff x \\
	& \leq \int_{\|x\|_\infty > R_x} p^\star(x) \diff x + \int_{\|x-y\|_\infty >R_x/2} p^\star(x-y)\diff x\\
	& \leq 2\,\bP\bigl\{\|Z_0\|_\infty>R_x/2\bigr\} \\
	& \numpf{i}{\leq} 2\sum_{i=1}^d\,\bP\Bigl\{|Z_0^\top e_i| >\sigma\sqrt{\beta\log(1/t)}\Bigr\} \\
	& \numpf{ii}{\leq} 2dt^{\beta/2}
\end{align*}
where (i) holds due to the choice of $R_x$; (ii) holds as $Z_0$ and $\{Z_0^\top e_i\}_{i\in[d]}$ are $\sigma$-subguassian. It follows that
\begin{align}
	(\mathrm{III}) \leq 2dt^{\beta/2}\int_{\|y\|_\infty\leq R_y} \varphi_t(y)  \diff y \leq 2dt^{\beta/2}. \label{eq:early-3}
\end{align}
\item
To finish up, combining the bounds \eqref{eq:early-2}--\eqref{eq:early-3} yields the claim in \eqref{eq:TV-strong-claim}:
\begin{align*}
\TV(p^\star,\,p_t) \lesssim C_{d,\beta,L,\sigma}t^{\beta/2 \wedge 1} \big(\log(1/t)\big)^{d/2} + dt^{\beta/2} \leq \wt C t^{\beta/2\wedge 1}  \big(\log(1/t)\big)^{d/2}.
\end{align*}
for some constant $\wt C>0$ depending on $d,\beta,L$, and $\sigma$.

\end{itemize}

\section{Proof of lemmas}\label{sec:proof_of_lemmas}

\subsection{Proof of Lemma~\ref{lemma:step-size}}
\label{sec:pf-lemma:step-size}
Before beginning the proof, we note that the update rule \eqref{eq:learning-rate} implies that
\begin{align}
	\frac{1-\alpha_k}{1-\ol\alpha_{k-1}} = \frac{\ol\alpha_{k-1}-\ol\alpha_k}{\ol\alpha_{k-1}-\ol\alpha_{k}} = \frac{c_1\log K}{K}.
\end{align}
\begin{itemize}
	\item Let us begin with Claim \eqref{eq:learning-rate-3}. Recall the definition $\ol\alpha_k= \prod_{i=1}^k \alpha_i \in (0,1)$. By the update rule \eqref{eq:learning-rate}, we have
	\begin{align*}
		1<\frac{1-\ol\alpha_k}{1-\ol\alpha_{k-1}} = 1 + \frac{\ol\alpha_{k-1}-\ol\alpha_k}{1-\ol\alpha_{k-1}} = 1+ \frac{c_1 \log K}{K}\ol\alpha_{k-1} <1+ \frac{c_1 \log K}{K}, 
	\end{align*}
	where the first inequality and the last step holds due to $\alpha_k \in (0,1)$ and thus $\ol\alpha_k= \prod_{i=1}^k \alpha_i \in (0,1)$ is decreasing in $k$.
	\item Given the above observation, we further know that for all $k\geq 2$:
	\begin{align*}
		1-\alpha_k \leq \frac{1-\alpha_k}{1-\ol\alpha_k} = \frac{1-\alpha_k}{1-\ol\alpha_{k-1}} \frac{1-\ol\alpha_{k-1}}{1-\ol\alpha_k}  < \frac{1-\alpha_k}{1-\ol\alpha_{k-1}}= \frac{ c_1\log K}{K} \leq \frac12,
	\end{align*}
	where the last step holds as long as $K$ is large enough. This proves Claim \eqref{eq:alpha-t-lb}.
	
	\item In addition, we have
	\begin{align*}
		\frac{1-\alpha_k}{\alpha_k-\ol\alpha_k}=\frac1{\alpha_k}\frac{1-\alpha_k}{1-\ol\alpha_{k-1}} = \frac1{\alpha_k} \frac{c_1\log K}{ K}.
	\end{align*}
	As $1/2 < \alpha_k \leq 1$, we obtain
	\begin{align*}
		\frac{c_1\log K}{K} \leq \frac{1-\alpha_k}{\alpha_k-\ol\alpha_k} \leq \frac{2c_1\log K}{K}
	\end{align*}
	as claimed in \eqref{eq:learning-rate-4}.
	\item Finally, let us consider Claim \eqref{eq:learning-rate-6}. We first claim that $\ol\alpha_k \leq 1/2$ for all $k\geq K/2+1$. Let us prove it by contradiction. Assume the statement is false, that is, there exists some $k\geq K/2+1$ such that $\ol\alpha_k > 1/2$. Given $\ol\alpha_{k}$ is decreasing in $k$, this implies that $\ol\alpha_{k} > 1/2$ for all $1\leq k \leq K/2+1$.  In view of the update rule \eqref{eq:learning-rate}, we can deduce that for all $1\leq k \leq K/2$,
	\begin{align*}
		1-\ol\alpha_{k+1}  = (1-\ol\alpha_{k})\bigg(1+\ol\alpha_{k}\frac{c_1\log K}{K}\bigg) > (1-\ol\alpha_{k})\bigg(1+\frac{c_1\log K}{2K}\bigg).
	\end{align*}
	As $1-\ol\alpha_1 = 1-\alpha_1=K^{-c_0}$, this implies
	\begin{align*}
	1-\ol\alpha_{K/2+1} &> (1-\ol\alpha_1)\bigg(1+\frac{c_1\log K}{2K}\bigg)^{K/2} = K^{-c_0}\bigg(1+\frac{c_1\log K}{2K}\bigg)^{\frac{2K}{c_1\log K}\frac{c_1 \log K}{4}} \\ 
	& \numpf{i}{\geq} K^{-c_0} \bigg(1+\frac{1}{4}\bigg)^{c_1 \log K}=K^{-c_0+c_1\log5/4}  \numpf{ii}{>} \frac{1}{2},
	\end{align*}
	where (i) holds as $(1+1/x)^x$ is increasing in $x$ for $x>0$ and $2K/(c_1\log K)>4$ for $K$ large enough; (ii) holds as long as $c_1\geq 5c_0$. This leads to a contradiction. Thus, we know that $\ol\alpha_k \leq 1/2$ for all $k\geq K/2+1$.
	Combined with the update rule \eqref{eq:learning-rate}, this means that for all $k\geq K/2$:
	\begin{align*}
		\ol\alpha_{k+1}= \ol\alpha_{k}\bigg(1-(1-\ol\alpha_k)\frac{c_1\log K}{K}\bigg) \le \ol\alpha_{k}\bigg(1-\frac{c_1\log K}{2K}\bigg).
	\end{align*}
	Therefore, we arrive at the claim that
	\begin{align*}
	\ol\alpha_{K} \le \ol\alpha_{K/2+1}\bigg(1-\frac{c_1\log K}{2K}\bigg)^{K/2-1} \leq \frac{\frac12}{1-\frac{c_1\log K}{2K}} \bigg(1-\frac{c_1\log K}{2K}\bigg)^{\frac{2K}{c_1\log K}\frac{c_1 \log K}{4}} \leq \frac{1}{K^{c_1/4}},
	\end{align*}
    where the last step holds due to $\ol\alpha_{K/2+1} \leq 1/2$, $(c_1\log K)/K\leq 1/2$, and $(1-1/x)^{x} \leq \exp(-1)$ for $x>1$.
\end{itemize}


\subsection{Proof of Lemma \ref{lemma:mixing}}\label{sub:proof_of_lemma_ref_lemma_mixing}
Let us first control the KL divergence between $p_{X_{K}}$ and $p_{Y_{K}}$.
Recall that $Y_K\sim\cN(0,I_d)$ and $X_K\sim\cN(\sqrt{\ol\alpha_K}\,x_0,(1-\ol\alpha_K)I_d)$ given $X_0=x_0$. We can derive
\begin{align*}
	\KL\big(p_{X_{K}}\,\|\,p_{Y_{K}}\big) &\numpf{i}{\leq} \bE_{X_0\sim p^\star_0} \Big[ \KL\big(p_{X_{K}}(\cdot \mid X_0)\,\|\,p_{Y_{K}}(\cdot )\big)\Big] \nonumber \\
	& \numpf{ii}{=} \frac12 \bE \Big[d(1-\ol\alpha_K)-d+\big\|\sqrt{\ol\alpha_K}X_0\big\|_2^2 - d\log(1-\ol\alpha_K) \Big] \nonumber \\
	& \numpf{iii}{\leq} \frac12 \ol\alpha_K \bE\big[\| X_0\|_2^2\big] \numpf{iv}{\leq} \frac12\bE\big[\| X_0\|_2^2\big] K^{-c_1/4},
\end{align*}
where (i) uses the convexity of the KL divergence; (ii) applies the KL divergence formula for Gaussian distributions; (iii) is true as the chosen learning rate ensures that $\ol\alpha_K \leq K^{-c_1/4}\leq 1/2$ for $K$ large enough and $\log(1-x)\geq-x$ for all $x\in[0,1/2]$; (iv) uses $\ol\alpha_K \leq K^{-c_1/4}$ again.
Hence, it follows from Pinsker's inequality that
\begin{align*}
	\TV\big(p_{X_{K}},p_{Y_{K}}\big) \leq \sqrt{\frac12\KL\big(p_{X_{K}}\,\|\,p_{Y_{K}}\big)} \leq\frac12 \sqrt{\bE\big[\| X_0\|_2^2\big]}\, K^{-c_1/8}.
\end{align*}

\subsection{Proof of Lemma~\ref{lemma:MSE-density}}\label{sub:proof_of_lemma_ref_lemma_mse-density}

Recall that the training data satisfy $X^{(i)}\disteq Z_0\sim p_0= p^\star$ and the density estimator $\wh{p}_{t}(x)$ is defined in \eqref{eq:p_hat}:
\begin{align*}
 	\wh{p}_{t}(x) &\defn \frac1{n} \sum_{i=1}^n \vphi_{t}(X^{(i)}-x) \qquad \text{with}\qquad \vphi_t(x)\defn\frac1{(2\pi t)^{d/2}}\exp\bigl(-x^2/(2t)\bigr).
\end{align*} 
\paragraph*{Estimation error of $\wh{p}_t$.}
Notice that $$\bE\big[\vphi_{t}(Z_0-x)\big]\big)=\int_{y\in\bR^d} \vphi_{t}(y-x)p_0(y)\diff y=\int_{y\in\bR^d} \vphi_{t}(x-y)p_0(y)\diff y = (\vphi_{t}\ast p_0)(x) = p_{t}(x).$$ This means that
\begin{align*}
	\wh{p}_t(x) - p_{t}(x) = \frac1n \sum_{i=1}^n \big(\vphi_{t}(X^{(i)}-x) - \bE\big[\vphi_{t}(X^{(i)}-x)\big]\big)
\end{align*}
is a sum of i.i.d.~zero-mean random variables. The variance of $\wh{p}_t(x)$ can be bounded by
\begin{align}
	\var\big(\wh{p}_t(x) \big) &\leq \frac1n \bE\big[ \vphi_{t}^2(Z_0-x) \big] \notag \\
	& = \frac1n \int_{y\in\bR^d} \frac1{(2\pi t)^{d}}\exp\bigl(-{\|x-y\|_2^2}/t \bigr)p_0(y)\diff y \notag\\
	& \leq \frac1{n(2\pi t)^{d/2}} \int_{y\in\bR^d} \frac1{(2\pi t)^{d/2}}\exp\bigl(-\|x-y\|_2^2/(2t)\bigr)p_0(y)\diff y \notag\\
	& = \frac1{n(2\pi t)^{d/2}}(\vphi_{t} \ast p_0)(x) \notag\\
	& = \frac{p_{t}(x)}{n(2\pi t)^{d/2}}
	\label{eq:var-p-hat}.
\end{align}
This proves Claim \eqref{eq:MSE-pdf}.

Furthermore, we can apply the triangle inequality to get
\begin{align*}
	\max_{i\in[n]} \frac1n \big|\vphi_{t}(X^{(i)}-x) - \bE[\vphi_{t}(X^{(i)}-x)]\big|
	& \leq \frac2n  \sup_{x\in\bR^d} \vphi_{t}(x)
	 = \frac2n  \sup_{x\in\bR^d} \bigg\{\frac1{(2\pi t)^{d/2}}\exp\bigl(-\|x\|_2^2/(2t)\bigr)\bigg\} \\
	 & = \frac2{n(2\pi t)^{d/2}}.
\end{align*}
Combining this with \eqref{eq:var-p-hat}, we can invoke the Bernstein inequality to obtain
\begin{align*}
	\bP\Bigg\{\big|\wh{p}_t(x) - p_{t}(x)\big| > C_\cE \bigg( \frac{\log n}{n(2\pi t)^{d/2}}+ \sqrt{\frac{p_{t}(x)\log n}{n(2\pi t)^{d/2}}} \bigg) \Bigg\} \lesssim n^{-10}
\end{align*}
as long as $C_\cE>0$ is large enough. This proves Claim \eqref{eq:density-concen-ineq}.

\paragraph*{Estimation error of $\wh{g}_t$.}

Notice that
\begin{align*}
	\wh{g}_t(x) - g_{t}(x)&= \frac{1}{nt}\sum_{i=1}^n \Big((X^{(i)}-x)\vphi_{t}(X^{(i)}-x) - \bE \big[(Z_0-x)\vphi_{t}(Z_0-x) \big] \Big)
\end{align*}
is a sum of independent zero-mean random vectors.
This allows us to derive
\begin{align}
	\bE\Big[ \big\| \wh{g}_t(x)-g_t(x)\big\|_2^2 \Big]
	& = \frac{1}{nt^2} \bE \Big[ \big\|(X^{(i)}-x)\vphi_t(X^{(i)}-x) - \bE \big[(Z_0-x)\vphi_t(Z_0-x) \big] \big\|_2^2 \Big] \notag \\
	& = \frac{1}{nt^2} \bigg(\bE \Big[ \big\|(X^{(i)}-x)\vphi_t(X^{(i)}-x)\big\|_2^2 \Big] - \Big\|\bE \big[(Z_0-x)\vphi_t(Z_0-x) \big] \Big\|_2^2 \bigg) \notag \\
	& \leq \frac{1}{nt^2} \bE \Big[ \big\|(Z_0-x)\vphi_t(Z_0-x)\big\|_2^2 \Big] \notag \\
	& = \frac{1}{nt^2} \bE \Big[ \big\|Z_0-x\big\|_2^2\vphi_t^2(Z_0-x) \Big]
	\label{eq:g-hat-error-expression}.
\end{align}
It is straightforward to calculate
\begin{align}
\bE \Big[ \|Z_0-x\|_2^2\vphi_t^2(Z_0-x) \Big]
	& = \frac{t}{(2\pi t)^{d/2}}  \int_{\bR^d} \frac{1}{t}\|y-x\|_2^2 \frac{1}{(2\pi t)^{d/2}}\exp\bigl(-{\|y-x\|_2^2}/{t}\bigr)p_0(y)\diff y \notag \\
	& \numpf{i}{\leq} \frac{t}{(2\pi t)^{d/2}}   \int_{\bR^d} \frac{1}{(2\pi t)^{d/2}}\exp\bigl(-{\|y-x\|_2^2}/({2t})\bigr)p_0(y)\diff y \notag \\
	& = \frac{t}{(2\pi t)^{d/2}} (\vphi_t \ast p_0)(x) \notag \\
	& = \frac{t}{(2\pi t)^{d/2}}p_t(x)\label{eq:X-phi-2nd-UB},
\end{align}
where (i) holds as $x \leq \exp(x/2)$ for any $x\geq 0$.
Combining \eqref{eq:g-hat-error-expression} and \eqref{eq:X-phi-2nd-UB} establishes Claim \eqref{eq:MSE-g}.

\paragraph*{Estimation error of $\wh{H}_t$.}
Note that
\begin{align*}
	\wh{H}_t(x)-H_t(x) = \frac1{n} \sum_{i=1}^n \bigg(\frac1{t^2} (X^{(i)}-x)(X^{(i)}-x)^\top \vphi_t(X^{(i)}-x) -\frac1{t^2} \bE\big[(Z_0-x)(Z_0-x)^\top \vphi_t(Z_0-x)\big] \Big)
\end{align*}
is a sum of i.i.d.~zero-mean random matrices.
Straightforward calculation shows that
\begin{align}
	\max_{i\in[n]} \frac1n \bigg\| \frac1{t^2} (X^{(i)}-x)(X^{(i)}-x)^\top \vphi_t(X^{(i)}-x) - H_t(x) \bigg\|
	& \leq \frac1{nt^2}\max_{i\in[n]}  \|X^{(i)}-x\|_2^2 \vphi_t(X^{(i)}-x) +\frac1n\big\|H_t(x)\big\| \notag\\
	& \numpf{i}{\leq} \frac{2}{nt^2} \frac{1}{(2\pi t)^{d/2}} \sup_{y\in\bR^d} \Big\{\|y\|_2^2 \exp\bigl(-{\|y\|_2^2}/({2t})\bigr)\Big\} \notag\\
	& = \frac{4}{e} \frac{1}{n(2\pi t)^{d/2}t} \defnrev B, \label{eq:Sigma-hat-UB}
\end{align}
where (i) is true since $H_t(x) = t^{-2} \bE[(Z_0-x)(Z_0-x)^\top \vphi_t(Z_0-x)]$.
In addition, we have
\begin{align*}
	& \frac{1}{n^2t^4} \bigg\| \sum_{i=1}^n \bE \Big[ \big( (X^{(i)}-x)(X^{(i)}-x)^\top \vphi_t(X^{(i)}-x) - \bE\big[(X^{(i)}-x)(X^{(i)}-x)^\top \vphi_t(X^{(i)}-x)\big] \big)^2 \Big] \bigg\| \\
	& \qquad \numpf{i}{\leq} \frac{1}{n^2t^4} \bigg\| \sum_{i=1}^n \bE \Big[ \big( (X^{(i)}-x)(X^{(i)}-x)^\top \vphi_t(X^{(i)}-x) \big)^2 \Big] \bigg\| \\
	& \qquad = \frac1{nt^4} \bE\Big[\|Z_0-x\|_2^4 \, \vphi_t^2(Z_0-x) \Big] \\
	& \qquad = \frac1{nt^2} \int_{\bR^d} \frac1{t^2} \|y-x\|_2^4 \frac{1}{(2\pi t)^{d}} \exp\bigl(-{\|y-x\|_2^2}/{t}\bigr)p_0(y)\diff y \notag\\
	& \qquad \numpf{ii}{\leq} \frac3{nt^2} \frac{1}{(2\pi t)^{d/2}} \int_{\bR^d} \frac{1}{(2\pi t)^{d/2}} \exp\bigl(-{\|y-x\|_2^2}/({2t})\bigr)p_0(y)\diff y \\
	& \qquad = \frac{3}{n(2\pi t)^{d/2}t^2} (\vphi_t \ast p_0)(x) \\
	& \qquad = \frac{3p_t(x)}{n(2\pi t)^{d/2}t^2} \defnrev V.
\end{align*}
Here, (i) is true as $\bE\big[(X-\bE[X])^2\big] \preccurlyeq \bE[X^2]$; (ii) uses $x^2 \leq 3\exp(x/2)$ for any $x \geq 0$.
Invoking the matrix Bernstein inequality gives that for any $a>0$:
\begin{align*}
	\bP\Big\{\big\|\wh{H}_t(x)-H_t(x)\big\| > a \Big\} 
	\leq 2\exp\left( -\frac38\min\left\{\frac{a^2}{V},\frac{a}{B}\right\} \right).
\end{align*}
Finally, we can invoke Lemma~\ref{lemma:con-ineq-exp} in Appendix~\ref{sec:auxiliary_lemmas} to get
\begin{align*}
	\bE\Big[\big\|\wh{H}_t(x)-H_t(x)\big\|\Big] \lesssim B + \sqrt{V} \lesssim \frac1{n(2\pi t)^{d/2}t} + \sqrt{\frac{p_t(x)}{n(2\pi t)^{d/2}t^2}},
\end{align*}
as claimed in \eqref{eq:Phi-error-exp-UB}.

\subsection{Proof of Lemma~\ref{lemma:score-Jacobian-UB-prob-high}}\label{sub:proof_of_lemma_ref_lemma_score_jacobian_ub_prob_high}

The upper bound for the $\ell_2$ norm of the score function $s_t(x)$ has been previously established in  \cite{jiang2009general,saha2020nonparametric}. For the sake of completeness and to ensure our analysis is self-contained, we present a derivation of this bound below.

In light of the expression of the score function in \eqref{eq:score-expression}, one can derive
\begin{align}
	\|s_t(x)\|_2^2 &\leq \frac1{t^2} \bE\big[\|Z_0-x\|^2_2 \mid Z_t = x\big] \notag \\
	&\numpf{i}{\leq} \frac{2}{t}\log \bE\Big[\exp\bigl(\|Z_0-x\|_2^2/(2t)\bigr) \mid Z_t = x\Big] \notag\\
	& \numpf{ii}{=} \frac2t \log \frac{1}{(2\pi t)^{d/2}p_t(x)} \label{eq:score-UB-log}.
\end{align}
Here, (i) holds due to the concavity of $x\mapsto \log x$ and Jensen's inequality; (ii) holds as
\begin{align*}
	p_{Z_0\mid Z_t}(y\mid x) = \frac{1}{(2\pi t)^{d/2}}\exp\bigl(-\|y-x\|_2^2/(2t)\bigr) \frac{p_0(y)}{p_t(x)}
\end{align*}
and hence
\begin{align*}
	\bE\big[\exp\bigl(\|Z_0-x\|_2^2/(2t)\bigr) \mid Z_t = x\big] &= \int_{y\in\bR^d} \exp\bigl(\|y-x\|_2^2/(2t)\bigr) p_{Z_0\mid Z_t}(y\mid x) \diff y=  \frac{1}{(2\pi t)^{d/2}p_t(x)}.
\end{align*}

Therefore, for any $x$ such that $p_t(x) \geq c_\eta \eta_t = c_\eta\log n/\big(n(2\pi t)^{d/2}\big)$, one has
\begin{align*}
	\|s_t(x)\|_2^2 \leq \frac2t \log \frac{n}{c_\eta\log n} \leq \frac{2\log n}{t},
\end{align*}
where the last step holds as long as $c_\eta \geq 2$ and $n>1$. This proves Claim \eqref{eq:score-UB-prob-high}.


Next, let us consider the spectral norm of $H_t$. By the expression in \eqref{eq:Phi-p-exp}, we can bound
\begin{align}
	\frac1{p_t(x)}\big\|H_t(x)\big\| \leq \frac1{t^2} \bE\big[\|Z_0-x\|^2_2 \mid Z_t = x\big] \leq \frac2t \log \frac{1}{(2\pi t)^{d/2}p_t(x)} \label{eq:H-log}.
\end{align}
Applying the same argument as above, we conclude that for any $x$ such that $p_t(x) \geq c_\eta \eta_t$,
\begin{align*}
	\big\|H_t(x)\big\|\lesssim \frac{p_t(x)\log n}{t}
\end{align*}
provided $c_\eta \geq 2$. This establishes Claim \eqref{eq:Sigma-UB-prob-high}

Finally, let us consider the spectral norm of the Jacobian $J_t$.
In view of the expression in \eqref{eq:Jacobian-expression}, we know from \eqref{eq:score-UB-log} and \eqref{eq:H-log} that
\begin{align}
	\big\|J_{t}(x)\big\| \leq \frac1t+ \frac1{t^2} \frac{1}{p_t(x)}\big\|H_t(x)\big\| + \|s_t(x)\|_2^2 \leq \frac1t+\frac4t \log \frac{1}{(2\pi t)^{d/2}p_t(x)} \label{eq:jcb-UB-log}.
\end{align}
Therefore, for any $x$ such that $p_t(x) \geq c_\eta \eta_t$, one has
\begin{align*}
	\big\|J_{t}(x)\big\| \leq \frac1t + \frac{\log n}t \asymp \frac{\log n}{t}.
\end{align*}
as long as $c_\eta \geq 2$. Thus, we prove Claim \eqref{eq:Jacobian-UB-prob-high}.

\subsection{Proof of Lemma~\ref{lemma:MSE-score-high-density}}\label{sub:proof_of_lemma_ref_lemma_mse-score}

%

Fix an arbitrary $x\in\cF_t$. 
As shown in \eqref{eq:score-Et}, on the event $\cE_t(x)$, the score estimator becomes $\wh{s}_t(x) = \wh{g}_t(x)/\wh{p}_t(x)$.
We can then use the triangle inequality to bound
\begin{align}
	\big\|\wh{s}_t(x)-s_t(x)\big\|_2 & = \bigg\|\frac{\wh{g}_t(x)}{\wh{p}_t(x)}- \frac{g_t(x)}{p_{t}(x)} \bigg\|_2 = \bigg\| \frac{\wh{g}_t(x)-g_t(x)}{\wh{p}_t(x)} + \frac{p_{t}(x)-\wh{p}_{t}(x)}{\wh{p}_{t}(x)}\frac{g_t(x)}{p_{t}(x)} \bigg\|_2 \notag \\
	& \leq \frac{1}{\wh{p}_t(x)} \big\|\wh{g}_t(x)-g_t(x)\big\|_2
	+ \frac{1}{\wh{p}_t(x)} \big|\wh{p}_t(x)-p_{t}(x)\big| \|s_{t}(x)\|_2 \notag \\ 
	& \lesssim \frac{1}{p_t(x)} \big\|\wh{g}_t(x)-g_t(x)\big\|_2
	+ \frac{1}{p_t(x)} \big|\wh{p}_t(x)-p_{t}(x)\big| \|s_{t}(x)\|_2\label{eq:score-estimate-decomp},
\end{align}
where the last step holds as $\wh{p}_t(x)\geq p_t(x)/2$ on the event $\cE_t(x)$.

Regarding the first term, we can use \eqref{eq:MSE-g} in Lemma~\ref{lemma:MSE-density} to bound 
\begin{align}
	\frac{1}{p_t^2(x)} \bE\Big[ \big\| \wh{g}_t(x)-g_t(x)\big\|_2^2 \Big] \leq   \frac{1}{n(2\pi t)^{d/2}} \frac{1}{tp_t(x)}
	\notag.
\end{align}
As for the second term, applying \eqref{eq:MSE-pdf} in Lemma~\ref{lemma:MSE-density} yields that
\begin{align}
	\frac{1}{p_t^2(x)} \bE\Big[ \big| \wh{p}_t(x)-p_t(x)\big|^2 \Big]  \|s_t(x)\|_2^2
	& \leq  \frac{1}{n(2\pi t)^{d/2}}  \frac{\|s_t(x)\|_2^2}{p_t(x)}  
	\notag.
\end{align}
Consequently, putting the two bounds above into \eqref{eq:score-estimate-decomp} leads to 
\begin{align*}
	\bE\Big[ \big\|\wh{s}_t(x)- s_t(x) \big\|_2^2 \ind\big\{\cE_t(x)\big\} \Big]  
	\lesssim \frac{1}{n(2\pi t)^{d/2}}\bigg(\frac1t+\|s_t(x)\|_2^2\bigg) \frac{1}{p_t(x)},
\end{align*}
as claimed in \eqref{eq:MSE-score}.

\subsection{Proof of Lemma~\ref{lemma:high-density-set-bound}}\label{sub:proof_of_lemma_ref_lemma_high_density_set_bound}

Recall $\eta_t \defn \frac{\log n}{(2\pi t)^{d/2}n}$. Denote $B \defn 2 \sqrt{(\sigma^2 +t)\log n}$.
We claim that
\begin{align}
	\cF_t = \big\{x\in \bR^d \colon p_t(x) \geq c_\eta \eta_t \big\} \subset \big\{x\in \bR^d \colon \|x\|_\infty \leq B \big\}. \label{eq:high-density-set-bound-temp}
\end{align}

Suppose Claim \eqref{eq:high-density-set-bound-temp} holds. Then it follows immediately that
\begin{align*}
	|\cF_t| &\leq (2B)^d \leq 4^d (t+\sigma^2)^{d/2} (\log n)^{d/2} \leq (32)^{d/2} (t^{d/2}+\sigma^d) (\log n)^{d/2}.
\end{align*}
where the last line holds since $(x+y)^{d/2} \leq 2^{d/2}(x^{d/2}+y^{d/2})$ for any $d\geq 1$ and $x,y>0$.

Therefore, the remainder of the proof focuses on establishing Claim \eqref{eq:high-density-set-bound-temp}, which we prove by contradiction. Suppose there exists some $z\in \bR^d$ such that $p_t(z)\geq c_\eta \eta_t$ and $\|z\|_\infty > B$. Without loss of generality, we assume $|z_1|>B$.
For notational convenience, Let $Z_t^{(i)}$ denote the $i$-th coordinate of $Z_t = (Z_t^{(1)}, \dots, Z_t^{(d)})$ and $p_t^{(i)}$ denote the density of the marginal distribution of $Z_t^{(i)}$. 
Note that
\begin{align*}
	p_t(z) &= \int_{y\in \bR^d} p_0(y) \prod_{i=1}^d \bigg(\frac1{\sqrt{2\pi t}}\exp\bigl(-{(z_i-y_i)^2}/({2t})\bigr)\bigg) \diff y \notag\\
	&\leq \frac1{(2\pi t)^{(d-1)/2}}\int_{y_1\in\bR} p_0^{(1)}(y_1) \frac1{\sqrt{2\pi t}}\exp\bigl(-{(z_1-y_1)^2}/({2t})\bigr) \int_{(y_2,\dots,y_d)\in\bR^{d-1}} p_0^{(-1)}(y_2\dots,y_d\mid y_1) \diff y_2\dots\diff y_d\diff y_1 \notag\\
	& = \frac1{(2\pi t)^{(d-1)/2}} \int_{y_1\in \bR} p_0^{(1)}(y_1) \frac1{\sqrt{2\pi t}}\exp\bigl(-{(z_1-y_1)^2}/({2t})\bigr) \diff y_1
	= \frac1{(2\pi t)^{(d-1)/2}} p_t^{(1)}(z_1).
\end{align*}
This implies that
\begin{align}
	p_t^{(1)}(z_1) \geq (2\pi t)^{(d-1)/2} p_t(z) \geq \frac{c_\eta \log n}{\sqrt{2\pi t}\,n}. \label{eq:p-t-marginal-LB}
\end{align}

Now, we choose $R\defn \frac{c_\eta \sqrt{t\log n}}{2n}$. For any $y_1\in\bR$ such that $|y_1-z_1|\leq R$, one has
\begin{align}
	|y_1|\geq |z_1| - |y_1-z_1| \numpf{i}{>} B -R = 2\sqrt{(\sigma^2 +t)\log n} - \frac{c_\eta \sqrt{t\log n}}{2n} \numpf{ii}{\geq} \sqrt{2(\sigma^2 +t)\log n}, \label{eq:y1-lb}
\end{align}
where (i) arises from the assumption; (ii) is true as long as $n\geq c_\eta$.
Define the function $\vphi_t^{(1)}(x):\bR\to\bR$ by $\vphi_t^{(1)}(x)\defn(2\pi t)^{-1/2}\exp\bigl(-x^2/(2t)\bigr)$. We can derive
\begin{align}
	\Big|p_t^{(1)}(y_1)-p_t^{(1)}(z_1)\Big|
	&=\bigg|\int_{x_1\in\bR}\Big(\vphi_t^{(1)}(y_1-x_1)-\vphi_t^{(1)}(z_1-x_1)\Big)p_0^{(1)}(x_1) \diff x_1\bigg| \notag\\
	& \leq \sup_{x_1\in\bR}\Big|\vphi_t^{(1)}(y_1-x_1)-\vphi_t^{(1)}(z_1-x_1)\Big| \int_{\bR} p_0^{(1)}(x_1)\diff x_1 \notag\\
	&\leq  |y_1-z_1| \sup_{x_1\in\bR} \bigg|\frac{\mathrm d}{\mathrm d x_{1}}\vphi_t^{(1)}(x_1)\bigg|  \notag\\
	& =  \frac{R}{\sqrt{2\pi t}} \frac1t \sup_{x_1\in\bR} \Big\{|x_1|\exp\bigl(-{x_1^2}/({2t})\bigr)\Big\} \notag\\
	&\leq \frac{R}{\sqrt{2\pi}\, t} = \frac{c_\eta \sqrt{\log n}}{2\sqrt{2\pi t}\,n} \leq p_t^{(1)}(z_1),
\end{align}
where the last line holds due to the choice of $R$ and \eqref{eq:p-t-marginal-LB}. This implies that for any $y_1\in\bR$ such that $|y_1-z_1|\leq R$:
\begin{align}
	p_t^{(1)}(z_1)\geq \frac12 p_t^{(1)}(y_1). \label{eq:p-z1-y1-lb}
\end{align}

Since $Z_0$ is $\sigma$-subgaussian, we know that $Z_t$ is $\sqrt{\sigma^2+t}$-subgaussian and $Z_t^{(1)}=Z_t^\top e_1$ is $\sqrt{\sigma^2+t}$-subgaussian. Hence, we can use the definition in Assumption~\ref{assume:subgaussian} to deduce
\begin{align*}
	2 &\geq \int_{y_1\in\bR} \exp\biggl(\frac{y^2_1}{\sigma^2+t}\biggr)p_t^{(1)}(y_1)\diff y_1 \geq \int_{y_1\colon|y_1-z_1|\leq R} \exp\biggl(\frac{y_1^2}{\sigma^2+t}\biggr)p_t^{(1)}(y_1)\diff y_1 \notag\\
	& \numpf{i}{\geq} 2R\exp\biggl(\frac{(B-R)^2}{\sigma^2+t}\biggr) \frac12 p_t^{(1)}(z_1)  \\
	& \numpf{ii}{\geq} R n^{2} p_t^{(1)}(z_1) \numpf{iii}{\geq} \frac{c_\eta \sqrt{t\log n}}{2n} n^{2}  \frac{c_\eta \log n}{n\sqrt{2\pi t}} = \frac{c_\eta^2}{2\sqrt{2\pi}} \log^{3/2} n,
\end{align*}
where (i) uses \eqref{eq:y1-lb} and \eqref{eq:p-z1-y1-lb}; (ii) arises from \eqref{eq:y1-lb}; (iii) follows from $R= \frac{c_\eta \sqrt{t\log n}}{2n}$ and \eqref{eq:p-t-marginal-LB}. This leads to contradiction for $n$ large enough, and thus proves Claim \eqref{eq:high-density-set-bound-temp}.

\subsection{Proof of Lemma \ref{lemma:prob-small-density}}\label{sub:proof_of_lemma_ref_lemma_prob_small_density}

\paragraph*{Proof of Claim \eqref{eq:prob-small-density}.}


Set $\cB\defn \big\{x\in\bR^d\colon \big\|x-\bE[Z_t]\big\|_\infty \leq B\big\}$ with $B \defn 2\sqrt{(\sigma^2+t)\log n}$. 
\begin{align*}
	\bP\big\{p_t(Z_t) < c_\eta \eta_t \big\} &= \bP\big\{p_t(Z_t) < c_\eta \eta_t,\,Z_t\in \cB\big\} + \bP\big\{p_t(Z_t) < c_\eta \eta_t,\,Z_t\notin \cB\big\} \\
	& \leq \bP\big\{p_t(Z_t) < c_\eta \eta_t,\,Z_t\in \cB\big\} + \bP\big\{Z_t\notin \cB\big\}.
\end{align*}
Notice that
\begin{align}
	\bP\big\{p_t(Z_t) < c_\eta \eta_t,\,Z_t\in \cB\big\} &= \int_{\cB} \ind\big\{p_t(x) < c_\eta \eta_t\big\} p_t(x) \diff x \leq c_\eta \eta_t|\cB|  \notag \\
	&\leq 4^d c_\eta \eta_t (\sigma^2+t)^{d/2} (\log n)^{d/2} = c_\eta \bigg(\frac{8}{\pi}\bigg)^{d/2} \frac1n \frac{(\sigma^2+t)^{d/2}}{t^{d/2}} (\log n)^{d/2+1} \notag \\
	& \lesssim \Big(\frac{16}{\pi}\Big)^{d/2} \frac1n \bigg(1+\frac{\sigma^d}{t^{d/2}}\bigg) (\log n)^{d/2+1},
	\label{eq:prob-small-density-temp}
\end{align}
where we use $c_\eta \asymp 1$,  $\eta_t=\frac{\log n}{n(2\pi t)^{d/2}}$ and $(x+y)^{d/2} \leq 2^{d/2}(x^{d/2}+y^{d/2})$ for any $d\geq 1$ and $x,y>0$.

In addition, since $Z_t^{(i)}\defn Z_t^\top e_i$ is $\sqrt{\sigma^2+t}$-subgaussian for all $i\in[d]$, applying the union bound gives
\begin{align}
	\bP\big\{Z_t\notin \cB\big\} \leq \sum_{i=1}^d\bP\Big\{\big|Z_t^{(i)}-\bE[Z_t^{(i)}]\big| > B\Big\}\leq 2d\exp\biggl(-\frac{B^2}{2(\sigma^2+t)}\biggr) \leq \frac{2d}{n^2}.\label{eq:X-t-notin-B}
\end{align}
Collecting these two bounds together yields
\begin{align*}
	\bP\big\{p_t(Z_t) < c_\eta \eta_t \big\}
	& \lesssim \Big(\frac{16}{\pi}\Big)^{d/2} \frac1n \bigg(1+\frac{\sigma^d}{t^{d/2}}\bigg) (\log n)^{d/2+1} + \frac{d}{n^2} 
	\asymp \Big(\frac{16}{\pi}\Big)^{d/2} \frac1n \bigg(1+\frac{\sigma^d}{t^{d/2}}\bigg) (\log n)^{d/2+1}.
\end{align*}

\paragraph*{Proof of Claim \eqref{eq:prob-small-score}.}

Claim \eqref{eq:prob-small-score} can be established an approach similar to that developed in \citet[Lemma 5]{wibisono2024optimal}.
For the sake of clarity and completeness, we present the full proof below.
Recall $\cB\defn \big\{x\in\bR^d\colon \big\|x-\bE[Z_t]\big\|_\infty \leq B\big\}$ with $B \defn 4\sqrt{(\sigma^2+t)\log n}$. We can derive
\begin{align*}
	&\bE\Big[\|s_t(Z_t)\|_2^2\,\ind\big\{p_t(Z_t) < c_\eta \eta_t \big\} \Big] \notag\\ 
	&\qquad= \bE\Big[\|s_t(Z_t)\|_2^2\,\ind\big\{p_t(Z_t) < c_\eta \eta_t \big\}\ind\big\{Z_t\in\cB\big\} \Big] + \bE\Big[\|s_t(Z_t)\|_2^2\,\ind\big\{p_t(Z_t) < c_\eta \eta_t \big\}\ind\big\{Z_t\notin\cB\big\} \Big] \\
	&\qquad \leq \bE\Big[\|s_t(Z_t)\|_2^2\,\ind\big\{p_t(Z_t) < c_\eta \eta_t \big\}\ind\big\{Z_t\in\cB\big\} \Big] + \bE\Big[\|s_t(Z_t)\|_2^2\,\ind\big\{Z_t\notin\cB\big\} \Big].
\end{align*}


When $p_t(x) < c_\eta \eta_t = \frac{c_\eta\log n}{n(2\pi t)^{d/2}}$, one has $(2\pi t)^{d/2}p_t(x)\leq (c_\eta \log n)/ n \leq 1/e$ for $n$ large enough.
Thus, we can bound
\begin{align}
	\bE\Big[\|s_t(Z_t)\|_2^2\,\ind\big\{p_t(Z_t) < c_\eta \eta_t \big\}\ind\big\{Z_t\in\cB\big\} \Big]
	&=\int_{\cB} \ind\big\{p_t(x) < c_\eta \eta_t\big\}\|s_t(x)\|_2^2\, p_t(x) \diff x \notag\\
	& \numpf{i}{\lesssim} \int_{\cB} \ind\big\{p_t(x) < c_\eta \eta_t\big\} p_t(x) \frac1t \log \frac{1}{(2\pi t)^{d/2}p_t(x)} \diff x \notag\\
	& \numpf{ii}{\leq}  \int_{\cB} \ind\big\{p_t(x) < c_\eta \eta_t\big\} c_\eta \eta_t \frac1t \log \frac{1}{(2\pi t)^{d/2}c_\eta \eta_t} \diff x \notag\\
	& \numpf{iii}{\lesssim}  \frac{c_\eta \eta_t |\cB|}t\log n \notag\\
	& \numpf{iv}{\asymp} \Big(\frac{16}{\pi}\Big)^{d/2} \frac1{nt} \bigg(1+\frac{\sigma^d}{t^{d/2}}\bigg) (\log n)^{d/2+2}
	\label{eq:prob-small-score-term1}
\end{align}
Here, (i) uses \eqref{eq:score-UB-log} from Lemma~\ref{lemma:score-Jacobian-UB-prob-high}; (ii) holds as $x\mapsto x\log(1/x)$ is increasing on $(0,1/e)$; (iii) holds $(2\pi t)^{d/2}c_\eta \eta_t=(c_\eta \log n) /n$; (iv) arises from \eqref{eq:prob-small-density-temp}.

Meanwhile, we can apply the Cauchy-Schwartz inequality to bound the second term as
\begin{align}
	\bE\Big[\|s_t(Z_t)\|_2^2\,\ind\big\{Z_t\notin\cB\big\} \Big] \leq \sqrt{\bE\big[\|s_t(Z_t)\|_2^4\big]}\sqrt{\bP\{Z_t\notin \cB\}} \lesssim \frac{d}{t} \frac{\sqrt{d}}{n} = \frac{d^{3/2}}{nt},
	\label{eq:prob-small-score-term2}
\end{align}
where we use \eqref{eq:score-lp-norm} in Lemma~\ref{lemma:score-lp-norm} and \eqref{eq:X-t-notin-B}.

Combining \eqref{eq:prob-small-score-term1} and \eqref{eq:prob-small-score-term2} gives
\begin{align*}
	\bE\Big[\|s_t(Z_t)\|_2^2\,\ind\big\{p_t(Z_t) < c_\eta \eta_t \big\} \Big]
	& \lesssim \Big(\frac{16}{\pi}\Big)^{d/2} \frac1{nt} \bigg(1+\frac{\sigma^d}{t^{d/2}}\bigg) (\log n)^{d/2+2} + \frac{d^{3/2}}{nt} \\ 
	& \asymp \Big(\frac{16}{\pi}\Big)^{d/2} \frac1{nt} \bigg(1+\frac{\sigma^d}{t^{d/2}}\bigg) (\log n)^{d/2+2}.
\end{align*}
This finishes the proof of \eqref{eq:prob-small-score}.

\paragraph*{Proof of Claim \eqref{eq:prob-small-jacobian}.}

We first decompose
\begin{align*}
	&\bE\Big[\big\|J_t(Z_t)\big\|\,\ind\big\{p_t(Z_t) < c_\eta \eta_t \big\} \Big] \\
	&\qquad = \bE\Big[\big\|J_t(Z_t)\big\|\,\ind\big\{p_t(Z_t) < c_\eta \eta_t \big\}\ind\big\{Z_t\in\cB\big\} \Big] + \bE\Big[\big\|J_t(Z_t)\big\|\,\ind\big\{p_t(Z_t) < c_\eta \eta_t \big\}\ind\big\{Z_t\notin\cB\big\} \Big] \\
	&\qquad \leq \bE\Big[\big\|J_t(Z_t)\big\|\,\ind\big\{p_t(Z_t) < c_\eta \eta_t \big\}\ind\big\{Z_t\in\cB\big\} \Big] + \bE\Big[\big\|J_t(Z_t)\big\|\,\ind\big\{Z_t\notin\cB\big\} \Big].
\end{align*}

By \eqref{eq:jcb-UB-log}, we can derive
\begin{align}
	\big\|J_t(Z_t)\big\|\leq \frac1t+\frac4t \log \frac{1}{(2\pi t)^{d/2}p_t(x)} \leq \frac5t \log \frac{1}{(2\pi t)^{d/2}p_t(x)},
\end{align}
where the last step is true true because $(2\pi t)^{d/2}p_t(x)\leq (c_\eta \log n)/ n \leq 1/e$ when $p_t(x) < c_\eta \eta_t$ and $n$ is sufficiently large.
Hence, applying the same argument for \eqref{eq:prob-small-score-term1}, one can derive
\begin{align}
	\bE\Big[\big\|J_t(Z_t)\big\|\,\ind\big\{p_t(Z_t) < c_\eta \eta_t \big\}\ind\big\{Z_t\in\cB\big\} \Big] &  =\int_{\cB} \ind\big\{p_t(x) < c_\eta \eta_t\big\}\big\|J_t(x)\big\|\, p_t(x) \diff x \notag\\
	&  \lesssim  \int_{\cB} \ind\big\{p_t(x) < c_\eta \eta_t\big\} p_t(x) \frac1t \log \frac{1}{(2\pi t)^{d/2}p_t(x)} \diff x \notag\\
	&  \lesssim \Big(\frac{16}{\pi}\Big)^{d/2} \frac1{nt} \bigg(1+\frac{\sigma^d}{t^{d/2}}\bigg) (\log n)^{d/2+1}
	\label{eq:prob-small-jcb-term1},
\end{align}
where the last step follows from \eqref{eq:prob-small-score-term1}.

In addition, invoking the Cauchy-Schwartz inequality leads to
\begin{align}
	\bE\Big[\big\|J_t(Z_t)\big\|\,\ind\big\{Z_t\notin\cB\big\} \Big] \leq \sqrt{\bE\big[\|J_t(Z_t)\|^2\big]}\sqrt{\bP\{Z_t\notin \cB\}} \lesssim \frac{d}{t} \frac{\sqrt{d}}{n} = \frac{d^{3/2}}{nt}
	\label{eq:prob-small-jcb-term2}
\end{align}
where the second inequality results from \eqref{eq:Jacobian-lp-norm} in Lemma~\ref{lemma:score-lp-norm} and \eqref{eq:X-t-notin-B}.

Putting \eqref{eq:prob-small-jcb-term1} and \eqref{eq:prob-small-jcb-term2} finishes the proof of Claim \eqref{eq:prob-small-jacobian}.

\subsection{Proof of Lemma~\ref{lemma:MSE-Jacobian-high-density}}\label{sub:proof_of_lemma_ref_lemma_mse-Jacobian}

Fix an arbitrary $x\in\cF_t$. As shown in the proof of Theorem~\ref{thm:score-error} in Appendix~\ref{sub:proof_of_lemma_ref_lemma_score-error}, on the event $\cE_t(x)$, one has $\wh{p}_t(x)\geq p_t/2> \eta_t$ (see \eqref{eq:p-hat-LB}). Hence, as shown in \eqref{eq:J-1}, the score estimator equals $\wh{s}_t(x)=\wh{g}_t(x)/\wh{p}_t(x)$ and the Jacobian of $\wh{s}_t(x)$ is equal to
\begin{align*}
	J_{\wh{s}_t}(x) = -\frac1t I_d+ \frac{\wh{H}_t(x)}{\wh{p}_t(x)} - \wh{s}_t(x)\wh{s}_t(x)^\top = -\frac1t I_d+ \frac{\wh{H}_t(x)}{\wh{p}_t(x)} - \frac{\wh{g}_t(x)\wh{g}_t(x)^\top}{\wh{p}_t^2(x)}.
\end{align*}
Combining this with the expression of $J_t(x)$ in \eqref{eq:Jacobian-expression}, one can express
\begin{align}
	J_{\wh{s}_t}(x) - J_{t}(x) &= -\frac1t I_d+ \frac{\wh{H}_t(x)}{\wh{p}_t(x)} - \wh{s}_t(x)\wh{s}_t(x)^\top -J_t(x) \notag\\
	& =\frac{\wh{H}_t(x)}{\wh{p}_t(x)} - \frac{H_t(x)}{p_t(x)}+ \wh{s}_t(x)\wh{s}_t(x)^\top - s_t(x)s_t(x)^\top \notag\\
	& = \underbrace{\frac{\wh{H}_t(x)-H_t(x)}{\wh{p}_t(x)}}_{\defnrev \theta_1} 
	+ \underbrace{\frac{p_t(x)-\wh{p}_t(x)}{\wh{p}_t(x)}\frac{H_t(x)}{p_t(x)}}_{\defnrev \theta_2} 
	+ \underbrace{\big(\wh{s}_t(x)-s_t(x)\big) \big(\wh{s}_t(x)+s_t(x)\big)}_{\defnrev \theta_3} 
	\label{eq:error-Jacobian} 
\end{align}
In what follows, we control the spectral norms of $\theta_1$, $\theta_2$, and $\theta_3$ individually.

\begin{itemize}
	\item 
We start with the term $\theta_1$.
\begin{align}
	\bE\Big[\|\theta_1\|\ind\big\{\cE_t(x)\big\}\Big] &\numpf{i}{\leq} \frac{2}{p_t(x)} \bE\Big[\big\|\wh{H}_t(x)-H_t(x)\big\|\Big] \notag \\
	& \numpf{ii}{\lesssim} \frac1{n(2\pi t)^{d/2}}\frac1{tp_t(x)} + \frac1{\sqrt{n(2\pi t)^{d/2}}}\frac1{t\sqrt{p_{t}(x)}} \notag \\
	& \numpf{iii}{\asymp} \frac1{\sqrt{n(2\pi t)^{d/2}}}\frac1{t\sqrt{p_{t}(x)}} \label{eq:MSE-Jcb-term1}
\end{align}
where (i) is true since $\wh{p}_t(x)\geq p_t(x)/2$; (ii) applies \eqref{eq:Phi-error-exp-UB} in Lemma~\ref{lemma:MSE-density}; (iii) holds as $n(2\pi t)^{d/2}p_t(x) < 1$ when $x\in\cF_t$.

\item Next, let us consider the term $\theta_2$, which can be bounded by
\begin{align}
	\bE\Big[\|\theta_2\|\ind\big\{\cE_t(x)\big\}\Big] &\numpf{i}{\leq} \frac{2\,\|H_t(x)\|}{p^2_t(x)} \sqrt{\bE\big[ |\wh{p}_t(x)-p_t(x)|^2 \big]} \numpf{ii}{\lesssim} \frac{\|H_t(x)\|}{p_t^2(x)} \sqrt{\frac{p_{t}(x)}{n(2\pi t)^{d/2}}} \notag\\
	&= \frac{1}{\sqrt{n(2\pi t)^{d/2}}} \frac{\|H_t(x)\|}{p_t^{3/2}(x)} , \label{eq:MSE-Jcb-term2}
\end{align}
where (i) holds due to $\wh{p}_t(x)\geq p_t(x)/2$ and Jensen's inequality;  (ii) arises from \eqref{eq:MSE-pdf} in Lemma~\ref{lemma:MSE-density}.


\item It remains to control $\theta_3$.
Applying the triangle inequality shows that
\begin{align*}
	\|\theta_3\| 
	& \leq \big\|\wh{s}_t(x)-s_t(x)\big\|_2 \Big(\big\|\wh{s}_t(x)-s_t(x)\big\|_2+2\|s_t(x)\|_2 \Big) \notag \\
	& \lesssim \big\|\wh{s}_t(x)-s_t(x)\big\|_2^2 + \big\|\wh{s}_t(x)-s_t(x)\big\|_2\big\|s_t(x)\big\|_2.
\end{align*}
Thus, one can bound
\begin{align}
	\bE \Big[\| \theta_3\| \ind\big\{\cE_t(x)\big\} \Big] &  \lesssim \bE\Big[\big\|\wh{s}_t(x)-s_t(x)\big\|_2^2\ind\big\{\cE_t(x)\big\}\Big] + \bE\Big[\big\|\wh{s}_t(x)-s_t(x)\big\|_2\ind\big\{\cE_t(x)\big\}\Big]\|s_t(x)\|_2 \notag \\
	&  \numpf{i}{\lesssim} \bE\Big[\big\|\wh{s}_t(x)-s_t(x)\big\|_2^2\ind\big\{\cE_t(x)\big\}\Big] + \sqrt{\bE\Big[\big\|\wh{s}_t(x)-s_t(x)\big\|_2^2\ind\big\{\cE_t(x)\big\}\Big]}\|s_t(x)\|_2 \notag\\
	& \numpf{ii}{\lesssim} \frac{1}{n(2\pi t)^{d/2}} \frac{1}{tp_t(x)} + \frac{1}{n(2\pi t)^{d/2}}  \frac{\|s_t(x)\|_2^2}{p_t(x)} \notag\\
	& \quad + \frac{1}{\sqrt{n(2\pi t)^{d/2}}} \frac{\|s_t(x)\|_2}{\sqrt{tp_t(x)}} + \frac{1}{\sqrt{n(2\pi t)^{d/2}}}  \frac{\|s_t(x)\|_2^2}{\sqrt{p_t(x)}} \notag\\
	& \numpf{iii}{\asymp} \frac{1}{n(2\pi t)^{d/2}} \frac{1}{tp_t(x)}+\frac{1}{\sqrt{n(2\pi t)^{d/2}}} \frac{\|s_t(x)\|_2}{\sqrt{tp_t(x)}} + \frac{1}{\sqrt{n(2\pi t)^{d/2}}}  \frac{\|s_t(x)\|_2^2}{\sqrt{p_t(x)}}
	\label{eq:MSE-Jcb-term3}.
\end{align}
Here, (i) uses the Cauchy-Schwartz inequality; (ii) invokes \eqref{eq:MSE-score} in Lemma~\ref{lemma:MSE-score-high-density}; (iii) is true because $n(2\pi t)^{d/2}p_t(x) < 1$ when $x\in\cF_t$.

\item
Taking \eqref{eq:MSE-Jcb-term1}, \eqref{eq:MSE-Jcb-term2}, and \eqref{eq:MSE-Jcb-term3} collectively, we conclude that
\begin{align*}
&\bE\Big[\|J_{\wh{s}_t}(x) - J_{t}(x)\|\ind\big\{\cE_t(x)\big\} \Big]
\leq \bE\Big[ \big(\| \theta_1\| + \|\theta_2\|+\|\theta_3\| \big)\ind\big\{\cE_t(x)\big\} \Big] \notag\\
& \qquad\lesssim \frac1{\sqrt{n(2\pi t)^{d/2}}}\frac1{t\sqrt{p_{t}(x)}} + \frac{1}{\sqrt{n(2\pi t)^{d/2}}} \frac{\|H_t(x)\|}{p_t^{3/2}(x)} \\ 
& \qquad \quad + \frac{1}{n(2\pi t)^{d/2}} \frac{1}{tp_t(x)} +\frac{1}{\sqrt{n(2\pi t)^{d/2}}} \frac{\|s_t(x)\|_2}{\sqrt{tp_t(x)}} + \frac{1}{\sqrt{n(2\pi t)^{d/2}}}  \frac{\|s_t(x)\|_2^2}{\sqrt{p_t(x)}}\\
	& \qquad \asymp \frac1{\sqrt{n(2\pi t)^{d/2}}} \bigg(\frac1{t\sqrt{p_{t}(x)}} + \frac{\|H_t(x)\|}{p_t^{3/2}(x)} +  \frac{\|s_t(x)\|_2^2}{\sqrt{p_t(x)}}\bigg)
\end{align*}
where the last step uses the fact that $n(2\pi t)^{d/2}p_t(x) < 1$ when $x\in\cF_t$ and the AM-GM inequality that
\begin{align*}
	\frac{1}{t\sqrt{p_t(x)}} + \frac{\|s_t(x)\|_2^2}{\sqrt{p_t(x)}}
	\geq \sqrt{2}\frac{\|s_t(x)\|_2}{\sqrt{tp_t(x)}}.
\end{align*}

This finishes the proof of Lemma~\ref{lemma:MSE-Jacobian-high-density}.

\end{itemize}

\section{Auxiliary lemmas}\label{sec:auxiliary_lemmas}
\begin{lemma}\label{lemma:con-ineq-exp}
	Suppose a random variable $X$ satisfies
	\begin{align*}
		\bP\{|X| > t\} \leq 2\exp\left( -\frac38\left\{\frac{t^2}{V}\wedge \frac{t}{B}\right\} \right).
	\end{align*}
 Then the expectation $\bE\big[|X|\big]$ can be bounded by
	\begin{align*}
		\bE\big[|X|\big] \lesssim \sqrt{V} + B.
	\end{align*}
\end{lemma}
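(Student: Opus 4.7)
The plan is to apply the layer cake representation $\bE[|X|]=\int_0^\infty \bP\{|X|>t\}\,\mathrm{d}t$ and then split the integral at the crossover point where the two competing exponents $t^2/V$ and $t/B$ coincide, namely $t^\star \defn V/B$. On the interval $[0,t^\star]$, the minimum $t^2/V \wedge t/B$ equals $t^2/V$ (since $t\leq V/B$ is equivalent to $t/B\leq t^2/V \cdot (B/t)\cdot(t/V)^{-1}$, i.e. $t^2/V\leq t/B$), so the tail bound degrades to the sub-Gaussian form $2\exp(-3t^2/(8V))$. On $[t^\star,\infty)$, the minimum equals $t/B$, giving the sub-exponential form $2\exp(-3t/(8B))$.

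On the first piece, I would bound
\[
\int_0^{t^\star} 2\exp\!\Big(-\frac{3t^2}{8V}\Big)\,\mathrm{d}t \leq \int_0^\infty 2\exp\!\Big(-\frac{3t^2}{8V}\Big)\,\mathrm{d}t = 2\sqrt{\frac{2\pi V}{3}} \lesssim \sqrt{V},
\]
using the standard Gaussian integral. On the second piece,
\[
\int_{t^\star}^\infty 2\exp\!\Big(-\frac{3t}{8B}\Big)\,\mathrm{d}t \leq \int_0^\infty 2\exp\!\Big(-\frac{3t}{8B}\Big)\,\mathrm{d}t = \frac{16B}{3} \lesssim B.
\]
Summing the two contributions yields $\bE[|X|]\lesssim \sqrt{V}+B$, as claimed.

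There is no real obstacle here; the only thing to be mildly careful about is identifying the correct crossover $t^\star=V/B$ so that one knows on which side $t^2/V\wedge t/B$ equals which expression. Since the bound on $\bE[|X|]$ is stated up to absolute constants, one does not even need the sharp constants from the Gaussian or exponential integrals—any finite constant will do—so the argument is essentially two lines once the layer cake decomposition is written down.
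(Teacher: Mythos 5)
Your proof is correct and follows essentially the same route as the paper: the layer cake formula, splitting the integral at the crossover $t^\star = V/B$, and bounding the resulting Gaussian and exponential tails by extending each to the whole half-line. The constants you compute are also right, so nothing needs to change.
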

\begin{proof}
Denote $c = 3/8$.
We can use the expectation formula for a nonnegative random variable to write
\begin{align*}
\mathbb{E}[|X|] &= \int_{0}^{\infty} \mathbb{P}\{|X| > t\} \diff t \lesssim \int_{0}^{\infty} \exp\biggl(-c\biggl\{\frac{t^2}{V} \wedge \frac{t}{B}\biggr\}\biggr) \diff t \\ 
& = \int_{0}^{V/B} \exp\bigl(-c{t^2}/{V}\bigr) \diff t + \int_{V/B}^{\infty} \exp\bigl(-c{t}/{B}\bigr) \diff t.
\end{align*}

For the first integral, we can derive
\begin{align*}
\int_{0}^{V/B} \exp\bigl(-c{t^2}/{V}\bigr) \diff t &= \sqrt{\frac Vc}\int_{0}^{\sqrt{cV}/B} \exp\bigl(-u^2\bigr) \diff u \leq \sqrt{\frac Vc}\int_{0}^{\infty} \exp\bigl(-u^2\bigr) \diff u= \sqrt{\frac Vc} \frac{\sqrt{\pi}}2 \lesssim \sqrt{V}.
\end{align*}

For the second integral, straightforward calculation yields
\begin{align*}
\int_{V/B}^{\infty} \exp(-c{t}/{B}) \diff t &= \frac Bc \int_{cV/B^2}^{\infty} \exp(-u) \diff u \leq  \frac Bc \int_{0}^{\infty} \exp(-u) \diff u = \frac Bc \lesssim B.
\end{align*}

Combining the two bounds gives the desired bound.
\end{proof}

\begin{lemma}\label{lemma:score-lp-norm}
Given a random vector $Z_0$ in $\bR^d$, consider $Z_t=Z_0+\sqrt{t}\,W$ where $W\sim\cN(0,I_d)$ is a standard Gaussian random vector in $\bR^d$ that is independent of $Z_0$. Let $s_t(x)\defn\gd\log p_{Z_t}(x)$ denote the score function of $Z_t$ and let $J_{t}(x)$ denote the Jacobian of $s_t(x)$.
Then for any integer $p\geq 1$, one has
	\begin{align}
		\bE\Big[\big\|s_t(Z_t)\big\|_2^p\Big] &\leq C_p \bigg(\frac{d}{t}\bigg)^{p/2}, \label{eq:score-lp-norm}\\
		\bE\Big[\big\|J_{t}(Z_t)\big\|^p\Big] & \leq C_p' \bigg(\frac{d}{t}\bigg)^{p} \label{eq:Jacobian-lp-norm},
	\end{align}
	where $C_p,C_p'>0$ are some constants that only depend on $p$. 
\end{lemma}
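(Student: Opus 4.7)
The plan is to exploit Tweedie's formula to rewrite both $s_t$ and $J_t$ as conditional moments of the Gaussian noise variable $W$, after which the desired $L^p$ bounds reduce to standard chi-squared moment estimates. Concretely, writing $Z_t = Z_0+\sqrt{t}\,W$ with $W\sim\cN(0,I_d)$ independent of $Z_0$, the score representation \eqref{eq:score-expression} (specialized to the variance-exploding process of this lemma) gives
\begin{align*}
s_t(Z_t) \;=\; \frac1t\,\bE[Z_0-Z_t\mid Z_t] \;=\; -\frac{1}{\sqrt{t}}\,\bE[W\mid Z_t].
\end{align*}

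First I would handle \eqref{eq:score-lp-norm}. Applying conditional Jensen's inequality to $u\mapsto \|u\|_2^p$ yields $\|s_t(Z_t)\|_2^p \leq t^{-p/2}\,\bE[\|W\|_2^p\mid Z_t]$; taking total expectation and invoking the standard chi-squared moment bound $\bE[\|W\|_2^p]\leq C_p d^{p/2}$ (valid for all integer $p\geq 1$, with $C_p$ depending only on $p$) gives the claim.

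Next, for \eqref{eq:Jacobian-lp-norm}, I would substitute $Z_0-Z_t=-\sqrt{t}\,W$ into the Jacobian formula \eqref{eq:Jacobian} to obtain
\begin{align*}
J_t(Z_t) \;=\; -\frac1t I_d + \frac{1}{t}\,\bE[WW^\top\mid Z_t] - \frac{1}{t}\,\bE[W\mid Z_t]\bE[W\mid Z_t]^\top \;=\; -\frac1t I_d + \frac{1}{t}\,\cov(W\mid Z_t).
\end{align*}
Hence $\|J_t(Z_t)\|\leq t^{-1}\bigl(1+\|\cov(W\mid Z_t)\|\bigr)$. Since $\cov(W\mid Z_t)\succeq 0$, its spectral norm is bounded by its trace, which equals $\bE[\|W-\bE[W\mid Z_t]\|_2^2\mid Z_t]\leq \bE[\|W\|_2^2\mid Z_t]$. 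Taking $p$-th powers, using $(a+b)^p\leq 2^{p-1}(a^p+b^p)$, pushing the conditional expectation through Jensen's inequality, and finally applying $\bE[\|W\|_2^{2p}]\leq C_p' d^p$ delivers the bound.

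I do not anticipate any genuine obstacle: the argument is essentially a clean application of Tweedie's formula plus conditional Jensen. The only mildly delicate point is the passage from $\|\cov(W\mid Z_t)\|$ to $\tr\cov(W\mid Z_t)$, which costs nothing here because we are content with the dimension-dependent constant $d$ and the exponent in $d$ matches what the lemma asserts.
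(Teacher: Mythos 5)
Your proposal is correct and follows essentially the same route as the paper: both express $s_t$ and $J_t$ through conditional moments of the Gaussian noise (Tweedie's formula and the Jacobian expression \eqref{eq:Jacobian}), apply conditional Jensen's inequality, and conclude with standard Gaussian moment bounds $\bE[\|W\|_2^{2p}]\lesssim d^p$. Your only deviation---writing $J_t(Z_t)=-t^{-1}I_d+t^{-1}\cov(W\mid Z_t)$ and bounding the covariance's spectral norm by its trace---is a minor, slightly cleaner repackaging of the paper's three-term triangle-inequality bound, and it yields the same $(d/t)^{p}$ rate.
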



\begin{proof}
Let us start with Claim \eqref{eq:score-lp-norm}. Recall the expression of $s_t(x)$ in \eqref{eq:score-expression}, we have
	\begin{align*}
	\bE\Big[\big\|s_t(Z_t)\big\|_2^p\Big] &\numpf{i}{=} \frac1{t^p} \bE\Big[ \big\| \bE[Z_t - Z_0 \mid Z_t] \big\|_2^p \Big] \numpf{ii}{\leq}\frac1{t^p}\bE\Big[ \bE\big[\| Z_t - Z_0 \|_2^p \mid Z_t] \Big]\notag\\
	& \numpf{iii}{=} \frac1{t^p} \bE\big[ \| Z_t - Z_0 \|_2^p \big]  \numpf{iv}{\leq} C_p \bigg(\frac{d}{t}\bigg)^{p/2},
	\end{align*}
	for some constant $C_p>0$ that only depends on $p$.
Here, (i) uses the expression of $s_t$ in \eqref{eq:score-expression}; (ii) is due to the convexity of $x\mapsto\|x\|_2^p$ and Jensen's inequality; (iii) holds due to the tower property; (iv) is due to $Z_t-Z_0\sim\cN(0,tI_d)$ and the standard Gaussian property (see \citet[Lemma 8]{li2024provable}). This proves Claim~\eqref{eq:score-lp-norm}.

Next, we move on to consider Claim \eqref{eq:Jacobian-lp-norm}. Recalling the expression of $J_t(x)$ in \eqref{eq:Jacobian}, we have
\begin{align*}
	\bE\Big[\big\|J_t(Z_t)\big\|^p\Big] &\leq \bE\Bigg[\bigg(\frac1t + \frac{1}{t^2} \big\|\bE\big[(Z_t-Z_0)(Z_t-Z_0)^\top \mid Z_t]\big\| + \|s_t(x)\|_2^2 \bigg)^p\Bigg] \\
	& \leq 3^{p-1} \bigg(\frac1{t^p} + \frac{1}{t^{2p}} \bE\Big[ \big\|\bE\big[(Z_t-Z_0)(Z_t-Z_0)^\top \mid Z_t]\big\|^p \Big] + \bE\Big[\|s_t(x)\|_2^{2p}\Big] \bigg)
\end{align*}
Similarly to the above derivation, we can bound
\begin{align*}
	\bE\Big[ \big\|\bE\big[(Z_t-Z_0)(Z_t-Z_0)^\top \mid Z_t]\big\|^p \Big] \leq \bE\Big[ \big\|Z_t-Z_0\big\|^{2p}_2\Big] \leq C_{2p} (td)^p 
\end{align*}
where (i) holds due to the convexity of $x\mapsto\|x\|_2^p$, Jensen's inequality, and the tower property; (ii) uses the fact that $Z_t-Z_0\sim\cN(0,tI_d)$ and the standard Gaussian property.
Combined with \eqref{eq:score-lp-norm}, we conclude that
\begin{align*}
	\bE\Big[\big\|J_{s_t}(Z_t)\big\|^p\Big] 
	\leq 3^{p-1} \Bigg(\frac1{t^p} + C_{2p} \frac{(td)^p}{t^{2p}} + C_{2p} \bigg(\frac dt\bigg)^p \Bigg) \leq C_p' (\frac dt\bigg)^p.
\end{align*}
for some constant $C_p'$ that only depends on $p$.
This completes the proof of Claim \eqref{eq:Jacobian-lp-norm}.
\end{proof}

\begin{lemma}\label{lemma:score-error-X-Z}
	Let $X, Z$ be random vectors in $\bR^d$ and $a\in\bR$ be a constant. Suppose $X\disteq aZ$. Then their score functions and Jacobian matrices satisfy
	\begin{align*}
		s_{X}(x) &= \frac{1}as_Z(x/a),\\ 
		J_{s_{X}}(x) &= \frac{1}{a^2}J_{s_Z}(x/a).
	\end{align*}
\end{lemma}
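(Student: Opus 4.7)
The plan is to derive both identities by direct differentiation, starting from the change-of-variables formula for densities. Since $X \disteq aZ$, the standard pushforward formula gives
\begin{align*}
p_X(x) = \frac{1}{|a|^d}\, p_Z(x/a),\qquad x\in\bR^d,
\end{align*}
so taking logarithms yields $\log p_X(x) = -d\log|a| + \log p_Z(x/a)$. This reduces both claims to routine applications of the chain rule.

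For the first identity, I will differentiate $\log p_X$ with respect to $x$. The constant term vanishes, and the chain rule gives $\nabla_x \log p_Z(x/a) = \tfrac{1}{a}(\nabla \log p_Z)(x/a) = \tfrac{1}{a}\, s_Z(x/a)$, establishing $s_X(x) = \tfrac{1}{a}\, s_Z(x/a)$. For the second identity, I will differentiate this expression once more with respect to $x$: each component of $s_X$ picks up another factor of $1/a$ from the chain rule applied to the argument $x/a$, so $J_{s_X}(x) = \tfrac{1}{a}\cdot\tfrac{1}{a}\, J_{s_Z}(x/a) = \tfrac{1}{a^2}\, J_{s_Z}(x/a)$.

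Since every step is a routine computation, there is no serious obstacle — the only point to be mindful of is correctly tracking the two factors of $1/a$ (one from the scaling of the density, one from the chain rule in the argument) so that the overall constants $1/a$ and $1/a^2$ appear as stated. The proof amounts to a few lines and requires no additional tools beyond the pushforward density formula and the chain rule.
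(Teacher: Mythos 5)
Your proposal is correct and follows essentially the same argument as the paper: compute $p_X(x)=p_Z(x/a)/|a|^d$ via the change of variables, take logarithms so the constant $|a|^{-d}$ drops out, and apply the chain rule once to get $s_X(x)=\tfrac{1}{a}s_Z(x/a)$ and once more to get $J_{s_X}(x)=\tfrac{1}{a^2}J_{s_Z}(x/a)$. No gap to report.
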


\begin{proof}

 Straightforward computation shows that $p_{X}(x) = p_{Z}(x/a)/|a|^d$. This gives
\begin{align*}
	s_{X}(x) = \gd\log p_{X}(x) = \gd\log \frac{p_{Z}(x/a)}{|a|^d} = \gd\log p_{Z}(x/a) = \frac{1}as_Z(x/a),
\end{align*}
which further leads to
\begin{align*}
	J_{s_{X}}(x) = \frac{\partial}{\partial x} s_{X}(x) = \frac{\partial}{\partial x} \frac{s_Z(x/a)}a = \frac1{a^2} J_{s_{Z}}(x/a).
\end{align*}


	
\end{proof}

\end{document}